\def\polyall{\poly(N,K,\frac{1}{1-\gamma},G_{max},\ln(1/\delta),W)}
\def\kn{\1 \{s\in\K^n\}}
\def\unkn{\1 \{s\not\in\K^n\}}
\def\Alg{\textsc{Copoe}}
\def\Algfull{Cautiously Optimistic Policy Optimization and Exploration}
\def\pcpg{\textsc{Pc-Pg}}
\newcommand{\what}[1]{\ensuremath{\widehat{#1}}}
\title[Cautiously Optimistic Policy Optimization and Exploration]{Cautiously Optimistic Policy Optimization  and Exploration \\ with Linear Function Approximation}
\begin{document}

\maketitle

\begin{abstract}
	Policy optimization methods are popular reinforcement learning algorithms, because their incremental and on-policy nature makes them more stable than the value-based counterparts.
	However, the same properties also make them slow to converge and sample inefficient, as the on-policy requirement precludes data reuse and the incremental updates couple large iteration complexity into the sample complexity. These characteristics have been observed in experiments as well as in theory in the recent work of~\citet{agarwal2020pc}, which provides a policy optimization method \pcpg{} that
	can robustly find near optimal polices for approximately linear Markov decision processes but suffers from an extremely poor sample complexity compared with value-based techniques.

In this paper, we propose a new algorithm, \Alg, that overcomes the sample complexity issue of \pcpg{} while retaining its robustness to model misspecification.
Compared with \pcpg,
\Alg{} makes several important algorithmic enhancements, such as enabling data reuse, and uses more refined analysis techniques, which we expect to be more broadly applicable to designing new reinforcement learning algorithms.
The result is an improvement in sample complexity from $\widetilde{O}(1/\epsilon^{11})$ for \pcpg{} to $\widetilde{O}(1/\epsilon^3)$ for \Alg, nearly bridging the gap with value-based techniques.
\end{abstract}

\begin{keywords}%
 Exploration, Optimization, Reinforcement Learning, Natural Policy Gradient, Mirror Descent, Importance Sampling, Sample Complexity
\end{keywords}

\section{Introduction}
\label{sec:intro}
In building real-world learning systems, it is desirable to have algorithms that possess strong sample complexity guarantees under favorable model assumptions, while being robust to model misspecification. This need of robust adaptivity is particularly crucial to reinforcement learning (RL) applications, where we do not have the luxury of tuning our modeling choices through repeated experimentation with a static dataset.

Nonetheless, the intertwined complexity of credit assignment and exploration inherent to RL makes designing such an algorithm challenging.
Most provably efficient RL algorithms with function approximations~\citep{yang2020reinforcement,jin2020provably,zanette2020provably,ayoub2020model,zhou2020nearly,jiang17contextual} require certain structural assumptions on the environment's regularity in order to provide sample complexity guarantees.
These conditions are in some sense necessary, especially for high dimensional problems; otherwise, the learner in the worst case would require exponentially many samples before discovering any useful information (see e.g.~\citep{kakade2003sample,krishnamurthy2016pac,weisz2020exponential}).
However, these provably efficient RL algorithms are typically not robust to model misspecification, because their performance guarantees allow for only small $\ell_\infty$-bounded perturbations from their assumptions.
Can we design RL algorithms that offer strong performance guarantees when the model assumption holds and degrade more gracefully with model misspecification, such as according to \emph{average} errors?

In this paper, we study this question in the context of policy optimization methods in the exploration setting of an approximately linearly parametrizable Markov decision process (MDP) model, which includes approximately linear or low-rank MDPs studied for example by \citet{yang2020reinforcement,jin2020provably,zanette2020frequentist}.
Policy optimization methods are some of the most classical~\citep{williams1992simple,sutton1999policy,konda2000actor,kakade2001natural} as well as widely used approaches for RL~\citep{schulman2015trust,schulman2017proximal}. Their practical success is largely due to the flexibility to work with differentiable policy parameterization and the capability of directly optimizing the objective of interest.
The latter aspect, in particular, has been theoretically shown to make these techniques robust to model misspecification to a much greater degree than the value- or model-based counterparts~\citep{agarwal2020optimality}. However, for the exploration setting which we study here, relatively few results exist for provably efficient policy optimization methods~\citep{agarwal2020pc,shani2020optimistic,cai2020provably}; we include additional related work in \cref{sec:Literature}.

The closest work motivated by similar reasons to ours is the recent paper of~\citet{agarwal2020pc}, which proposes an algorithm called \pcpg{} that optimizes policies by performing natural gradient ascent \citep{kakade2001natural} to solve a sequence of optimistic MDPs. The guarantees of \pcpg{} exhibits the sort of robustness to misspecification that we desire: the algorithm succeeds whenever the state-action features can linearly approximate the state-action value functions of the learner's policies, with an approximation error defined in an average sense under the visitation distribution of a fixed comparator policy of interest (a notion called the \emph{transfer error} in \citep{agarwal2020optimality}).
As shown in \citet{agarwal2020pc}, this type of error dependency allows for nicer guarantees in misspecification settings such as approximate state aggregations and in individual examples where value- or model-based techniques fail.

However, the robustness of \pcpg{} comes at a steep price: to learn an $\epsilon$-suboptimal policy \pcpg{} requires $\tilde{\Omega}(1/\epsilon^{11})$ number of samples!
This sample inefficiency leads us to ask whether such a trade-off is necessary for a nicer notion of model misspecification.
%Can the sample complexity be improved by a more careful analysis, or are there fundamental aspects of the algorithm which need to be improved?

In this work, we present a new algorithm \Alg{}  (\emph{\Algfull{}}), which builds on \pcpg{} but improves its sample complexity in three crucial ways:
\begin{itemize}

\item \textbf{Pessimistic evaluation with optimistic bonus:} Like most exploration methods, we use the idea of reward bonuses to realize optimism in the face of uncertainty. However, in the optimistic MDP with bonus, we perform \emph{pessimistic} value function estimation for our policies (hence \Alg{} is cautiously optimistic). This trick leads to one-sided errors in our value estimates, which in turn yields important savings in sample complexity.
\item \textbf{Adaptive schedule for MDP update:}
We devise an adaptive scheme to construct the optimistic MDPs in order to avoid repeatedly solving similar optimistic MDPs. While \pcpg{} collects a fixed number of samples with the solution it finds for each optimistic MDP, we use a variable number of samples based on a data-dependent quantity and a doubling schedule. This effectively replaces $O(N)$ rounds of data collection in \pcpg{} with ${O}(d\log N)$ rounds in \Alg{} when we perform $N$ iterations with a $d$-dimensional feature map. This is the \emph{primary source of our sample complexity improvements}, and is enabled by a new concentration inequality for inverse covariance matrices.

\item \textbf{Data reuse via importance sampling:} We show that due to the relative stability of natural gradient ascent in policy optimization, data collected by one policy can be reused to perform many policy updates with basic importance weighted Monte Carlo return estimates, without incurring excessive estimation variance as commonly conjectured. This observation allows us to avoid collecting fresh samples for every policy update as in \pcpg, while keeping its robustness property originating from using Monte Carlo estimation.
\end{itemize}

These algorithmic innovations, along with improvements in the analysis, yield the following informal result for linear MDPs.
Please see \cref{thm:main:MainResult} for the general results.
\begin{theorem}[Informal result for linear MDPs] For a linear MDP~\citep{jin2020provably} with a $d$-dimensional feature map, \Alg{} finds an $\epsilon$-optimal policy with probability at least $1-\delta$ using at most $\widetilde {O}\Big(\frac{d^3\log(1/\delta)}{(1-\gamma)^{13}\epsilon^3}\Big)$ samples  from the MDP.
 \end{theorem}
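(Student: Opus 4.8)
The plan is to obtain this bound as a specialization of the general guarantee for \Alg{} to the linear MDP case, where the state--action value function of every policy lies exactly in the span of the $d$-dimensional feature map, so that the transfer (misspecification) error vanishes and the analysis collapses to its statistical--optimization core. The backbone of the argument is the treatment of natural policy gradient as mirror descent: invoking the performance difference lemma against a fixed comparator $\pi^\star$ together with the mirror-descent regret bound for the NPG updates, I would decompose the suboptimality of the best (or average) iterate into (i) an optimization-regret term that decays like $1/\sqrt{N}$ in the number of iterations $N$, (ii) a statistical estimation error arising because we replace true advantages by importance-weighted Monte Carlo returns computed on the optimistic-MDP-with-bonus, and (iii) an exploration overhead controlled by the reward bonus. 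The origin of the $\epsilon^{-2}$ factor is the $N=\widetilde{O}(1/\epsilon^{2})$ iterations needed to drive term (i) below $\epsilon$, while the remaining $\epsilon^{-1}$ stems from the accuracy the advantage estimates in (ii) must reach.

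The three algorithmic innovations enter through terms (ii) and (iii). First, I would verify that the bonus, scaled by the elliptical potential in the feature-covariance metric, makes the optimistic value dominate $V^{\pi^\star}$, while the \emph{pessimistic} evaluation of the current policy produces one-sided estimation errors; this is what keeps the per-iteration cost at a single extra power of $1/\epsilon$, since the one-sided error can be absorbed into the bonus rather than paid for symmetrically. Second, and most importantly, I would bound the number of rounds in which fresh trajectories are gathered: combining the elliptical-potential (doubling) argument with the new concentration inequality for inverse covariance matrices, the feature covariance changes materially only $O(d\log N)$ times, so the entire run uses $O(d\log N)$ data-collection rounds in place of the $O(N)$ rounds of \pcpg. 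Third, within each round I would control the importance-sampling variance by exploiting the relative stability of the NPG iterates---consecutive policies stay close in the relevant divergence, so the importance weights remain bounded---certifying that one batch of trajectories can drive many updates without variance blow-up.

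Assembling the pieces, the total sample count is the product of the $O(d\log N)$ collection rounds, the per-round trajectories needed to estimate both the covariance and the advantages to the required precision (contributing the remaining factors of $d$ and of $1/\epsilon$), and the effective horizon $1/(1-\gamma)$ per trajectory; collecting the feature-dimension contributions from the round count, the covariance estimation, and the bonus produces the $d^3$ factor, accumulating the horizon powers from the value range, the importance weights, and the bonus produces the $(1-\gamma)^{-13}$ factor, and a union bound over all rounds at confidence $\delta$ supplies the $\log(1/\delta)$ dependence. I expect the main obstacle to be establishing the inverse-covariance concentration inequality and using it to certify the doubling schedule: one must show that, across \emph{adaptively} chosen update times, the empirical inverse covariance remains a faithful surrogate for its population counterpart so that both the bonus stays valid and the round count stays $O(d\log N)$, and this has to interlock with the importance-sampling control, because the admissible amount of data reuse per round is governed by the same evolving covariance.
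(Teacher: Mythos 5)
Your overall route is the same as the paper's: specialize the general guarantee to linear MDPs where the transfer error vanishes, decompose the suboptimality via the performance difference lemma into an NPG regret term, a one-sided statistical error, and bonus terms, bound the bonus sum with an elliptic potential argument, count bonus updates by determinant doubling, and control the importance-sampling ratios to permit data reuse. At that level of description there is no divergence from the paper.

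The genuine gap is in the sample-complexity bookkeeping, which is the actual substance of this statement. You claim the entire run uses $O(d\log N)$ data-collection rounds. In the paper, $O(d\log N)$ counts the \emph{solver invocations} (i.e.\ bonus updates), not the data-collection rounds: each solver invocation runs $K = \widetilde O\big(\tfrac{1}{(1-\gamma)^6\epsilon^2}\big)$ NPG iterations, and the importance-sampling stability you invoke only certifies reuse of a single batch across $\kappa = \widetilde O\big(\sqrt{K}(1-\gamma)\big)$ consecutive policy updates, so each solver call needs $K/\kappa = \widetilde O\big(\tfrac{\sqrt K}{1-\gamma}\big) = \widetilde O\big(\tfrac{1}{(1-\gamma)^4\epsilon}\big)$ fresh Monte Carlo batches. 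The correct product is $\widetilde O(d)$ solver calls, times $\widetilde O\big(\tfrac{\sqrt K}{1-\gamma}\big)$ batches per call, times $N = \widetilde O\big(\tfrac{d^2}{(1-\gamma)^8\epsilon^2}\big)$ trajectories per batch, times $\widetilde O\big(\tfrac{1}{1-\gamma}\big)$ samples per trajectory, which yields $\widetilde O\big(\tfrac{d^3}{(1-\gamma)^{13}\epsilon^3}\big)$. With your accounting --- $O(d\log N)$ rounds, times per-round trajectories, times horizon --- you either obtain $\widetilde O\big(\tfrac{d^3}{(1-\gamma)^{9}\epsilon^2}\big)$, which is better than the theorem and unsupported (the importance-sampling window emphatically does \emph{not} allow one batch to serve all $K$ iterations of a solver call), or you must posit an unexplained per-round trajectory count of order $d^2/\epsilon^3$.

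A related conflation underlies this: you treat a single iteration count $N$ with regret decaying as $1/\sqrt N$, whereas the paper keeps two separate counts playing different roles. The inner count $K$ governs the NPG regret ($1/\sqrt K$ decay) and enters the sample complexity only through the number of batches $\sqrt K/(1-\gamma)$, i.e.\ as one power of $1/\epsilon$; the outer count $N$ governs the average bonus term ($\approx d/\sqrt N$ via the elliptic potential and martingale concentration) and simultaneously equals the dataset size per batch, contributing the remaining two powers of $1/\epsilon$. Your attribution --- $\epsilon^{-2}$ from optimization iterations and $\epsilon^{-1}$ from advantage-estimation accuracy --- has these roles reversed, and the final assembly cannot be made to produce $d^3/\epsilon^3$ without separating $K$ from $N$ and inserting the $K/\kappa$ batch count that your narrative omits.
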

\Alg{} further retains  the same dependence on transfer error as \pcpg{} when the linear MDP assumption is violated, thereby yielding an improved sample in complexity \emph{without any sacrifice of the robustness to model misspecification}. In addition to the aforementioned algorithmic improvements, our analysis leverages a new covariance matrix concentration result (\cref{lem:Covariance}), which might be of independent interest.

While our algorithm is motivated by approximately linear MDPs, our new result begets the question of whether our algorithm and analysis can be further improved to match the $\approx \frac{d^2}{\epsilon^2}$ sample complexity of the best value-based methods for linear MDPs \citep{zanette2020learning} (notice that our $\widetilde O$ notation hides a dependence on $\log|\ActionSpace|$). We believe that this requires an even stronger data reuse as the variance of importance sampling limits how far back we can go in terms of reusing data from past policies. Estimators based on Bellman backups, such as Fitted Q-iteration and Least Square Policy Evaluation~\citep{bertsekas1995dynamic,sutton2018reinforcement}, can perform a more effective data reuse, but it is unclear if they exhibit a similar robustness to model misspecification. Further investigating these questions is a promising future direction.

\section{Preliminaries}
We consider a discounted infinite-horizon MDP~\citep{puterman1994markov} $M = (\mathcal{S}, \mathcal{A},p, r, \gamma)$ defined by a possibly infinite state space $\mathcal{S}$, a finite  action space $\mathcal{A}$, a discount factor $\gamma \in [0,1)$, and for every state-action pair $(s,a)$, a reward function $r(s,a)$ and a transition kernel $p(\cdot \mid s,a)$ over the next state.
A stationary, stochastic policy $\pi$ maps a state $s\in \StateSpace$ to a probability function $\pi(\cdot \mid s)$ over the actions in $\mathcal{A}$. A policy $\pi$ then induces a distribution over states and actions $d^\pi_{s_0}(s,a) = (1-\gamma) \sum_{t=0}^\infty \gamma^t\textstyle{\Pro^\pi}(s_t = s, a_t = a|s_0),$
 which is the normalized discounted sum of probabilities that the state action $(s_t,a_t)$ at time step $t$ equals $(s,a)$ under the probability function $\Pro^\pi$ associated to the Markov chain induced by $\pi$, with the start state being $s_0$.
Sometimes we also condition on an initial state and an initial action, and we omit any conditioning when it is clear from the context; when the conditioning on the first state and action $(s,a)$ is made explicit, we write $\E_{(s',a') \sim \pi \mid (s,a)}$.  A policy $\pi$ also defines a state-action value function $Q^{\pi}$ and a state value function $V^\pi$, which are
$$Q^\pi(s,a) \defeq \sum_{t=0}^\infty \gamma^t\E_{(s',a') \sim \pi \mid (s,a)}r(s',a') \quad\mbox{and}\quad V^{\pi}(s) \defeq \E_{a \sim \pi(\cdot \mid s)}Q^{\pi}(s,a).$$
For a function $Q~:~\mathcal{S}\times\mathcal{A}\to \mathbb{R}$, we also overload the notation $Q(s,\pi)$ to denote $\E_{a\sim \pi} Q(s,a)$ (e.g. we can write $V^\pi(s)= Q^\pi(s,\pi)$). The corresponding advantage function for $\pi$ is defined as $A^{\pi}(s,a) \defeq Q^\pi(s,a) - V^\pi(s)$. Under some regularity assumptions there exists an optimal stationary policy $\pistar$ whose state and state-action value functions are $\Vstar(s) =  \sup_{\pi} V^{\pi}(s)$ and $\Qstar(s,a) = \sup_{\pi} Q^{\pi}(s,a)$. We also write $Q^\pi(s,a;r)$ or $V^\pi(s;r)$ when emphasizing that the reward function $r$ defines these values.

In this paper, we study linear function approximation under an approximate version of the linear MDP model below; the exact approximation notion is given in \fullref{def:TransferError}.
\begin{definition}[Linear MDP~\citep{jin2020provably}] An MDP $\M = (\mathcal{S}, \mathcal{A},p, r, \gamma)$ is \emph{linear} if there exists a known mapping $\phi~:~\mathcal{S}\times\mathcal{A}\to \mathbb{R}^d$ and a positive measure $\mu~:~\mathcal{S}\to \mathbb{R}^d$ such that for all $s, a, s'$, we have $p(s'|s,a) = \phi(s,a)^\top\mu(s')$.
\label{def:linearMDP}
\end{definition}
Linear MDPs have the attractive property that for any function $f:\mathcal{S}\to \mathbb{R}$, there is  $w_f \in \mathbb{R}^d$ such that $\E_{s'\sim p(\cdot|s,a)}f(s') = w_f^\top\phi(s,a)$. We make the normalization assumption that $\|\phi(s,a)\|_2 \leq 1$ and for any function $f:\mathcal{S}\to \mathbb{R}$ such that $\|f\|_\infty \leq \tfrac{1}{(1-\gamma)^2}$, we have $\|w_f\|_2 \leq W = \widetilde O(\frac{1}{(1-\gamma)^2})$.
The $O(\cdot)$ notation hides constant values and the $\widetilde O(\cdot)$ notation hides constants and $\polylog(d,\frac{1}{1-\gamma},\frac{1}{\epsilon},\frac{1}{\delta})$, where $\delta$ is the failure probability and $\epsilon$ is the suboptimality.
For a symmetric positive definite matrix $\Sigma$ and a vector $x$, we define $\| x \|_{\Sigma} = \sqrt{x^\top \Sigma x}$.

\begin{algorithm}[!b]
	\caption{\Alg: Cautiously Optimistic Policy Optimization and Exploration}
	\label{alg:driver}
	\begin{algorithmic}[1]
	\STATE \textbf{Parameters}: $N, \lambda, \beta$
	\STATE Initialize $\widehat \Sigma^1 = \lambda I, \pi^0(\cdot \mid \cdot) = \text{Unif}(\mathcal A), \underline n =1$ %, $\pi_{cov} = \pi^0$
	\FOR{$n = 1,2,\dots,N$}
	\STATE  Update policy cover $\pi_{cov}^n = \pi^{0:n-1}$
	\IF{$ \det(\widehat \Sigma^n) > 2\det(\widehat \Sigma^{\underline n}) $ or $n = 1$} \label[line]{line:lazyupdate}
	\STATE Update known set $\K^n$ in~\eqref{eqn:known-set} and bonus $b^n$ in~\eqref{eqn:bonus}; Set $\underline n \gets n$
	\STATE $\pi^{n} \leftarrow \textsc{Solver}(\pi_{cov}^n,b^{n},\K^n)$\label{algline:call_solver}
	\ELSE
	\STATE $\pi^n \leftarrow  \pi^{\underline n}$, $\K^n \leftarrow \K^{\underline{n}}$, $b^n \leftarrow b^{\underline{n}}$
	\ENDIF
	\STATE Sample $\phi^n \leftarrow \textsc{FeatureSampler}(\pi^{\underline n})$ and update $\widehat \Sigma^{n+1} = \widehat \Sigma^{n} + (\phi^n)(\phi^n)^\top$ \label[line]{line:Sigmahat}
	\ENDFOR
	\end{algorithmic}
	\end{algorithm}

\section{Algorithm}
We present the algorithm \emph{Cautiously Optimistic Policy Optimization and Exploration} (\Alg{} ), which is summarized in \cref{alg:driver}. \Alg{}  builds on the \pcpg{} algorithm of~\citet{agarwal2020pc}, with important improvements in design to obtain a better sample complexity. Like \pcpg, \Alg{} is a two-loop algorithm, where the outer loop sets up a sequence of optimistic policy optimization problems which are then solved in the inner loop.

In the $n$th outer loop, we define the \emph{policy cover}, $\pi_{cov}^n$, as the mixture of all the policies discovered so far, and update its empirical cumulative covariance matrix $\widehat{\Sigma}^n$.
We use $\widehat{\Sigma}^n$ to estimate the state-actions that the current policy cover $\pi_{cov}^n$ can confidently explore.

If the current policy cover $\pi_{cov}^n$ can explore a sufficiently larger space than the old policy cover can (which is measured as the change of the covariance matrices in line~\ref{line:lazyupdate}), we proceed to update the learner's policy $\pi^n$. % by solving an \emph{optimistic MDP}.
To this end, we first define the \emph{known state-actions}, $\K^n$, based on $\widehat{\Sigma}^n$, which can be thought of as the subset of state-actions that can be reached with enough probability under $\pi_{cov}^n$. Using $\K^n$, we create the optimistic MDP for the inner policy optimization by augmenting the original MDP $\M$ with a reward bonus $b^n$ based on $\K^n$, so that solving the optimistic MDP would encourage the learner to explore state-actions outside $\K^n$ as well as to refine its estimates inside $\K^n$.

The policy optimization routine (in line \ref{algline:call_solver} of \cref{alg:driver}) takes these objects and returns an optimistic policy $\pi^n$. This policy $\pi^n$ updates the policy cover to $\pi_{cov}^{n+1}$, which will define the next optimistic MDP when a sufficient covariance change is made again. Over the course of learning, the optimistic MDPs gradually converge to the original MDP $\M$.

\begin{table*}[!t]
\begin{tabular}{cc}
\begin{minipage}{0.49\textwidth}
\begin{algorithm}[H]
\caption{\textsc{Solver}$(\pi_{cov},b,\K)$}
\label{alg:solver}
\begin{algorithmic}[1]

\STATE \textbf{Parameters}: $K,\eta,\kappa$
\STATE $\pi_0(\cdot \mid s ) = \text{Unif}(\mathcal A)$ if $s \in \K$ and\\  $ \pi_0(\cdot \mid s ) = \text{Unif}(\{ a \mid (s,a) \not \in \K\})$ if $s \not \in \K$ \label[line]{line:policydef}
\FOR{$k=0,1,\dots,K-1$}
\IF{$k-\underline k > \kappa$ or $k=0$}
\STATE $\underline k  \gets k$
\STATE $\mathcal D \leftarrow $ \textsc{MonteCarlo}($\pi_{cov},\pi_{k},b)$ \label[line]{line:sampler}
\ENDIF
\STATE $\Qhat_k  \gets \textsc{Critic}(\mathcal D,\pi_{\underline k},\pi_k,b)$ \label[line]{line:regression}
\STATE Update policy: $\forall s \in \K$, \\$\pi_{k+1}(\cdot \mid s) \propto \pi_{k}( \cdot \mid s)e^{\eta \Qhat_{k}( \cdot \mid s)}$ \label[line]{line:update}
\ENDFOR
\STATE \textbf{Return:} $\pi_{0:K-1} = \{\pi_0,\dots,\pi_{K-1} \}$
\normalsize
\end{algorithmic}
\end{algorithm}
\end{minipage}
&
\begin{minipage}{0.48\textwidth}
\begin{algorithm}[H]
\begin{algorithmic}[1]
\caption{\textsc{Critic}$(\mathcal D, \underline \pi,\pi,b)$}
\label{alg:ise}
\STATE \textbf{Parameters}: $W$
\FOR{$i = 1,\dots,|\mathcal D|$}
\STATE $(x_i,\mathcal P_i,G_i, b_i) \leftarrow \mathcal D[i]$ %(extract starting features, random path and return)
\STATE $\rho_i \leftarrow \Pi_{\tau=2}^{|\mathcal P_i|} \frac{\pi(a_\tau \mid s_{\tau})}{\underline \pi(a_\tau \mid s_{\tau})}$
\label{line:rho} %(compute importance sampling ratio)
\ENDFOR
\STATE \mbox{$ \widehat w = \displaystyle{\min_{\| w \|_2 \leq W}} \sum_{i=1}^{|\mathcal D|} \( x_i^\top w - \rho_iG_i - b_i  \)^2$}\label{line:fitting}
\STATE \textbf{Return:} $\Qhat(s,a) =\phi(s,a)^\top\widehat w + \frac{1}{2}b(s,a)$, $\forall s \in \K^n$ and $\Qhat(s,a) = b(s,a)$ otherwise \label{line:critic}
\end{algorithmic}
\end{algorithm}
\end{minipage}
\end{tabular}
\end{table*}

\subsection{\Alg{} : Outer loop}
\label{sec:main:alg}
Here we describe the details of three major components used in the outer loop of \Alg{} (the policy cover, the known state-actions, and the reward bonus) and our adaptive rule for updating optimistic MDPs in line~\ref{line:lazyupdate} of \cref{alg:driver}.

\paragraph{Policy Cover} At iteration $n$, we define the policy cover as $\pi_{cov}^n = \pi^{0:n-1}$, which is the uniform mixture of prior policies. %found in the prior inner loops.
When sampling from $\pi_{cov}^n$, we first sample $j$ uniformly from $\{0,\dots,n-1\}$ and then run $\pi^j$ to generate a trajectory. Note that, in the policy cover, the policies $\pi^j$ and $\pi^{j+1}$ differ only if invoke \textsc{Solver} in line~\ref{algline:call_solver} is invoked at the $(j+1)$th outer iteration, so the cover contains many copies of each policy. As we will discuss at the end of this section, \emph{there are only $O(d \log n)$ unique policies in the policy cover $\pi_{cov}^n$}.

\paragraph{Known state-actions}
The state-action space $\mathcal{S}\times\mathcal{A}$ is partitioned into two sets, namely the set $\K^n$ described in \eqref{eqn:known-set} of known state-actions and its complement.
When the empirical cumulative covariance matrix $\widehat{\Sigma}^n$ is significantly different from the old one (line~\ref{line:lazyupdate}), we update the known state-action set,
\begin{equation}
	\K^n = \{(s,a)  \mid \sqrt{\beta}\|\phi(s,a)\|_{(\widehat \Sigma^n)^{-1}} < 1 \}.
	\label{eqn:known-set}
\end{equation}
Intuitively, the set $\K^n$ represents the state-action pairs easily reached under $\pi_{cov}^n$, because state-action pairs with a small quadratic form lie in a direction that has a reasonable visitation under the policy cover $\pi_{cov}^n$, as noted in many prior works in linear bandits~\citep{Dani2008Stochastic,Abbasi11} and RL~\citep{jin2020provably,agarwal2020pc}. If the features for all actions at a state lie in the $\K^n$, we say the state is known; without possibility of confusion, we denote with $\K^n =\{s \mid \sqrt{\beta}\|\phi(s,a)\|_{(\widehat \Sigma^n)^{-1}} < 1, \forall a \}$ the set of known states. Unlike \pcpg, our algorithmic choices allow using a much smaller threshold $\beta$ to define a substantially larger known set, as we will see in the next section.

\paragraph{Reward bonus}
At a high level, \Alg{}  performs exploration \emph{both} in the known and unknown regions. On unknown states $\mathcal{S}\setminus \K^n$ the algorithm roughly tries to emulate \textsc{R-max} \citep{brafman2002r}, which is reasonable when the uncertainty is very high; within the known space $\K^n$, the algorithm has sufficient information to explore in a much more sophisticated and efficient way, which is enabled by the bonus described below:
\begin{align} \label{eqn:bonus}
	b^n(s,a) &= 2b^n_\phi(s,a) + b^n_\1(s,a), \quad \text{where} \\
\nonumber b_{\phi}^n(s,a) &= \sqrt{\beta}\|\phi(s,a)\|_{(\widehat \Sigma^n)^{-1}} \kn,\quad  \text{and} \quad b^n_{\1}(s,a) = \frac{3}{1-\gamma}\1\{(s,a) \not \in \K^n\}
\end{align}
In other words, the bonus is assigned differently on the known and unknown spaces.  On unknown state-actions, the assigned bonus equals $b^n_{\1}(s,a) = \tfrac{3}{1-\gamma}$, which is the largest value of the original reward over a trajectory. Consequently, visiting any such state-action pair is strictly preferable to staying within the known subset of the MDP and the known set $\K^n$ is expanded.
In the known region, the uncertainty is quantified by the bonus $b^n_\phi(s,a) = \sqrt{\beta} \|\phi(s,a)\|_{(\widehat\Sigma^n)^{-1}}$ which is the only one active inside the known space.
This form of the bonus $b^n_\phi$ is standard from the linear bandit literature (e.g., \citep{Dani2008Stochastic,Abbasi11} and linear MDPs \citep{jin2020provably}.

Our definition of bonus differs from that in the related \pcpg{} algorithm.
Unlike \Alg, \pcpg{} only explores using the bonus $b_\1$, ignoring the amount of information (or uncertainty) encoded in the quadratic form $\| \phi(s,a)\|_{(\widehat\Sigma^{n})^{-1}}$.
As a result, \pcpg{} stops exploring a state-action immediately after it becomes known (i.e. in $\K^n$).
We found that such a behavior is undesirable, because doing so would couple the threshold used in defining the known set $\K^n$ with the policy performance suboptimality $\epsilon$, ultimately resulting in a more sample inefficient exploration.

\paragraph{Adaptive updates of optimistic MDPs}
It remains to explain why infrequent or \emph{lazy updates} of the optimistic MDP (line~\ref{line:lazyupdate} of \cref{alg:driver}) are beneficial. Recall that in iteration $n$ we seek to find
 \begin{equation}
    \pi^n \approx \max_\pi \E_{s\sim \pi_{cov}^n} V^\pi(s; r+b^n),
     \label{eqn:outer_objective}
 \end{equation}
and add it to the policy cover. However, finding this policy entails a significant sample complexity because the \textsc{Solver} (\cref{alg:solver}) --- which relies on Monte Carlo estimations to evaluate its policies --- must be invoked.

This suggests to call the \textsc{Solver} only when the returned policy $\pi^n$ is expected to be significantly better than the prior one $\pi^{\underline n}$ or to make a significant contribution to the policy cover.
Because the optimistic MDP is defined by the bonus $b^n$, which is a function of $\widehat{\Sigma}^n$ (see \eqref{eqn:known-set} and~\eqref{eqn:bonus}), the optimistic MDP only changes significantly when the updated $\widehat{\Sigma}^n$ is very different as measured by its determinant. Therefore, each time the determinant doubles (line~\ref{line:lazyupdate}), we update the known set and the bonus according to \eqref{eqn:known-set} and \eqref{eqn:bonus}, respectively, based on the latest $\widehat{\Sigma}^n$. Then we invoke the \textsc{Solver} to find a new policy to update the policy cover. As a result, the number of solver invocations is reduced from $O(N)$ to $O(d\log N)$, providing substantial sample complexity gains.

\subsection{\Alg: Inner loop}
\label{sec:main:solver}
We now turn our attention to the \textsc{Solver}, \cref{alg:solver}. At a high-level, we initialize the policy to be a uniform distribution that prefers at a state $s$ to take an unknown action $a$ such that $(s,a) \notin \K^n$, and employ the online learning algorithm (the exponentiated weight update~\citep{freund1997decision}) on the \emph{known} states to update the policy.
This update rule is equivalent to the Natural Policy Gradient (NPG) algorithm for log-linear policies~\citep{kakade2001natural,agarwal2020optimality}.

The update rule is an actor-critic scheme, where we fit the critic by regressing on the observed Monte Carlo returns and update the actor using exponentiated weights. As argued in \citep{agarwal2020optimality}, using Monte Carlo as critic is an essential technique to provide better robustness to model misspecification compared with a least squares policy evaluation (LSPE)~\citep{bertsekas1996temporal} method, but is also a significant source of  sample complexity.

\paragraph{Fitting the critic with nearly on-policy data}
To improve the sample complexity of \cref{alg:solver}, we devise a way to \emph{reuse past data while keeping the robustness property of Monte Carlo}.

Our estimator reuses data by applying trajectory-level importance sampling on past Monte Carlo return estimates~\citep{precup2000eligibility}. While trajectory-level importance sampling has been typically associated with exponentially high variance, we found that its variance is constant when we properly control how much into the past the data are reused, because the policies produced by the online learning here do not change significantly between successive updates but induce similar trajectories.

At iteration $k$ in \cref{alg:solver}, we have access to a dataset of trajectories previously drawn in \cref{alg:Sampler} by first sampling $s,a\sim \pi_{cov}$, and then following the policy $\pi_{\underline k}$ for some prior iteration $\underline k \geq k - \kappa$ (see \cref{alg:Sampler} in the appendix for details). We use this dataset to obtain a Monte Carlo return estimate for the current policy $\pi_k$ by reweighting the samples with importance sampling (see \cref{alg:ise} for details). Subsequently, we learn a critic by training a linear function to map $\phi(s,a)$ --- the feature vectors for the initial state and action sampled from $\pi_{cov}$ --- to the reweighted random return %from the trajectory,
via least squares linear regression (line~\ref{line:regression} in \cref{alg:solver} and \cref{alg:ise}).

Following prior works~\citep{jin2020provably,agarwal2020pc}, we offset the sampled return by the bonus value at the initial state-action in the trajectory in line~\ref{line:fitting} of \cref{alg:ise}. This offset ensures that the regression target is perfectly realizable using a linear function in $\phi(s,a)$ when the MDP is exactly linear, despite the non-linear bonus function.

\paragraph{Cautious optimism and one-sided errors} Since the critic fitting in line~\ref{line:fitting} of \cref{alg:ise} is offset by the initial bonus to preserve linearly of the representation, it would be natural to define the critic estimates as \mbox{$\what{Q}(s,a) = \what{w}^\top\phi(s,a) + b^n(s,a)$}, which would exactly correct for the offset (this is the approach taken in \pcpg).
However, in line~\ref{line:critic} of \cref{alg:ise}, we only partially correct for the offset and instead define the critic estimate as \mbox{$\what{Q}(s,a) = \what{w}^\top\phi(s,a) + \textcolor{red}{\frac{1}{2}}b^n(s,a)$}. This introduces a \emph{negative bias} in the estimate. However, since our critic is being fit to the bonus augmented returns, we are able to show in our analysis that $\what{Q}_k(s,a)$ (in line \ref{line:regression} of \cref{alg:solver}) is still optimistic relative to $Q^{\pi_k}(s,a;r)$, while being an underestimate of $Q^{\pi_k}(s,a;r+b^n)$. This one-sided error property plays a crucial role of improving a factor of $O(\frac{1}{\epsilon})$ in sample complexity.

\paragraph{Actor updates} With the critic computed above, line~\ref{line:update} in \cref{alg:solver} updates the policy on the known states using the exponentiated weight updates, with the critic function as the negative loss. We change the data collection policy every $\kappa$ iterations to collect a fresh dataset for critic fitting.

\section{Main Result}
In this section we provide the main guarantees for \Alg{}. We make the following \emph{transfer error} assumption, originally introduced in \citep{agarwal2020optimality} for policy gradient algorithms.

\begin{definition}[Transfer Error]
\label[definition]{def:TransferError}
Define the loss functional
\begin{align}
\label{eqn:L}
\mathcal L(w,d,f) \defeq \frac{1}{2} \E_{(s,a) \sim d}\Big[\phi(s,a)^\top w - f \Big]^2.
\end{align}
For a given outer iteration $n$ (in \cref{alg:driver}) and an inner iteration $k$ (in \cref{alg:solver}) let%\footnote{We drop the subscript $k$ when we refer to quantities that only depend on the s }
\begin{align}
	Q^n_k(s,a) = Q^{\pi_k}(s,a;r+b^n), \quad Q^n(s,a) = Q^{\pi^n}(s,a;r+b^n)
\end{align}
be the optimistic action-value functions. Define the `best' regression parameters
\begin{align}
w^{n,\star}_k \in \argmin_{\|w\|_2\leq W} \mathcal L(w,\rho^n,Q^n_k - b^n), \qquad w^{n,\star} \in \argmin_{\|w\|_2\leq W} \mathcal L(w,\rho^n,Q^n - b^n).
\end{align}
Then the transfer error with respect to a fixed comparator $\pitilde$ is defined as\footnote{Shifting the $Q$ values below by the bonus $b^n$ in regression and adding the bonus afterwards is a standard practice in exploration methods (see, e.g., \citep{jin2020provably}).}
\begin{align}
\mathcal E_k^n \defeq \mathcal L(w^{n,\star}_k,d^{\pitilde} \circ \text{Unif}(|\ActionSpace|) ,Q^n_k - b^n).
\end{align}
For compactness we denote the average approximation error across $N$ and $K$ (the inner and outer iterations of the algorithm) as $\sqrt{\mathcal E} \defeq \frac{1}{NK}\sum_{n=1}^N \sum_{k=0}^{K-1}\sqrt{\mathcal E^n_k}$.
\end{definition}

The transfer error measures the average prediction error of the agent's estimator in the limit of infinite data on unseen samples. For the transfer error to be small, the estimator does not need to be pointwise accurate but only accurate in \emph{expectation} along a \emph{fixed distribution}, namely the state-action distribution induced by the comparator $\pitilde$ (typically the optimal policy $\pistar$). These are substantially weaker requirements than the typical $\ell_\infty$ error assumption arising from the use of temporal difference methods. In particular, on the low-rank or linear MDP model \citep{yang2020reinforcement,jin2020provably,zanette2020frequentist} the transfer error is zero. In this case, we say that the linear model is not misspecified; for more details please see \cref{sec:Notation}.

\begin{theorem}[Sample Complexity Analysis of \Alg{} ]
\label{thm:main:MainResult}
Fix a failure probability $\delta$; for appropriate input parameters, $$\(N,K,\eta,\lambda,\kappa,W\) = \widetilde O\( \frac{d^2}{(1-\gamma)^8\epsilon^2},\frac{\ln|\mathcal A|W^2}{(1-\gamma)^2\epsilon^2},\frac{\sqrt{\ln |\ActionSpace|}}{\sqrt{K}W},d,\frac{1-\gamma}{\eta W},\frac{1}{(1-\gamma)^2}\)	,
$$ \Alg{}  returns with probability at least $1-\delta$ a policy $\pi^{\Alg{} }$ such that
 \begin{align*}
 	\( \Vstar - V^{\pi^{\Alg{} }} \)(s_0) \leq \epsilon + \frac{2\sqrt{2|\ActionSpace|\mathcal E}}{1-\gamma}, \quad \quad \quad
 \end{align*}
 using at most $\widetilde {O}(\frac{d^3}{(1-\gamma)^{13}\epsilon^3})$ samples.% trajectories of length at most $\widetilde O(\frac{1}{1-\gamma})$.
 \end{theorem}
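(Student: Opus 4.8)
The plan is to bound $(\Vstar - V^{\pi^{\Alg}})(s_0)$, where $\pi^{\Alg}$ is the uniform mixture over all inner iterates $\pi_k$ generated across the (few) solver calls, and to arrange the parameters so that every error term except the irreducible transfer contribution $\frac{2\sqrt{2|\mathcal A|\mathcal E}}{1-\gamma}$ collapses to $\epsilon$. First I would apply the performance difference lemma inside each optimistic MDP against the comparator $\pitilde$,
\begin{align*}
V^{\pitilde}(s_0;r+b^n)-V^{\pi_k}(s_0;r+b^n)=\tfrac{1}{1-\gamma}\,\E_{s\sim d^{\pitilde}}\big\langle Q^n_k(s,\cdot),\,\pitilde(\cdot\mid s)-\pi_k(\cdot\mid s)\big\rangle,
\end{align*}
and then split the state space into the known set $\K^n$ and its complement. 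On $\K^n$ the actor runs exponentiated-weight (NPG) updates with $-\Qhat_k$ as loss, so the textbook mirror-descent regret bound controls $\frac1K\sum_k$ of the inner product (with $\Qhat_k$ substituted for $Q^n_k$) by $\frac{\log|\mathcal A|}{\eta K}+\tfrac{\eta}{2}G_{max}^2$ with $G_{max}=\widetilde O(\tfrac1{1-\gamma})$; on $\mathcal S\setminus\K^n$ the R-max--style bonus $b^n_\1$ makes leaving the known set preferable, so unknown-state visitation is charged to the exploration budget rather than to regret. This reduces the argument to (i) the critic error $\Qhat_k-Q^n_k$ evaluated under $d^{\pitilde}$ and (ii) establishing optimism so that $\max_\pi\E_{s\sim\pi_{cov}^n}V^\pi(s;r+b^n)\ge \Vstar(s_0;r)$.

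\textbf{Critic error and covariance concentration.} For step (i) I would decompose $\Qhat_k-Q^n_k$ on $\K^n$ into the best achievable linear fit (whose $d^{\pitilde}$-error is exactly $\sqrt{\mathcal E^n_k}$, producing the $\sqrt{2|\mathcal A|\mathcal E}/(1-\gamma)$ term after changing measure from $d^{\pitilde}$ to $d^{\pitilde}\circ\text{Unif}(|\mathcal A|)$), plus a purely statistical deviation of the regularized least-squares solution $\widehat w$ from $w^{n,\star}_k$. The statistical term is where the importance-sampled Monte-Carlo returns enter: I must show the trajectory weights $\rho_i$ have variance that is \emph{constant} rather than exponential in the horizon. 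This follows because the NPG iterates move slowly --- the step size $\eta$ and the refresh period $\kappa$ are chosen so that $\pi_k$ and the data-collection policy $\pi_{\underline k}$ differ by only $O(1)$ in the relevant chi-square/KL sense over at most $\kappa$ updates --- so $\E[\rho_i^2]=O(1)$ and the reweighted returns concentrate at the usual $1/\sqrt{|\mathcal D|}$ rate. Crucially, the cautiously optimistic estimate $\Qhat(s,a)=\phi(s,a)^\top\widehat w+\tfrac12 b^n(s,a)$ with the \emph{halved} bonus yields a \emph{one-sided} error: $\Qhat_k$ stays optimistic for $Q^{\pi_k}(\cdot;r)$ yet underestimates $Q^{\pi_k}(\cdot;r+b^n)$, so I never need to inflate the confidence width to cover a two-sided event, saving the advertised $O(1/\epsilon)$ factor. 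To translate the empirical bonus $b^n_\phi=\sqrt\beta\|\phi\|_{(\widehat\Sigma^n)^{-1}}$ into a genuine upper bound on the critic's uncertainty I would invoke the new covariance concentration result \cref{lem:Covariance}, which certifies that $(\widehat\Sigma^n)^{-1}$ faithfully reflects the population feature covariance under $\pi_{cov}^n$.

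\textbf{Optimism, exploration cost, and sample counting.} For step (ii), optimism follows by an inductive R-max argument: on unknown states the bonus equals the maximal return $\tfrac{3}{1-\gamma}$, and on known states the one-sided critic error plus $b^n_\phi$ dominates the regression residual, so the optimistic value of any solved MDP upper-bounds $\Vstar$. The price of optimism is the accumulated bonus $\E_{d^{\pitilde}}[\sum_t\gamma^t b^n]$, which I would control by the elliptical-potential/determinant-trace lemma: the $b^n_\1$ mass is paid only on the few unknown-state visits, while $\sum b^n_\phi$ telescopes against $\log\det\widehat\Sigma^{N+1}=O(d\log N)$. The same determinant-doubling condition on line~\ref{line:lazyupdate} bounds the number of \textsc{Solver} invocations by $O(d\log N)$, which is the decisive combinatorial input: the total sample count is the $N$ outer feature samples plus $O(d\log N)$ solver calls, each collecting $K/\kappa$ fresh datasets of horizon-$\tfrac1{1-\gamma}$ trajectories. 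Substituting the stated choices of $(N,K,\eta,\lambda,\kappa,W)$ balances the mirror-descent regret $\tfrac{\log|\mathcal A|}{\eta K}$, the bias $\eta G_{max}^2$, the statistical critic error, and the exploration cost all against $\epsilon$, and multiplying the per-call cost by $O(d\log N)$ yields $\widetilde O(d^3/((1-\gamma)^{13}\epsilon^3))$.

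\textbf{Main obstacle.} I expect the hardest part to be the combined variance control of the importance-sampled critic in step (i): rigorously bounding $\E[\rho_i^2]$ by a horizon-independent constant requires quantifying the drift of the NPG iterates over a window of $\kappa$ updates and feeding this into a concentration bound for the regularized regression, while simultaneously maintaining the delicate one-sidedness of the half-bonus estimate and the covariance concentration of \cref{lem:Covariance} that underlies the bonus --- these three pieces must hold together uniformly over all $n,k$ so that the approximation errors aggregate cleanly into $\sqrt{\mathcal E}$ without any spurious horizon or $1/\epsilon$ blow-up.
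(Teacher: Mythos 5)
Your overall skeleton tracks the paper's proof closely: performance difference lemma against the comparator in each optimistic MDP, known/unknown split with the R-max-style handling of $\StateSpace\setminus\K^n$, NPG/exponentiated-weight regret for the solver error, one-sided critic errors from the halved bonus, bounded importance-sampling ratios over a window of $\kappa$ slow NPG updates, covariance concentration (\cref{lem:Covariance}), determinant doubling to get $O(d\log N)$ solver calls, and the same final arithmetic. The parameter choices and the product $\widetilde O(d)\times \widetilde O(\sqrt{K}/(1-\gamma))\times \widetilde O(N)\times \widetilde O(1/(1-\gamma))$ also match.

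However, there is a genuine gap in your bonus accounting, and it sits exactly at the heart of the paper's argument. You write that ``the price of optimism is the accumulated bonus $\E_{d^{\pitilde}}[\sum_t\gamma^t b^n]$'' and that you would control it with the elliptical-potential/log-determinant lemma. This cannot work: $\widehat\Sigma^n$ is built only from features sampled by the \emph{agent's} policies $\pi^1,\dots,\pi^{n-1}$, so the potential argument (\cref{lem:PotentialArgument}, \cref{lem:SumOfBonuses}) controls only $\sum_n \E_{(s,a)\sim\pi^n}\|\phi(s,a)\|_{(\widehat\Sigma^n)^{-1}}$. The comparator $\pitilde$ may spend all of its mass on directions and states the agent never visits, so neither $\E_{\pitilde}\,b^n_\phi$ nor the $b^n_\1$ mass along $d^{\pitilde}$ telescopes; in particular your claim that ``the $b^n_\1$ mass is paid only on the few unknown-state visits'' is false for the comparator, whose visits to unknown states are completely uncontrolled. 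The paper's resolution is different in two essential ways. First, the comparator's unknown-state visits are never charged at all: the comparator is modified to a policy $\pitilde^n$ that takes the auxiliary self-looping action $a^\dagger$ outside $\K^n$, where its advantage is nonpositive (\cref{lem:Aneg}), so those states drop out of the decomposition. Second, and crucially, the bonus along $\pistar$ enters with a \emph{negative} sign: because the optimistic MDP's reward contains the full $2b^n_\phi$ while the predictor only adds back $\tfrac12 b^n$, the partial-optimism lemma (\cref{lem:PartialOptimism}) yields an excess term $-\tfrac{2}{1-\gamma}\E_{\pistar}b^n_\phi\kn$, and this is precisely what absorbs the one-sided statistical error $0\le Q^{n,\star}-\Qhat^n\le 2b^n_\phi$ evaluated along $\pistar$ (\cref{lem:main:sided}). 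The only bonus that survives as a cost is $\E_{\pi^n}b^n$, along the agent's own mixture, which is what the elliptic potential plus the Bernstein martingale argument bound by $\widetilde O(d\sqrt{N})$ after summing over $n$. Without this cancellation-and-sign structure, your decomposition leaves you with $\E_{\pistar}$-weighted bonus and statistical-error terms that no quantity the algorithm measures can control, so the proof as proposed does not close.
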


We now discuss some aspects of our result and compare it to the most relevant prior works.
\paragraph{Better robustness compared to LSVI-UCB} Compared to \citep{jin2020provably} on well-specified linear MDPs, \Alg{}  provides PAC bounds to find an $\epsilon$-optimal policy and inherents the same $O(d^3)$  dependence on the feature dimension as \citep{jin2020provably}, while being $1/\epsilon$ worse in sample complexity and in horizon dependence; a $\log|\ActionSpace|$ factor is also implicitly hidden in our $\widetilde O$ notation. However, the transfer error of \Alg{} in \cref{def:TransferError} can be a significantly weaker assumption, as discussed in~\citep{agarwal2020pc}.

\paragraph{Sample complexity improvement relative to \pcpg} \Alg{}  operates under an essentially identical notion of transfer error as \pcpg{} in \citep{agarwal2020pc} and shares several \pcpg's algorithmic principles (e.g. the exponentiated weights rule for policy update, Monte Carlo for policy evaluation, and the concept of policy cover).
But importantly, because \Alg{} uses a better bonus structure, adaptive bonus updates, and performs importance sampling to reuse Monte Carlo data, \Alg{}  is able to lower the sample complexity from the slow $ \widetilde{O}(\frac{1}{\epsilon^{11}})$ rate of \pcpg{} to the faster $\widetilde{O}(\frac{1}{\epsilon^3})$ rate.
Note that unlike \pcpg, we do not extend our analysis to the infinite dimensional setting, though we expect it to be possible using the covering arguments from \citet{yang2020bridging}.

\paragraph{Better sample complexity in the optimization setting} Finally, \Alg{} 's analysis is based on the natural policy gradient algorithm \citep{kakade2001natural}, which has recently been analyzed in \citep{agarwal2020optimality} when a good sampling distribution is already given  (for example, through a generative model). For solving the policy optimization subproblem, \Alg{}  improves the $\widetilde{O}(\frac{1}{\epsilon^4})$ rate obtained in \citep{agarwal2020optimality} to $\widetilde{O}(\frac{1}{\epsilon^3})$ by the data reuse scheme described in \cref{sec:main:solver}.

\section{Technical Analysis}
In this section we briefly sketch the analysis of \Alg{}  and prove \cref{thm:main:MainResult}. We start by giving a regret decomposition analysis of the policies computed by \Alg{}  in \cref{sec:main:RegretDecomoposition}. This result will be used as the foundation of the proof of the main result in \cref{sec:main:SampleComplexityAnalysis}

\paragraph{Notation}
We introduce a few more notations to simplify the presentation. The outer policy $\pi^n$ is a uniform mixture of the policies $\pi^n_0,\dots,\pi_{K-1}^n$ returned by the \textsc{Solver} (see \cref{alg:solver}) in outer iteration $n$. For the policy $\pi^n$ in the outer iteration $n$ in \cref{alg:driver}, we denote with $Q^n(s,a) = Q^{\pi^n}(s,a;r+b^n)$ the state-action value function, with $V^n(s) = V^{\pi^n}(s;r+b^n)$ the state value function, and with  $A^n(s,a) = Q^n(s,a)-V^n(s)$ the advantage function on the optimistic MDP.
Similarly, for the linear approximation (given by the Monte Carlo regression in line \ref{line:regression} of \cref{alg:solver}), we write  $\Qhat^n(s,a) =\frac{1}{K}\sum_{k=0}^{K-1}\Qhat^{\pi^n_k}(s,a)$,  $\widehat{V}^n(s)=\frac{1}{K}\sum_{k=0}^{K-1}\Qhat^{\pi^n_k}(s,\pi^n_k)$, and $ \Ahat^n(s,a) = \Qhat^n(s,a)-\Vhat^n(s)$.
Using the best regressed parameter $w^{n,\star}$ in \cref{def:TransferError}, we also define the best predictor $Q^{n,\star}(s,a) = \phi(s,a)^\top w^{n,\star}+b^n(s,a)$ and its advantage function $A^{n,\star}(s,a) = Q^{n,\star}(s,a)-V^{n,\star}(s)$. In absence of misspecification, we note that $Q^{n,\star} = Q^n$.

\subsection{Regret Decomposition}
\label{sec:main:RegretDecomoposition}
Fix an outer iteration index $n$.
We start the analysis by giving the following performance lemma, which
is obtained by combining the performance difference lemma~\citep{kakade2002approximately} with several properties of our algorithm.
\begin{lemma}[Performance Analysis; \eqref{eqn:RefInMain} in appendix]
\label[lemma]{prop:main:MainAnalysisIntermediate}
With high probability, \Alg{}  ensures
\begin{align*}
	(1-\gamma)(\Vstar - V^{\pi^n})(s_0) & \leq  \underbrace{\sup_{s\in\K^n}\Ahat^{n}(s,\pistar)}_{\substack{\text{Solver error}}}
	+ \E_{(s,a) \sim \pistar}\underbrace{\Big| A^{n}(s,a)-A^{n,\star}(s,a)\Big|\kn}_{\substack{\text{Approximation error on states in $\K^n$}}} \\
	&+ \E_{(s,a) \sim \pistar} \underbrace{\Big(Q^{n,\star}(s,a) - \Qhat^{n}(s,a)\Big)\kn}_{\substack{\text{Statistical error along $\pistar$ on states in $\K^n$}}}   \\
	 & - \underbrace{\E_{(s,a) \sim \pistar} 2b_{\phi}^n(s,a)\kn}_{\substack{\text{Bonus along $\pistar$ on states in $\K^n$}}} + \underbrace{\E_{(s,a) \sim \pi^n} b^n(s,a)}_{\substack{\text{Bonus  along $\pi^n$ on the full space }}}.
	\numberthis{\label{eqn:main:MainAnalysisIntermediate}}
\end{align*}
\end{lemma}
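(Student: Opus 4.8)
The plan is to start from the performance difference lemma \citep{kakade2002approximately} applied on the \emph{optimistic} MDP with reward $r+b^n$, and to turn optimism into the stated bonus terms through exact value identities rather than crude inequalities. First I would use that the bonus contributes additively to any policy's return, $(1-\gamma)V^{\pi}(s_0;r+b^n) = (1-\gamma)V^\pi(s_0;r) + \E_{(s,a)\sim\pi} b^n(s,a)$, and apply this to both $\pistar$ and $\pi^n$. Subtracting the two, and using $\Vstar = V^{\pistar}(\cdot;r)$, yields the exact identity
\begin{align*}
(1-\gamma)(\Vstar - V^{\pi^n})(s_0) = (1-\gamma)\big(V^{\pistar}(s_0;r+b^n) - V^n(s_0)\big) - \E_{(s,a)\sim\pistar} b^n(s,a) + \E_{(s,a)\sim\pi^n} b^n(s,a).
\end{align*}
The last term is already the ``bonus along $\pi^n$'' term. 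The performance difference lemma on the optimistic MDP rewrites the first difference as $\E_{(s,a)\sim\pistar} A^n(s,a)$, so the problem reduces to bounding $\E_{(s,a)\sim\pistar}[A^n(s,a) - b^n(s,a)]$.

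I would then split this expectation with $\kn$ and $\unkn$. On known states $b^n = 2b_\phi^n$ because $b_\1^n$ vanishes there, so the known part of $-\E_{(s,a)\sim\pistar} b^n(s,a)$ is precisely the $-\E_{(s,a)\sim\pistar} 2b_{\phi}^n(s,a)\kn$ term in the statement. It then remains to decompose $\E_{(s,a)\sim\pistar}A^n(s,a)\kn$ by inserting the best predictor and the critic estimate, $A^n = (A^n - A^{n,\star}) + (A^{n,\star}-\Ahat^n) + \Ahat^n$. The first piece is controlled in absolute value by the approximation-error term; the term $\Ahat^n(s,\pistar)$ is bounded by $\sup_{s\in\K^n}\Ahat^n(s,\pistar)$ since $\E_{(s,a)\sim\pistar}[\cdot]\kn$ is an average over known states; and $A^{n,\star}-\Ahat^n$, after expanding $Q^{n,\star} = \phi^\top w^{n,\star}+b^n$ and $\Qhat^n = \phi^\top\what w + \tfrac12 b^n$, contributes the statistical error $Q^{n,\star}-\Qhat^n$ measured along $\pistar$ together with a residual term along $\pi^n$. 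The one-sided ``cautious optimism'' property (that $\Qhat^n$ underestimates $Q^n$ while staying optimistic for the true reward) is exactly what makes the leftover bonus bookkeeping from the $\tfrac12 b^n$ offset close up in the favorable direction instead of adding an uncontrolled term.

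The main obstacle I anticipate is the \emph{unknown} part, $\E_{(s,a)\sim\pistar}[A^n(s,a)-b^n(s,a)]\unkn$, which is absent from the right-hand side and hence must be shown to be nonpositive. Since $b^n = \tfrac{3}{1-\gamma}$ on unknown state-actions, this is the claim $A^n(s,a)\le\tfrac{3}{1-\gamma}$ along $\pistar$ on unknown states. A naive pointwise bound fails: the optimistic value $V^n$ accumulates future bonuses and can far exceed a single-step $\tfrac{3}{1-\gamma}$, while $\pi^n$ behaves specially on unknown states (it ignores known actions to explore), so the advantage of a comparator that exploits a known action there can be genuinely large. The clean route is the \textsc{R-max}-style argument that the per-step bonus $\tfrac{3}{1-\gamma}$ on an unknown state-action already dominates the largest achievable \emph{true} return $\tfrac{1}{1-\gamma}$ of the comparator. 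I would therefore decompose $\pistar$'s trajectory at the first hitting time of an unknown state-action: before that time $\pistar$ moves only through known state-actions, where the optimistic rewards dominate the true ones, and at the escape time the optimistic value ($\ge\tfrac{3}{1-\gamma}$) caps the comparator's remaining true value ($\le\tfrac{1}{1-\gamma}$). Transferring this trajectory-level inequality back into the occupancy-measure form of $\E_{(s,a)\sim\pistar}A^n\unkn$ is the delicate step, and it is the part of the proof I would treat most carefully.
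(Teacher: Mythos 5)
Your reduction is exact and correct up to the point you flagged yourself, and your treatment of the known part (the three-way split of $A^n$, the sup bound, and using the one-sided property to discard $Q^{n,\star}(s,\pi^n)-\Qhat^{n}(s,\pi^n)\geq 0$) matches the paper. The genuine gap is the remaining claim, $\E_{(s,a)\sim\pistar}\bigl[A^{n}(s,a)-b^n(s,a)\bigr]\unkn\leq 0$: this is not merely delicate --- it is \emph{false} in general, so no hitting-time argument can close it. Since $Q^n(s,a)=r(s,a)+b^n(s,a)+\gamma\E_{s'\mid s,a}V^n(s')$, the bonus cancels and the integrand equals $r(s,a)+\gamma\E_{s'\mid s,a}V^n(s')-V^n(s)$. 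At an unknown state, $V^n(s)$ reflects only what $\pi^n$ does there (a uniform choice among \emph{unknown} actions), so it can be as small as $\tfrac{3}{1-\gamma}$, while a neighbouring \emph{known} state can have $V^n$ of order $\tfrac{1}{(1-\gamma)^2}$ because the learner goes bonus-hunting from it. Concretely, take unknown states $u,u'$ and known states $k,z$ ($z$ an absorbing zero-reward sink): at $u$, an unknown action $a_0\to z$ (reward $0$) and a known action $a_1\to k$ (reward $1$); at $k$, a known action $a_2\to u'$ (reward $0$) and a known action $a_3\to u$ (reward $1$); at $u'$, an unknown action that self-loops with reward $1$. Then $\pistar$ (optimal in the true MDP) cycles $u\to k\to u\to\cdots$, while $\pi^n$ takes $a_0$ at $u$, the self-loop at $u'$, and $a_2$ at $k$ (which the solver prefers since $\Qhat^n(k,a_2)\gg\Qhat^n(k,a_3)$). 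One gets $V^n(u)=\tfrac{3}{1-\gamma}$, $V^n(u')=\tfrac{1+3/(1-\gamma)}{1-\gamma}$, $V^n(k)\approx\gamma V^n(u')$, and the only unknown state $\pistar$ ever visits contributes
\begin{align*}
\E_{(s,a)\sim\pistar}\bigl[A^{n}(s,a)-b^n(s,a)\bigr]\unkn \;=\; \frac{1}{1+\gamma}\Bigl(1+\gamma V^n(k)-V^n(u)\Bigr)\;\approx\;\frac{3}{2(1-\gamma)^2}\;>\;0 .
\end{align*}
Your exact identity still holds, of course: the compensating mass of order $-\tfrac{3}{(1-\gamma)^2}$ sits in $\E_{(s,a)\sim\pistar}A^{n}(s,a)\kn$ at the known state $k$, where $\pistar$ moves ``downhill'' from $V^n(k)$ back to $u$ --- but that is exactly the term you have already given away by bounding it with $\sup_{s\in\K^n}\Ahat^{n}(s,\pistar)\approx 0$. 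So within your decomposition the lemma cannot be recovered: the known and unknown pieces are individually of size $\tfrac{1}{(1-\gamma)^2}$ with opposite signs, while in this configuration every term on the lemma's right-hand side is $O(1)$.

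This is precisely the trap the paper's proof is engineered to avoid, and it is why the paper does not start from your identity. It augments the optimistic MDP with a self-looping action $a^\dagger$ of reward $3$ and replaces the comparator by $\pitilde^n$, equal to $\pistar$ on $\K^n$ but taking $a^\dagger$ forever upon the first visit to an unknown state. The performance difference lemma is then applied to the pair $(\pitilde^n,\pi^n)$, so all expectations are under $d^{\pitilde^n}$, which never charges anything past the first unknown visit; there, \cref{lem:Aneg} gives $A^n(s,a^\dagger)=3-(1-\gamma)V^n(s)\leq 0$ pointwise, which is where the R-max capping you describe is actually used. The comparator's true value and the $2b^n_\phi$ stream enter through \cref{lem:PartialOptimism} (which plays the role of your exact bonus identity, but only one-sidedly and only for the known-state bonus), and the mismatch between $d^{\pitilde^n}$ and $d^{\pistar}$ in the error terms is repaired by \cref{lem:DistributionDominance}, which applies only because those terms are nonnegative. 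In short: your instinct that the comparator must be stopped at its first unknown visit is right, but the stopping has to be built into the \emph{policy}, hence into the occupancy measure, before the performance difference lemma is invoked; once you commit to $d^{\pistar}$ over the full space, every later ``uphill'' re-entry of $\pistar$ into the unknown region is charged, and the sign condition you need fails.
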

We discuss each of these terms in detail below.

\subsubsection{Solver error}
The first term in \eqref{eqn:main:MainAnalysisIntermediate} measures how well the policy $\pi^n$ performs in terms of our empirical advantage function on known states; generating such a policy is done using the regret guarantee of our online learning rule in~\cref{alg:solver}.
We have the following lemma (see also \citep{agarwal2020optimality,agarwal2020pc}).

\begin{lemma}[Online regret of softmax; \cref{lem:NPG} in appendix]
\label[proposition]{prop:main:npg}
Using an appropriate learning rate $\eta$,  \cref{alg:solver} identifies a mixture policy $\pi^n$ that satisfies
$
	\sup_{s\in \K^n} \sup_{a\in\mathcal{A}} \Ahat^n(s,a) =  \widetilde O\(\frac{1}{(1-\gamma)^2}\sqrt{\frac{1}{K}}\).
$
\end{lemma}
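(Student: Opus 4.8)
The plan is to view the inner-loop update in line~\ref{line:update} as the Hedge (exponentiated-weights) algorithm run \emph{independently} at each known state, and to invoke the classical adversarial regret guarantee for exponentiated weights --- equivalently, the mirror-descent reading of NPG analyzed in \citep{agarwal2020optimality}. Fixing a state $s\in\K^n$, I would treat $g_k=\Qhat_k(s,\cdot)$ as the gain vector over $\mathcal A$ at round $k$. Since the initialization on known states is $\pi_0(\cdot\mid s)=\text{Unif}(\mathcal A)$ and the update is $\pi_{k+1}(\cdot\mid s)\propto\pi_k(\cdot\mid s)e^{\eta\Qhat_k(\cdot\mid s)}$, the standard Hedge inequality gives, for every comparator action $a\in\mathcal A$,
\[
\frac{1}{K}\sum_{k=0}^{K-1}\Big(\Qhat_k(s,a)-\Qhat_k(s,\pi_k)\Big)\;\leq\;\frac{\ln|\mathcal A|}{\eta K}+\frac{\eta}{2}\max_k\|\Qhat_k(s,\cdot)\|_\infty^2 .
\]
By the definitions $\Qhat^n=\tfrac1K\sum_k\Qhat_k$ and $\Vhat^n(s)=\tfrac1K\sum_k\Qhat_k(s,\pi_k)$, the left-hand side is exactly $\Ahat^n(s,a)$. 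The key point is that this is an adversarial bound, so it holds for \emph{every} realization of the data-dependent (hence adaptively chosen) gains and needs no probabilistic caveat of its own.

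Next I would control the gain magnitude uniformly over $\K^n$. On known states the critic in \cref{alg:ise} returns $\Qhat_k(s,a)=\phi(s,a)^\top\widehat w+\tfrac12 b^n(s,a)$, where $\widehat w$ is constrained to $\|\widehat w\|_2\leq W$ in line~\ref{line:fitting} and $\|\phi(s,a)\|_2\leq 1$, so $|\phi(s,a)^\top\widehat w|\leq W$; moreover for $s\in\K^n$ we have $b^n_\1=0$ and $b^n_\phi=\sqrt\beta\|\phi(s,a)\|_{(\widehat\Sigma^n)^{-1}}<1$, so $\tfrac12 b^n(s,a)\leq 1$. Hence $\|\Qhat_k(s,\cdot)\|_\infty\leq W+1=\widetilde O(\tfrac{1}{(1-\gamma)^2})$ \emph{deterministically and uniformly} in $s\in\K^n$ and $k$. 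Because both the magnitude term and the $\ln|\mathcal A|$ term are identical at every known state, taking $\sup_a$ and then $\sup_{s\in\K^n}$ costs nothing, producing
\[
\sup_{s\in\K^n}\sup_{a\in\mathcal A}\Ahat^n(s,a)\;\leq\;\frac{\ln|\mathcal A|}{\eta K}+\frac{\eta}{2}\,\widetilde O\!\left(\frac{1}{(1-\gamma)^4}\right).
\]

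Finally I would tune $\eta$: balancing the two terms gives $\eta=\Theta\big((1-\gamma)^2\sqrt{\ln|\mathcal A|/K}\big)$, which matches the prescribed $\eta=\widetilde O(\sqrt{\ln|\mathcal A|}/(\sqrt K\,W))$ once $W=\widetilde O(\tfrac{1}{(1-\gamma)^2})$ is substituted, and yields $\sup_{s\in\K^n}\sup_{a}\Ahat^n(s,a)=\widetilde O\big(\tfrac{1}{(1-\gamma)^2}\sqrt{1/K}\big)$ after absorbing $\ln|\mathcal A|$ into $\widetilde O$.

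I expect the delicate points to be bookkeeping rather than conceptual. First, I must confirm that the exponentiated-weights update genuinely decouples across known states, so that a single-state regret bound lifts to the uniform-in-$s$ statement for free. Second, I must check that the Hedge inequality in the stated form is valid at the chosen step size, i.e.\ that the quadratic remainder from $e^x\leq 1+x+x^2$ is controlled because $\eta\|\Qhat_k(s,\cdot)\|_\infty=\widetilde O(\sqrt{\ln|\mathcal A|/K})$ is small for large $K$. Both hinge only on the deterministic norm constraint $\|\widehat w\|_2\leq W$ enforced by the critic and on the boundedness of the bonus on $\K^n$, so no concentration argument is needed at this stage; the randomness of the data enters only later, when $\Ahat^n$ is related to the true optimistic advantage.
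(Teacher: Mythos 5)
Your proposal is correct and follows essentially the same route as the paper: the paper's proof of \cref{lem:NPG} is exactly the per-state Hedge/exponentiated-weights regret analysis (via KL-divergence telescoping with the $e^x\leq 1+x+x^2$ remainder bound, uniform initialization giving the $\ln|\mathcal A|$ term, and the deterministic gain bound $|\phi(s,a)^\top\widehat w|+\tfrac12 b^n(s,a)\leq W+1$ on $\K^n$), followed by the same tuning $\eta=\sqrt{\ln|\mathcal A|}/(\sqrt K W)$. The only difference is cosmetic: you invoke the standard Hedge inequality as a black box (stated for the uncentered gains $\Qhat_k$, which is equivalent by shift invariance of the update), whereas the paper rederives it from scratch for the centered advantages $\widehat A_k$ with an arbitrary comparator distribution $\widetilde\pi$ in place of your fixed comparator action.
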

Thus the solver error in \eqref{eqn:main:MainAnalysisIntermediate} can be reduced arbitrarily, although the number of iterations $K$ directly affects the sample complexity; see \cref{sec:main:SampleComplexityAnalysis}.

\subsubsection{Approximation error}
The second term in \eqref{eqn:main:MainAnalysisIntermediate} is an approximation error in advantages under $\pi^\star$ and is non-zero only when the linear MDP assumption is not exactly satisfied. The performance bound of \cref{prop:main:MainAnalysisIntermediate} highlights that the \emph{approximation error} is measured  \textit{1)} in expectation and \textit{2)} along the distribution induced by $\pistar$. For brevity, we neglect the approximation error in this proof sketch; we note that this quantity can be controlled in the general version of the result using the transfer error condition (\cref{def:TransferError}).

\subsubsection{Statistical error}
The third term in \eqref{eqn:main:MainAnalysisIntermediate} is perhaps the most surprising: it reasons about the statistical error in our critic fitting on the known states, \emph{but only under states and actions chosen according to $\pi^\star$}.
In other words, the agent's estimator does not need to be correct for arbitrary distributions; otherwise, an $\ell_\infty$ guarantee over the known-set is needed (as needed by~\citet{agarwal2020pc}).
Such result is enabled by the following key lemma, which contributes the underestimation property of $\Qhat$ needed in the proof of \cref{prop:main:MainAnalysisIntermediate}. (Recall the regression target is subtracted with $b^n(s,a)$ but the final predictor adds back only $\frac{1}{2}b^n(s,a)$.)

 \begin{lemma}[One sided errors; \cref{lem:ValidityOfConfidenceItervals} in appendix]
 \label[lemma]{lem:main:sided}
 Let $w^{n,\star}$ be defined in \cref{def:TransferError} and let $\widehat w^n$ be the corresponding empirical minimizer.  Define the agent's predictor on $(s,a) \in \K^n$ as
 	$\Qhat^{n,\star}(s,a)  \defeq \phi(s,a)^\top \widehat w^{n} + \frac{1}{2}b^n(s,a)$.
Then with high probability, % at least $1-\delta/2$
jointly $\forall n$ and $\forall (s,a) \in \K^n$,
	\begin{align}
	\quad 0 \leq (Q^{n,\star} - \Qhat^{n}) (s,a)  \leq b^n(s,a) = 2b_\phi(s,a).
\end{align}
\end{lemma}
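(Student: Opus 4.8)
The plan is to reduce the two-sided bound to a single self-normalized concentration estimate on the regression error: with high probability, jointly over all $n$ and all $(s,a)\in\K^n$,
\begin{equation}
\bigl| \phi(s,a)^\top (w^{n,\star} - \widehat w^n) \bigr| \;\leq\; \sqrt{\beta}\,\|\phi(s,a)\|_{(\widehat\Sigma^n)^{-1}} \;=\; b^n_\phi(s,a) \;=\; \tfrac12 b^n(s,a),
\label{eqn:sided:key}
\end{equation}
where the last two equalities use that $b^n_\1 = 0$, hence $b^n = 2 b^n_\phi$, on $\K^n$. Granting \eqref{eqn:sided:key}, the conclusion is immediate: on $\K^n$ the critic adds back only half the offset, so that $Q^{n,\star}(s,a) - \Qhat^n(s,a) = \phi(s,a)^\top(w^{n,\star} - \widehat w^n) + \tfrac12 b^n(s,a)$, and \eqref{eqn:sided:key} pins this quantity into $[0,\,b^n(s,a)]$ with $b^n = 2 b^n_\phi$ on $\K^n$. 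Thus the whole lemma rests on \eqref{eqn:sided:key}.

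First I would reduce from the mixture policy to a single inner-loop regression. Since least squares is linear in its target and the mixture value decomposes as $Q^n = \frac1K\sum_k Q^n_k$, one has $w^{n,\star} = \frac1K\sum_k w^{n,\star}_k$ and $\widehat w^n = \frac1K\sum_k \widehat w_k$, so it suffices to prove the absolute-value bound in \eqref{eqn:sided:key} for each individual critic regression $\widehat w_k$ against its population optimum $w^{n,\star}_k$ and then average, a signed bound being preserved under averaging. For a fixed regression, Cauchy--Schwarz in the $\widehat\Sigma^n$ pairing gives $|\phi^\top(w^{n,\star}_k - \widehat w_k)| \leq \|\phi\|_{(\widehat\Sigma^n)^{-1}}\,\|w^{n,\star}_k - \widehat w_k\|_{\widehat\Sigma^n}$, so everything reduces to the direction-free estimate $\|w^{n,\star}_k - \widehat w_k\|_{\widehat\Sigma^n} \leq \sqrt\beta$.

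To control the $\widehat\Sigma^n$-norm of the regression error, I would write the first-order optimality condition of the constrained least-squares problem in \cref{alg:ise}. The error $\widehat w_k - w^{n,\star}_k$ then solves a perturbed normal equation with right-hand side $\sum_i x_i\,\xi_i$, where $\xi_i = (\rho_i G_i - b_i) - (Q^n_k - b^n)(s_i,a_i)$ combines the mean-zero Monte-Carlo and importance-sampling noise with the empirical fluctuation of the population orthogonality condition defining $w^{n,\star}_k$; crucially, the importance weights $\rho_i$, and hence $\xi_i$, are bounded because the $\kappa$-step lazy reuse keeps $\pi_{\underline k}$ and $\pi_k$ close. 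Writing $\Lambda_k \defeq \lambda I + \sum_i x_i x_i^\top$ for the regularized empirical covariance of the regression features, a self-normalized martingale inequality of Abbasi-Yadkori type bounds $\|\sum_i x_i \xi_i\|_{\Lambda_k^{-1}}$ by an explicit quantity $B$ consisting of $\polylog$ factors, whence $\|w^{n,\star}_k - \widehat w_k\|_{\Lambda_k} \leq B$.

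The hard part, and the reason a genuinely new ingredient is needed, is that the regression error is naturally measured in the $\Lambda_k$ norm, where $\Lambda_k$ is the covariance of the trajectories drawn under $\rho^n$ in \cref{alg:ise}, whereas the bonus, and hence the target estimate \eqref{eqn:sided:key}, lives in the norm of $\widehat\Sigma^n$, the \emph{cumulative} covariance built by \textsc{FeatureSampler} in \cref{alg:driver}. Converting between the two requires that these matrices be spectrally comparable up to constants, with high probability and uniformly across updates; this is precisely the role of the new covariance concentration result \cref{lem:Covariance}. Choosing $\beta$ large enough to absorb $B$, the spectral-equivalence constant, and the self-normalized log factors, and union bounding only over the $O(d\log N)$ distinct bonus updates rather than all $n$, then yields \eqref{eqn:sided:key} simultaneously for all $n$ and, through the Cauchy--Schwarz step, for all $(s,a)\in\K^n$ at once. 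This covariance-translation step is where I expect the main technical effort to lie.
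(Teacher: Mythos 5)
Your overall skeleton matches the paper's proof of \cref{lem:ValidityOfConfidenceItervals}: reduce the two-sided bound to $|\phi(s,a)^\top(w^{n,\star}-\widehat w^n)| \leq b^n_\phi(s,a)$ on $\K^n$, apply Cauchy--Schwarz, bound the weight error in a quadratic norm, and use \cref{lem:Covariance} to express everything in the $(\widehat\Sigma^n)^{-1}$ norm that defines the bonus (the paper measures the weight error in the \emph{population} norm $\Sigma^n$ via an excess-risk argument, \cref{lem:StatisticalRates} plus \cref{lem:Sigma2ERM}, rather than in the empirical regression norm $\Lambda_k$; that difference is a reasonable variant). The genuine gap is in your concentration step. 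You treat $\xi_i$ as conditionally mean-zero noise and invoke a self-normalized inequality for a \emph{fixed} target policy $\pi_k$. But $\pi_k$ is not fixed relative to the dataset $\mathcal D$: the data collected at iteration $\underline k$ are reused for all $k$ with $k-\underline k \leq \kappa$, and $\pi_k$ is built (line~\ref{line:update} of \cref{alg:solver}) from the critics $\Qhat_{\underline k},\dots,\Qhat_{k-1}$ that were themselves fit on that same $\mathcal D$. Hence the importance weights $\rho_i$ are functions of the very data they reweight, the unbiasedness of the IS estimator (\cref{lem:bias-variance}) holds only for a data-independent target, and neither a martingale argument over the samples $i$ nor a fixed-policy i.i.d.\ argument goes through as you state it. This is precisely why the paper's \cref{lem:UnionBound} performs a covering argument over the continuum of possible target policies: by \cref{lem:Update}, $\pi_k$ is parametrized by $v=\sum_{i=\underline k}^{k-1}\widehat w_i \in (\kappa W)\mathcal B$; this ball is discretized (\cref{lem:Discretization}), the fixed-policy rate \cref{lem:ISrates} is applied at each grid point, and the stability lemmas (\cref{lem:NearbyPolicies,lem:StabilityQ,lem:StabilityEmpiricalQ,lem:Stability}) control the rounding error. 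That covering is what produces the $[\polyall]^d$ factor inside $\beta$ in \cref{eqn:beta}; your proposed union bound over only the $O(d\log N)$ bonus updates cannot substitute for it --- the bound must hold at all $NK$ inner iterations \emph{and} uniformly over the data-dependent policies, which is a $d$-dimensional family, not a discrete set of size $O(d\log N)$.

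Two smaller points. First, your identity $w^{n,\star}=\frac1K\sum_k w^{n,\star}_k$ by ``linearity of least squares'' fails under the constraint $\|w\|_2\leq W$ (projection onto a ball is not linear); the paper instead proves the per-$k$ bound $0 \leq Q^{n,\star}_k - \Qhat^n_k \leq 2b^n_\phi$ and averages the $Q$-values, which your remark about signed bounds being preserved under averaging already suffices for --- just drop the claim about the weight vectors. Second, an Abbasi-Yadkori-type bound is tailored to well-specified conditional-mean noise, whereas this regression is deliberately misspecified (that is the point of the transfer-error setting); moreover $w^{n,\star}_k$ is a \emph{constrained} minimizer, so it satisfies a variational inequality rather than a normal equation. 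Your ``perturbed normal equation'' therefore carries an extra approximation term whose empirical fluctuation needs its own uniform concentration; the paper avoids this entirely by passing through the excess risk of the exp-concave square loss and converting it to a $\Sigma$-norm bound via \cref{lem:Sigma2ERM}.
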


\subsubsection{Bonus difference and concentration}
The final two terms in \eqref{eqn:main:MainAnalysisIntermediate} arise as we optimize policies in the optimistic MDP, but the performance difference of interest is defined for the original MDP. The negative bonus term under the comparator $\pi^\star$ helps cancel some of the statistical errors (i.e. the third term), which is crucial for the overall sample complexity results.

For the other bonus term under $\pi^n$, we can bound it using the elliptic potential lemma (e.g., \citep{Abbasi11}) and a martingale argument.
\begin{lemma}[Concentration on Bonus; \cref{lem:SumOfIndicators,lem:SumOfBonuses} in appendix]
	With high probability, it holds that
	$
	\sum_{n=1}^N \E_{(s,a) \sim \pi^n} b^n(s,a) = \widetilde O\(\frac{d\sqrt{N}}{(1-\gamma)^3}\)
	$.
\end{lemma}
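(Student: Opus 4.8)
The plan is to bound the two pieces of the bonus $b^n = 2b^n_\phi + b^n_\1$ separately, in both cases reducing the population expectation $\E_{(s,a)\sim\pi^n}$ to the single feature $\phi^n$ drawn by \textsc{FeatureSampler} and then invoking the elliptic potential lemma. The key structural observation is that the lazy update keeps $\pi^{\underline n} = \pi^n$, so $\phi^n$ is an unbiased draw from the visitation of $\pi^n$; since $\widehat\Sigma^n$ and $\widehat\Sigma^{\underline n}$ are both measurable with respect to the history $\mathcal F_n$ available before $\phi^n$ is sampled, we have $\E[\,g(\phi^n)\mid\mathcal F_n\,] = \E_{(s,a)\sim\pi^n} g(\phi(s,a))$ for any fixed $g$. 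A second fact we use repeatedly is that, because the bonus and known set are refreshed only when the determinant doubles (line~\ref{line:lazyupdate}), the bonus actually charged at iteration $n$ is built from $\widehat\Sigma^{\underline n}$ rather than $\widehat\Sigma^n$, while the trigger guarantees $\det(\widehat\Sigma^n)\le 2\det(\widehat\Sigma^{\underline n})$. Combined with $\widehat\Sigma^{\underline n}\preceq\widehat\Sigma^n$ and the standard inequality $\|x\|^2_{A^{-1}}/\|x\|^2_{B^{-1}}\le \det(B)/\det(A)$ valid for $A\preceq B$, this yields $\|\phi\|_{(\widehat\Sigma^{\underline n})^{-1}}\le \sqrt 2\,\|\phi\|_{(\widehat\Sigma^n)^{-1}}$, letting us work with the current covariance throughout.

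For the elliptic piece, write $\sum_n \E_{(s,a)\sim\pi^n} 2b^n_\phi(s,a) = 2\sqrt\beta \sum_n \E_{(s,a)\sim\pi^n}[\|\phi(s,a)\|_{(\widehat\Sigma^{\underline n})^{-1}}\1\{s\in\K^n\}]$. First I would replace each population term by the sampled $\|\phi^n\|_{(\widehat\Sigma^{\underline n})^{-1}}$: the difference is a bounded ($\le 2/\sqrt\lambda$) martingale difference sequence, so Azuma--Hoeffding controls the cumulative gap by $\widetilde{O}(\sqrt{N/\lambda})$, a lower-order term. After converting to the current covariance via the $\sqrt 2$ factor above, Cauchy--Schwarz gives $\sum_n \|\phi^n\|_{(\widehat\Sigma^n)^{-1}} \le \sqrt N\,\big(\sum_n \|\phi^n\|^2_{(\widehat\Sigma^n)^{-1}}\big)^{1/2}$, and the elliptic potential lemma bounds $\sum_n \|\phi^n\|^2_{(\widehat\Sigma^n)^{-1}} \le 2d\log(1+N/(\lambda d)) = \widetilde{O}(d)$. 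Multiplying out gives $\widetilde{O}(\sqrt\beta\sqrt{Nd})$, which equals $\widetilde{O}(d\sqrt N/(1-\gamma)^3)$ for the chosen threshold $\beta$ (scaling as $\widetilde{O}(d/(1-\gamma)^6)$).

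For the indicator piece, I would avoid the lossy Cauchy--Schwarz step. By definition of the known set, $(s,a)\notin\K^n$ forces $\sqrt\beta\|\phi(s,a)\|_{(\widehat\Sigma^{\underline n})^{-1}}\ge 1$, hence $\1\{(s,a)\notin\K^n\}\le \beta\|\phi(s,a)\|^2_{(\widehat\Sigma^{\underline n})^{-1}}\le 2\beta\|\phi(s,a)\|^2_{(\widehat\Sigma^n)^{-1}}$. Applying the same martingale reduction (now to the squared norm) and the elliptic potential lemma \emph{directly} — without Cauchy--Schwarz — bounds $\sum_n \E_{(s,a)\sim\pi^n}\1\{(s,a)\notin\K^n\}$ by $\widetilde{O}(\beta d)$, which is constant in $N$. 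Scaling by $\frac{3}{1-\gamma}$ shows the indicator contribution is $\widetilde{O}(\beta d/(1-\gamma))$, dominated by the elliptic piece; summing the two pieces yields the claimed $\widetilde{O}(d\sqrt N/(1-\gamma)^3)$.

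I expect the main obstacle to be reconciling the lazily refreshed bonus with the continuously updated covariance: the bonus the algorithm charges at step $n$ lives in the geometry of $\widehat\Sigma^{\underline n}$, whereas the clean potential argument is available only for the freshly incremented $\widehat\Sigma^n$. Everything hinges on the determinant-doubling trigger to certify that these two geometries are within a constant factor, so that the lazy scheme loses nothing beyond constants — this is precisely where the covariance control of \cref{lem:Covariance} enters. The remaining care is bookkeeping: fixing a filtration under which $\phi^n$ is conditionally unbiased despite the data-dependent (hence random) times at which $\underline n$ advances, and ensuring the two Azuma bounds hold jointly across all $n$ via a union bound so that the statement holds with the stated high probability.
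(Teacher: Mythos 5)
Your treatment of the elliptic piece $2b^n_\phi$ is sound and is essentially the paper's own argument (\cref{lem:SumOfBonuses}) in light disguise: the paper also reduces the population expectation to the sampled feature $\phi^n$ via a martingale term (using Bernstein, \cref{lem:Bernstein}, with a self-bounding trick where you use Azuma--Hoeffding, which suffices here since your deviation term carries only a $\sqrt{\beta/\lambda}=\widetilde O(1/(1-\gamma)^2)$ prefactor), and where you convert the stale norm to the fresh one via $\|\phi\|_{(\widehat\Sigma^{\underline n})^{-1}}\le\sqrt 2\,\|\phi\|_{(\widehat\Sigma^{n})^{-1}}$ and then apply the standard potential lemma, the paper runs a potential argument directly on the stale matrices (\cref{lem:TrLogDet,lem:PotentialArgument}); both routes hinge only on the determinant-doubling trigger and lose constants. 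One minor correction: \cref{lem:Covariance} plays no role in this lemma --- it relates the empirical to the population covariance and is needed for the regression confidence intervals (\cref{lem:ValidityOfConfidenceItervals}); the stale-versus-fresh geometry here is handled by the purely deterministic determinant argument you already described.

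The genuine gap is in the indicator piece. Bounding $\1\{(s,a)\notin\K^n\}\le\beta\|\phi(s,a)\|^2_{(\widehat\Sigma^{\underline n})^{-1}}$ and running the potential lemma on squared norms gives a main term $\widetilde O(\beta d)$, but your two concluding claims fail. First, the martingale error is not constant in $N$: the squared norms are bounded by $1/\lambda$, so Azuma contributes $\widetilde O\big(\beta\sqrt{N}/\lambda\big)$ to the indicator sum, and after the $\tfrac{3}{1-\gamma}$ scaling this is $\widetilde O\big(\sqrt N/(1-\gamma)^5\big)$, since $\beta/\lambda = \widetilde O(W^2+G_{max}^2)=\widetilde O(1/(1-\gamma)^4)$ by \cref{eqn:beta,eqn:lambdamin}. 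This exceeds the claimed $\widetilde O\big(d\sqrt N/(1-\gamma)^3\big)$ whenever $d\le 1/(1-\gamma)^2$, which is the typical regime. Second, even the main term $\tfrac{3}{1-\gamma}\cdot\widetilde O(\beta d)=\widetilde O\big(d^2/(1-\gamma)^5\big)$ is dominated neither by your elliptic piece $\widetilde O\big(d\sqrt N/(1-\gamma)^2\big)$ nor by the claimed bound unless $N$ is at least of order $d^2/(1-\gamma)^4$, whereas the lemma is asserted for every $N$. The paper avoids both problems by bounding the indicator with the \emph{first} power rather than the square: $\1\{\sqrt\beta\|\phi\|_{(\widehat\Sigma^{\underline n})^{-1}}\ge 1\}\le\sqrt\beta\,\|\phi\|_{(\widehat\Sigma^{\underline n})^{-1}}$ (\cref{lem:SumOfIndicators}), then reusing the linear-norm bound of \cref{lem:SumOfBonuses}. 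This keeps only a $\sqrt\beta$ prefactor and yields $\tfrac{3}{1-\gamma}\cdot\widetilde O(\sqrt{\beta N d})=\widetilde O\big(d\sqrt N/(1-\gamma)^3\big)$ uniformly in $N$, $d$, $\gamma$. Note in particular that in the paper's accounting the indicator term is the \emph{dominant} one (it contributes the third factor of $\tfrac{1}{1-\gamma}$), so your intuition that it is negligible relative to the elliptic piece is exactly backwards. Substituting the linear bound for your squared bound, and keeping the rest of your argument, repairs the proof.
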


\subsection{Sample Complexity Analysis (Proof of theorem \ref{thm:main:MainResult})}
\label{sec:main:SampleComplexityAnalysis}

In order to bound the sample complexity for obtaining Theorem~\ref{thm:main:MainResult}, we need to bound the following quantities:
\begin{enumerate} %[label=\textit{\arabic*)}]
	\item the number of outer iterations $N$ to control the number of samples collected for the matrix $\widehat{\Sigma}^n$,
	\item the number of calls to \textsc{Solver} across $N$ iterations,
	\item the number of data collection rounds in \textsc{Solver} for critic fitting, and
	\item the number of samples in each dataset that \textsc{Solver} collects. We start with bounding the number of inner and outer iterations.
\end{enumerate}

\begin{lemma}[Convergence rate of \Alg{} ; \cref{prop:MainAnalysis} in appendix]
\label[lemma]{prop:main:MainAnalysis}
	With high probability \Alg{} , computes policies $\pi_1,\dots,\pi_N$ such that
	\begin{align*}
	\frac{1}{N}\sum_{n=1}^N\( \Vstar - V^{\pi^n} \)(s_0) \leq \underbrace{ \widetilde O\( \frac{1}{(1-\gamma)^3\sqrt{K}} \)}_{\text{Solver error}} + \text{Approx. error} + \underbrace{\widetilde O\(\frac{d}{(1-\gamma)^4\sqrt{N}} \)}_{\text{Average statistical uncertainty}}.
	\end{align*}
	where \emph{Approx. error} denotes the second term in \cref{prop:main:MainAnalysisIntermediate}.
\end{lemma}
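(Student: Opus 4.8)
The plan is to start from the per-iteration performance bound of \cref{prop:main:MainAnalysisIntermediate}, divide through by $(1-\gamma)$, and average the resulting inequality over the outer iterations $n=1,\dots,N$. This produces five averaged terms on the right-hand side---the solver error, the approximation error, the statistical error along $\pistar$, the negative bonus along $\pistar$, and the bonus along $\pi^n$---which I would treat one at a time, the key observation being that two of them cancel against each other.

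First I would dispose of the solver error. Since $\Ahat^n(s,\pistar) = \E_{a\sim\pistar(\cdot\mid s)}\Ahat^n(s,a) \le \sup_{a\in\ActionSpace}\Ahat^n(s,a)$, the online regret guarantee of \cref{prop:main:npg} gives $\sup_{s\in\K^n}\Ahat^n(s,\pistar) = \widetilde O\!\left(\frac{1}{(1-\gamma)^2\sqrt{K}}\right)$ for every $n$, hence the same bound survives the average over $n$. Carrying the extra factor $\frac{1}{1-\gamma}$ from dividing the left-hand side yields the claimed $\widetilde O\!\left(\frac{1}{(1-\gamma)^3\sqrt{K}}\right)$ solver-error term. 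The approximation error I would simply retain as the quantity labeled \emph{Approx.\ error}, deferring its control to the transfer-error condition (\cref{def:TransferError}) used in the full version.

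The crux is the interaction between the statistical-error term and the negative bonus term, both evaluated under $\pistar$ and restricted to known states via the indicator $\kn$. Here I invoke the one-sided error property of \cref{lem:main:sided}: with high probability, jointly over all $n$ and all $(s,a)\in\K^n$, one has $0 \le (Q^{n,\star}-\Qhat^n)(s,a) \le 2 b_\phi^n(s,a)$. Taking expectation under $(s,a)\sim\pistar$ with the indicator $\kn$ gives $\E_{(s,a)\sim\pistar}(Q^{n,\star}-\Qhat^n)(s,a)\kn \le \E_{(s,a)\sim\pistar}2b_\phi^n(s,a)\kn$, which exactly offsets the negative term $-\E_{(s,a)\sim\pistar}2b_\phi^n(s,a)\kn$, so their sum is nonpositive and both may be dropped. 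This cancellation---made possible by the cautiously optimistic $\tfrac12 b^n$ correction that produces the two-sided sandwich in \cref{lem:main:sided}---is precisely what removes the need for an $\ell_\infty$ accuracy guarantee over the known set and instead only asks for accuracy under $\pistar$.

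The only surviving stochastic term is the bonus along $\pi^n$ on the full space. Summing $\E_{(s,a)\sim\pi^n} b^n(s,a)$ over $n$ and applying the Concentration on Bonus lemma gives $\sum_{n=1}^N \E_{(s,a)\sim\pi^n} b^n(s,a) = \widetilde O\!\left(\frac{d\sqrt{N}}{(1-\gamma)^3}\right)$; dividing by $N$ and by the remaining $\frac{1}{1-\gamma}$ produces the average statistical uncertainty term $\widetilde O\!\left(\frac{d}{(1-\gamma)^4\sqrt{N}}\right)$. Collecting the three contributions that remain gives the stated bound. I expect the cancellation step to be the main obstacle: it hinges on the one-sided inequality holding \emph{uniformly} over $n$ and over the random, iteration-dependent known sets $\K^n$, which is exactly why \cref{lem:main:sided} must be a high-probability statement that is joint in $n$; the elliptic-potential and martingale argument underlying the bonus concentration is the other delicate point, since it is what converts a naive $O(N)$ accumulation into the $\sqrt{N}$ scaling needed for the final rate.
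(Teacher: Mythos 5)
Your proposal is correct and follows essentially the same route as the paper: it starts from the per-iteration bound of \cref{prop:main:MainAnalysisIntermediate}, bounds the solver error via the NPG regret of \cref{prop:main:npg}, cancels the statistical error against the negative bonus along $\pistar$ using the one-sided sandwich $0 \leq (Q^{n,\star}-\Qhat^n)(s,a) \leq 2b_\phi^n(s,a)$ of \cref{lem:main:sided} (this exact cancellation is how the paper disposes of these two terms), and controls the remaining bonus along $\pi^n$ with the elliptic-potential/martingale concentration, yielding the $\widetilde O\big(\frac{d}{(1-\gamma)^4\sqrt{N}}\big)$ term after dividing by $N(1-\gamma)$. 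You also correctly flag the two delicate points the paper handles—the joint-in-$n$ high-probability validity of the one-sided errors and the union bound combining it with the bonus concentration—so nothing essential is missing.
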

Using the above proposition we can give a proof of \cref{thm:main:MainResult}.
\begin{proof}(of \cref{thm:main:MainResult})
To ensure the average suboptimality gap is below $\epsilon$ we need to ensure:
\begin{align}
\label{eqn:main:KN}
	\frac{1}{N}\sum_{n=1}^N\( \Vstar - V^{\pi^n} \)(s_0) \leq \epsilon \qquad \longrightarrow \qquad K \approx \frac{1}{(1-\gamma)^6\epsilon^2} , \quad N \approx \frac{d^2}{(1-\gamma)^8\epsilon^2}.
\end{align}
Next we bound the number of calls to \textsc{Solver}; this is controlled by the lazy update (line \ref{line:lazyupdate} in \cref{alg:driver}) and the bonus structure.
\begin{lemma}[Number of solver calls; \cref{lem:Switches}]
%During its execution,
\Alg{} invokes \textsc{Solver} at most $O(d\log N)$ times.
\end{lemma}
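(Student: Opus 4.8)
The plan is to reduce the count of \textsc{Solver} invocations to a determinant-doubling argument, which is a standard tool from the linear bandit literature (e.g., \citep{Dani2008Stochastic,Abbasi11}). Observe that \textsc{Solver} is called in line~\ref{algline:call_solver} of \cref{alg:driver} exactly when the lazy-update test in line~\ref{line:lazyupdate} passes, i.e., at $n=1$ and subsequently whenever $\det(\widehat\Sigma^n) > 2\det(\widehat\Sigma^{\underline n})$, at which point $\underline n$ is reset to $n$. Thus if \textsc{Solver} is called $T$ times in total, the determinant at least doubles between each pair of consecutive calls, so that
\begin{align}
\det(\widehat\Sigma^{N+1}) \geq 2^{T-1}\det(\widehat\Sigma^1) = 2^{T-1}\lambda^d,
\end{align}
where we used $\widehat\Sigma^1 = \lambda I$ and the fact that $\det(\widehat\Sigma^n)$ is nondecreasing in $n$ because each update $\widehat\Sigma^{n+1} = \widehat\Sigma^n + \phi^n(\phi^n)^\top$ adds a positive semidefinite rank-one term (line~\ref{line:Sigmahat}).

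For the matching upper bound, I would control the final determinant via the trace. Since the feature normalization gives $\|\phi^n\|_2 \leq 1$, each rank-one update contributes at most $1$ to the trace, so $\mathrm{tr}(\widehat\Sigma^{N+1}) \leq \lambda d + N$. Applying the AM-GM inequality to the eigenvalues of $\widehat\Sigma^{N+1}$ then yields
\begin{align}
\det(\widehat\Sigma^{N+1}) \leq \Big(\frac{\mathrm{tr}(\widehat\Sigma^{N+1})}{d}\Big)^d \leq \Big(\lambda + \frac{N}{d}\Big)^d.
\end{align}

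Combining the two displays and taking logarithms gives $(T-1)\ln 2 \leq d\,\ln\!\big(1 + \tfrac{N}{\lambda d}\big)$, hence $T = O\!\big(d\log(1 + \tfrac{N}{\lambda d})\big)$. Substituting the choice $\lambda = \widetilde O(d)$ from \cref{thm:main:MainResult} and using that $N$ is polynomial in the problem parameters, the logarithmic factor is $O(\log N)$, yielding $T = O(d\log N)$ as claimed. I do not anticipate a genuine obstacle here: the only points requiring care are \emph{(i)} verifying that the determinant is monotone so that the per-step doubling composes into the global factor $2^{T-1}$, and \emph{(ii)} ensuring the normalization $\|\phi^n\|_2 \leq 1$ is in force for the trace bound, both of which follow directly from the algorithm's update rule and the stated feature normalization.
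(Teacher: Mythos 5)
Your proposal is correct and follows essentially the same route as the paper's proof of \cref{lem:Switches}: a determinant-doubling argument combined with the lower bound $\det(\widehat\Sigma^1)=\lambda^d$ and the trace/AM-GM upper bound $\det(\widehat\Sigma^{N+1}) \leq (\lambda + N/d)^d$ (which the paper obtains by citing Lemma 11 of \citep{Abbasi11} rather than deriving it explicitly as you do). The only cosmetic differences are that you spell out the determinant monotonicity and the AM-GM step, while the paper leaves them implicit or cited.
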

Every time it is invoked, \textsc{Solver} runs for $K$ iterations in \eqref{eqn:main:KN} and at every iteration it needs to receive an evaluation on the performance of the current policy ($\Qhat_k$ estimator from the critic, \cref{alg:ise}). Using importance sampling we can avoid collecting fresh Monte Carlo data for every policy. We control the number of data collection rounds based on importance sampling variance.
\begin{lemma}[Stability of the Importance Sampling Estimator; \cref{lem:bias-variance} and \eqref{eqn:kappa} in appendix]
The importance sampling ratio used in \cref{alg:ise} is bounded by a constant with high probability:
\begin{align*}
\text{If} \quad k - \underline k =  \widetilde O\(\sqrt{K}(1-\gamma)\), %\stackrel{\eqref{eqn:main:KN}}{\approx} \frac{1}{\epsilon}
\quad  \text{then}  \quad \Pi_{\tau=2}^t \frac{\pi_k(s_\tau,a_\tau)}{\pi_{\underline k}(s_\tau,a_\tau)} \leq 2, \qquad \forall \{s_1,a_1,\dots,s_t,a_t \}.
\end{align*}
\end{lemma}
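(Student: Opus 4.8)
The plan is to exploit the fact that successive iterates of the exponentiated-weights update in \cref{alg:solver} differ only slightly, so that the trajectory-level importance weight between $\pi_{\underline k}$ and $\pi_k$ stays bounded as long as the gap $k-\underline k$ is controlled. The first observation is that the policy is updated only on known states (line~\ref{line:update} of \cref{alg:solver}), so on any $s\notin\K^n$ we have $\pi_k(\cdot\mid s)=\pi_{\underline k}(\cdot\mid s)$ and the corresponding factor in the product equals $1$; hence only visits to \emph{known} states contribute to the importance weight. I would then telescope the softmax updates: for $s\in\K^n$,
\[
\frac{\pi_k(a\mid s)}{\pi_{\underline k}(a\mid s)}=\prod_{j=\underline k}^{k-1}\frac{\pi_{j+1}(a\mid s)}{\pi_j(a\mid s)}=\prod_{j=\underline k}^{k-1}\frac{e^{\eta\Qhat_j(s,a)}}{\sum_{a'}\pi_j(a'\mid s)\,e^{\eta\Qhat_j(s,a')}}.
\]

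Bounding the numerator of each factor by $e^{\eta\max_{a'}\Qhat_j(s,a')}$ and the denominator from below by $e^{\eta\min_{a'}\Qhat_j(s,a')}$, each factor is at most $\exp\!\big(\eta(\max_{a'}\Qhat_j(s,a')-\min_{a'}\Qhat_j(s,a'))\big)$, so the single state--action ratio is at most $e^{2\eta(k-\underline k)Q_{\max}}$, where $Q_{\max}$ denotes a bound on $|\Qhat_j|$ over known states. The crucial point is that on $\K^n$ the critic is \emph{deterministically} bounded: the regression in line~\ref{line:fitting} of \cref{alg:ise} enforces $\|\widehat w\|_2\le W$, we have $\|\phi\|_2\le1$, and by the definition of $\K^n$ in \eqref{eqn:known-set} the only active bonus $b^n_\phi$ satisfies $b^n_\phi<1$ inside $\K^n$ (while $b^n_\1$ vanishes there); hence $Q_{\max}=O(W)=\widetilde O\big(\tfrac{1}{(1-\gamma)^2}\big)$, with no probabilistic argument needed for this step.

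Multiplying over the (at most $t-1$) known-state steps of the trajectory then gives
\[
\prod_{\tau=2}^t\frac{\pi_k(a_\tau\mid s_\tau)}{\pi_{\underline k}(a_\tau\mid s_\tau)}\le \exp\!\big(2\eta\,(k-\underline k)\,Q_{\max}\,t\big),
\]
and it remains to control the trajectory length $t$ and substitute the parameter choices. I would bound $t$ by an effective horizon $H=\widetilde O\big(\tfrac{1}{1-\gamma}\big)$, which holds with high probability over the geometric Monte Carlo rollouts (simultaneously over all sampled trajectories and datasets by a union bound); this is where the ``with high probability'' qualifier enters. Plugging in $\eta=\widetilde O\big(\tfrac{\sqrt{\ln|\ActionSpace|}}{\sqrt K\,W}\big)$, $Q_{\max}=\widetilde O(W)$ and $H=\widetilde O(\tfrac{1}{1-\gamma})$, the exponent collapses to $\widetilde O\big(\tfrac{k-\underline k}{\sqrt K\,(1-\gamma)}\big)$, so enforcing $k-\underline k=\widetilde O(\sqrt K(1-\gamma))$ with a sufficiently small constant (the choice of $\kappa$ in \eqref{eqn:kappa}) drives the exponent below $\ln 2$ and yields the claimed bound of $2$.

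I expect the main obstacle to be the horizon control rather than the telescoping: because the bound must hold \emph{uniformly over all trajectories}, each factor larger than $1$ accumulates multiplicatively with no cancellation, so the length $t$ must genuinely be capped—hence the high-probability truncation of the rollouts is essential. Landing the constant exactly at $2$ requires balancing $\kappa$, $\eta$, $Q_{\max}$ and $H$ so that the extra polylogarithmic factor from the high-probability horizon bound is absorbed into the $\widetilde O(\sqrt K(1-\gamma))$ budget for $k-\underline k$; this balancing is precisely what \eqref{eqn:kappa} encodes. A final union bound over the outer iterations $n$ and the inner data-collection rounds makes the statement hold jointly, which is the form needed by the bias-variance analysis that consumes this stability bound.
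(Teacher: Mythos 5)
Your proposal is correct and takes essentially the same route as the paper's proof: your telescoping of the softmax updates is the paper's Policy Form lemma (\cref{lem:Update}), your per-factor bound using the deterministic critic bound $Q_{\max}=O(W)$ plays the role of the Policy Ratio lemma (\cref{lem:PolicyRatio}) together with the bound $|c(s,a)|\leq \eta(k-\underline k)(B+W)$, and your high-probability cap on the trajectory length via the geometric tail is exactly how \cref{lem:bias-variance} introduces the probabilistic qualifier before $\kappa$ is tuned in \eqref{eqn:kappa} to push the exponent below $\ln 2$. The only (immaterial) differences are cosmetic: you bound each factor by $e^{\eta(\max_{a'}\Qhat_j(s,a')-\min_{a'}\Qhat_j(s,a'))}$ where the paper bounds the collapsed exponent $\sup_{(s,a)}|c(s,a)|$, and you make explicit that unknown states contribute ratio $1$, which the paper handles implicitly.
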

In other words, after fresh Monte Carlo trajectories are collected, the importance sampling estimator can be used to make stable predictions of the value roughly for the future $\sqrt{K}(1-\gamma)$ policies.
This implies that we need to collect fresh data at most once every $K / ((1-\gamma)\sqrt{K})= \widetilde O\( \frac{\sqrt{K}}{1-\gamma}\)$ iterations.

It remains to specify the number of samples we collect in each round of data collection. Note that in our statistical analysis, we want the critic error to be bounded by $b^n(s,a)$, which roughly goes down as $O(1/\sqrt{n})$, as the matrix $\widehat{\Sigma}^n$ that defines the bonus grows linearly in $n$. This vague intuition can be formalized by appealing to standard linear regression analysis to show that we need to collect $O(n)$ Monte Carlo returns to fit the critic in outer iteration $n$.
 \begin{lemma}[Number of Monte Carlo Trajectories]
	When the Monte Carlo procedure is invoked at the outer iteration $\underline n$, at most $\underline n \leq N$ trajectories are collected.
 \end{lemma}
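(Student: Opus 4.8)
The plan is to establish this as a direct structural consequence of the algorithm's specification rather than a statistical sufficiency statement: the bound simply records the dataset cardinality that \cref{alg:Sampler} is hard-coded to use, together with the trivial fact that the outer anchor never exceeds $N$. First I would recall how the Monte Carlo subroutine is wired into the outer loop. The \textsc{Solver} (\cref{alg:solver}) is invoked on line~\ref{algline:call_solver} of \cref{alg:driver} only at those outer iterations where the determinant test on line~\ref{line:lazyupdate} fires, and at exactly those iterations the anchor is reset by $\underline n \gets n$. Throughout a single \textsc{Solver} run the triple $(\K^{\underline n}, b^{\underline n}, \widehat\Sigma^{\underline n})$ is frozen, since \cref{alg:driver} performs at most one \textsc{FeatureSampler} update to $\widehat\Sigma$ per outer iteration (line~\ref{line:Sigmahat}) and does not re-enter the branch that edits the bonus until the next determinant doubling. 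Consequently the outer index $\underline n$ labelling the current \textsc{Solver} run is well defined, and by the definition of \cref{alg:Sampler} each Monte Carlo data-collection round inside that run draws a dataset $\mathcal D$ of cardinality at most $\underline n$.

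Second I would close the argument with the anchor invariant. Because $\underline n$ is only ever assigned the value of the running outer index $n$ on line~\ref{line:lazyupdate}, and the outer loop of \cref{alg:driver} ranges over $n \in \{1,\dots,N\}$, we have $\underline n \leq n \leq N$ at every point in the execution. Combining the two observations, each invocation of the Monte Carlo procedure at outer iteration $\underline n$ collects at most $\underline n \leq N$ trajectories, which is exactly the claim.

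The main obstacle here is not analytic depth --- there is essentially none, as the conclusion is read off from the pseudocode --- but bookkeeping precision. One must attribute the dataset size to the frozen outer anchor $\underline n$ rather than to the inner data-collection anchor $\underline k$ of \cref{alg:solver} or to some later running index, and verify that $\widehat\Sigma$, and hence $\underline n$, is genuinely constant across a single \textsc{Solver} invocation so that the phrase ``the outer iteration $\underline n$'' is unambiguous. The separate and genuinely substantive question --- why $\underline n$ samples suffice to drive the critic error below the bonus $b^{\underline n}$ --- is the statistical matching handled by \cref{lem:main:sided}, and is deliberately not invoked here, as this lemma only counts the trajectories the algorithm is defined to draw.
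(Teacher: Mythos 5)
Your proof is correct and takes essentially the same route as the paper, which treats this lemma as a direct consequence of the pseudocode: \textsc{MonteCarlo} (\cref{alg:Sampler}) performs exactly one rollout per policy in the cover $\pi_{cov}^{\underline n} = \pi^{0:\underline n - 1}$, which has $\underline n \leq N$ members, and the cover is frozen for the duration of the \textsc{Solver} call. You are also right to defer the statistical question of why $\underline n$ samples suffice --- the paper handles that separately in the critic analysis, and the surrounding text in the sample-complexity proof only motivates (rather than proves) the $O(n)$ choice.
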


Finally, $\widetilde O(\frac{\log(1/\delta)}{1-\gamma})$ is a uniform high probability bound on the length of each Monte Carlo trajectory, which implies the total sample complexity of \Alg{}  is
 \begin{align*}
\underbrace{\vphantom{\widetilde O\(\frac{\sqrt{K}}{(1-\gamma)}\)}\widetilde O(d) \vphantom{\widetilde O\(\frac{1}{\epsilon^2}\)}}_{\text{\# calls to \cref{alg:solver}}}
	\times 	\underbrace{\widetilde O\(\frac{\sqrt{K}}{1-\gamma}\)}_{\text{\# calls to \cref{alg:Sampler}}}
	\times \underbrace{\widetilde O\vphantom{\widetilde O\(\frac{\sqrt{K}}{1-\gamma}\)} \(N\)}_{\text{\# Monte Carlo trajectories}}
	\times \underbrace{\widetilde O\vphantom{\widetilde O\(\frac{\sqrt{K}}{(1-\gamma)}\)} \(\frac{1}{1-\gamma}\)}_{\text{\# samples per trajectory}}
	= \quad \widetilde O\( \frac{d^3}{(1-\gamma)^{13}\epsilon^3} \).
\end{align*}
\end{proof}

\section{Discussion}

In this paper, we advance the theoretical understanding of sample-efficient policy optimization methods with strategic exploration and robustness to model misspecification. While we carry out our analysis for a specific algorithm, we expect the insights developed here for sample complexity improvements to be more broadly applicable. % in both theory and practice.
For instance, the exponentiated weight updates in our policy optimization subroutine can generally be substituted with other no-regret algorithms from the Follow The Regularized Leader family. As usual, we expect different choices to offer varying trade-offs in their dependence on problem parameters; with reasonable choices, they are still amenable to the importance sampling based data reuse.
Similarly, the lazy updates for the bonus are generically applicable. Note that our algorithmic choices strike a particular balance of a very infrequent bonus update and a fairly accurate optimization. Prior works in different, but related problems~\citep{agarwal2014taming} have shown that often there is flexibility in these choices, such as more regular updates followed by coarser optimization, which might be empirically preferable.

Perhaps the most important outstanding question not addressed here is how to close the gap between the $\widetilde{O}(1/\epsilon^2)$ sample complexity that is known to be achievable in linear MDPs (see e.g.~\citet{jin2020provably}) and our worse dependence of $\widetilde{O}(1/\epsilon^3)$.
There appears to be a trade-off in terms of the allowable assumption on model misspecification, and approaches based on Least Square Policy Evaluation for data reuse (such as LSVI-UCB) fail to work under our transfer error assumption and the special cases in~\citet{agarwal2020pc}. Whether this trade-off is fundamental, or if a single method can be developed to be robust to transfer error, while enjoying an optimal sample complexity guarantee in the absence of misspecification is an interesting direction for future work.

\section*{Acknowledgment}
Most of the work was completed while Andrea Zanette was interning at Microsoft Research and the remaining part of the work was done while Andrea Zanette was visiting the Simons Institute for the Theory of Computing.

\acks{The authors are grateful to the reviewers for their helpful comments.}

\bibliography{rl}

\appendix

\onecolumn
\newpage
\section{Remaining Algorithm Components}
\begin{algorithm}[H]
	\caption{\textsc{MonteCarlo}($\pi_{1:q},\pi,b$)}
	\label{alg:Sampler}
	\begin{algorithmic}[1]
	\STATE \textbf{Inputs}: Policy cover $\pi^{1:q}$, evaluation policy $\pi$, additional reward $b$
	\STATE $\mathcal D = \emptyset$
	\FOR{iteration $i = 1,\dots,q$}
	\STATE Sample $j$ uniformly at random in $[q]$
	\STATE Sample $\tau\geq 1$ with probability $\gamma^{\tau-1}(1-\gamma)$
	\STATE Execute $\pi_j$ for $\tau-1$ steps from a sampled initial state, giving state $s$
	\STATE Sample action $a \sim \pi_j(\cdot \mid s)$
	\STATE Sample $h \geq 1$ with probability $\gamma^{h-1}(1-\gamma)$
	\STATE Continue the rollout from $(s,a)$ by executing $\pi$ for $h-1$ steps, \\giving the rollout $\mathcal P = \{(s_1,a_1,\dots,s_h,a_h)\}$ where $(s_1,a_1) = (s,a)$
	\STATE $G = \frac{1}{1-\gamma}[r(s_h,a_h) + b(s_h,a_h)]$
	\STATE $\mathcal D \leftarrow \mathcal D \cup \{(\phi(s,a),\mathcal P,G, b(s,a)) \} $
	\ENDFOR
	\STATE \textbf{return} $\mathcal D$
	\normalsize
	\end{algorithmic}
\end{algorithm}

\begin{algorithm}[H]
\caption{\textsc{FeatureSampler}($\pi$)}
	\label{alg:FeatureSampler}
	\begin{algorithmic}[1]
	\STATE Sample $\tau \geq 1$ with probability $\gamma^{\tau-1}(1-\gamma)$
	\STATE Execute $\pi$ for $\tau-1$ steps from a sampled initial state, giving state $s$
	\STATE Sample action $a \sim \pi(\cdot \mid s)$
	\STATE \textbf{return} $\phi(s,a)$
	\normalsize
	\end{algorithmic}
\end{algorithm}

\newpage
\section{Additional Related Literature}
\label{sec:Literature}
Exploration has been widely studied in the tabular setting \citep{Azar17,zanette2019tighter,efroni2019tight,jin2018q,dann2019policy,zhang2020reinforcement,russo2019worst}, but obtaining formal guarantees for exploration with function approximation is a challenge even in the linear case due to recent lower bounds \citep{du2019good,weisz2020exponential,zanette2020exponential,wang2020statistical}. When the action-value function is only approximately linear, several ideas from tabular exploration and linear bandits \citep{lattimore2020bandit} have been combined to obtain provably efficient algorithms in low-rank MDPs  \citep{yang2020reinforcement,zanette2020frequentist,jin2020provably} and their extensions \citep{wang2019optimism,wang2020provably}. Minimax regret bounds for under little or zero inherent Bellman error (a superset of low-rank MDPs) is given in \citep{zanette2020learning} and a computationally tractable algorithm for that setting has recently been proposed \citep{zanette2020provably}. No inherent Bellman error is a subset of a more general framework of MDPs with low Bellman rank \citep{jiang17contextual} where the inherent Bellman error is allowed to have a low rank structure but no computationally tractable algorithm are known for such general setting \citep{dann2018oracle}.

Extensions of the linear or low-rank MDP models to kernel and neural function approximation have recently been presented in \citet{yang2020function}. Other linear transition models recently considered include those presented by \citep{ayoub2020model,zhou2020provably}; for the latter, a minimax algorithm has recently been proposed \citep{zhou2020nearly}.

If linearity holds only for the optimal action-value function  and one is only interested in identifying an optimal policy (as opposed to a near optimal one), then \citep{du2020agnostic} provide an algorithm for such setting, although a sample complexity proportional to the inverse gap (which can be exponentially small) must be suffered. Deterministic systems with linear value functions are also learnable  in finite horizon by just assuming realizability \citep{WR13}.

Finally there is a rich literature on the convergence properties of policy gradient methods \citep{kakade2002approximately,azar2011dynamic,scherrer2014local,neu2017unified,even2009online,geist2019theory,liu2019neural,abbasi2019politex,bhandari2019global,fazel2018global,agarwal2020optimality} although these do not address the exploration setting. Notable exceptions include:  \citep{shani2020optimistic} on tabular domains and \citep{cai2020provably} on a linear MDP model different than the one we consider here and the aforementioned work of~\citet{agarwal2020pc}.

\newpage
\section{Additional Notation and MDP Construction}
\label{sec:Notation}
In table \cref{tab:MainNotation} we define some frequently used symbols that will be used in the following analyses.
\renewcommand{\arraystretch}{1.5}
\begin{longtable}{l l p{9cm}}
\caption{Symbols}\\
\hline
$B$ & $ \defeq $ & $\frac{3}{1-\gamma}$ \\
$G_{max}$ & $ \defeq $ & $\frac{2+B}{(1-\gamma)}$ \\
$W$ & $ \defeq $ & $2 G_{max}$  \\
$\kappa$ & $\defeq$ & see \cref{eqn:kappa} \\
$\lambda$ & $\defeq$ &  $\lambda_{min}$,  see \cref{eqn:lambdamin} \\
$\beta$ & $\defeq$ &  see \cref{eqn:beta} \\
$\mathcal B$ & $\defeq$ & $\{ v \in \R^d \mid \| v \|_2 \leq 1 \}$ \\
$t_{max}$ & $\defeq$ & $ \frac{\ln(16N^2K/\delta)}{1-\gamma}$ (maximum  high probability trajectory length \fullref{lem:TrajectoryBoundness})
\label{tab:MainNotation}
\end{longtable}

We denote with $n$ the \emph{outer} iterations (see \cref{alg:driver}) and with $k$ the \emph{inner} iterations (see \cref{alg:solver}). We use the outer iteration index $n$ as superscript and the inner iteration index $k$ as subscript to indicate that a certain quantity that is computed in the outer iteration $n$ and the inner iteration $k$, respectively.

\paragraph{Transfer error on linear MDPs}
On linear MDPs, the transfer error in \cref{def:TransferError} is exactly zero, i.e., $\mathcal E = 0$. This follows by combining Claim D.1 with Lemma D.1 in \citep{agarwal2020pc}.

\paragraph{Average policy and cover}
In the analysis we use the concept of average policy or policy mixture.
\begin{definition}[Average Policy]
\label[definition]{def:PolicyMixture}
	Given policies $\pi^0,\dots,\pi^{n-1}$ let the average policy  $\pi^{0:n-1}$ be defined as follows: sample $i \in \{0,1,\dots,n-1 \}$ with uniform probability and the follow $\pi^i$ for the episode.
\end{definition}

Let $d^{\pi}$ be the distribution over state-actions induced by policy $\pi$, and let $\rho^n_{cov} = \frac{1}{n} \sum_{i=0}^{n-1} d^{\pi_i}$ be that induced by $\pi^{0:n-1}$.

\paragraph{Remark on expressions containing mixture policies}
We highlight that when the mixture policy $\pi^n$ appears in an expression, for notational convenience it is intended that the whole expression is averaged. For example, when writing the expected bonus $\E_{s \sim \pi^n} b(s,\pi^n) = \frac{1}{K}\sum_{k=0}^{K-1}\E_{s \sim \pi^k_n}b(s,\pi^n_k)$. This is to be consistent with the way the mixture policies are defined (and the way the algorithm operates), where an index $j$ in $\{0,\dots,K-1\}$ is sampled uniformly at random and then policy $\pi_j$ is followed for the full episode; to be consistent, all quantities must then refer to the same policy $\pi_j$, for example $\widehat A^n(s,a) = \widehat Q^n(s,a) - \widehat Q^n(s,\pi^n) = \sum_{k=0}^{K-1}\( \widehat Q^{n}_j(s,a) - \widehat Q^{n}_j(s,\pi^n_{j})\) = \sum_{k=0}^{K-1}\( \widehat Q^{n}_j(s,a) - \widehat V^{n}_j(s)\)$.

\paragraph{Known states}
We define the set of known state-actions in a certain outer episode $n$ (this stays constant for all inner iterations $k$ of \cref{alg:solver} as $n$ is fixed) the following set (we overload the notation as there is no possibility of confusion)

\begin{align}
\K^n & \defeq \Big\{ s\in \StateSpace  \mid \forall a\in \ActionSpace,  \sqrt{\beta}\|\phi(s,a)\|_{(\widehat \Sigma^n)^{-1}} < 1 \Big\} \\
\K^n & \defeq \Big\{ (s,a)\in \StateSpace\times \ActionSpace \mid   \sqrt{\beta}\|\phi(s,a)\|_{(\widehat \Sigma^n)^{-1}} < 1 \Big\}.
\end{align}

\paragraph{Inner policies}
The inner policies $\pi_0,\pi_1,\dots$ are those computed by \cref{alg:solver} and are defined as:
\begin{align} \label{eq:policy definition}
\forall s \in \K^n: & \quad \quad \quad \pi_{k+1}(\cdot \mid s) \propto \pi_{k}( \cdot \mid s)e^{\eta \Qhat_{k}( \cdot \mid s)} \\
\forall s \notin \K^n: & \quad \quad \quad \pi_{k+1}(\cdot \mid s) =  \text{Unif}(\{ a \mid (s,a) \not \in \K^n\})
\end{align}
The initialization is
\begin{align}
\forall s \in \K^n: & \quad \quad \quad  \pi_0(\cdot \mid s ) = \text{Unif}(\mathcal A)  \\
	\forall s \notin \K^n: & \quad \quad \quad \pi_0(\cdot \mid s )  = \text{Unif}(\{ a \mid (s,a) \not \in \K^n\})
\end{align}
\paragraph{Outer policies}
The outer policies $\pi^1,\pi^2,\dots$ are those maintained by \cref{alg:driver} and they are a mixture of the inner policies computed by the \textsc{Solver}. In particular, when the \textsc{Solver} terminates it returns a mixture of policies $\pi_{0:K-1}$, and $\pi^n$ is set to be equivalent to that mixture.
\paragraph{Bonus and optimistic MDP}
Consider $\M = (\StateSpace,\ActionSpace,p,r,\gamma)$.
In iteration $n$ we construct an optimistic MDP with bonus $b^n: \StateSpace \times \ActionSpace \rightarrow \R$ defined as $\M^n = (\StateSpace,\ActionSpace \cup \{a^\dagger\},p,r + b^n,\gamma)$.
The bonus function reads as
\begin{align}
b^n_{\phi}(s,a) & \defeq \sqrt{\beta}\|\phi(s,a)\|_{(\widehat \Sigma^n)^{-1}} \kn \\
b^n_\1(s,a) & \defeq \frac{3}{1-\gamma}\1\{(s,a) \not \in \K^n \} \\
b^n(s,a) & \defeq 2b^n_\phi(s,a) + b^n_\1(s,a).
\end{align}

The bonus behaves as follows. In any state, if $s \in \K^n$ then $b^n(s,a) = 2b^n_\phi(s,a)$ and if $(s,a) \not \in \K^n$ then $b^n(s,a) = b^n_\1(s,a)$. In particular, the bonus are in the range $[0,2)$ if the state-action is known, and otherwise the bonus is deterministically set to $\frac{3}{(1-\gamma)}$.
Notice that a state-action $(s,a)$ such that $s\notin\K^n$ but $(s,a)\in\K^n$ has zero bonus (or generally we can set an arbitrarily value here); the specific bonus value at such a state-action is irrelevant as the algorithm's policy $\pi^n$ by construction (cf. \cref{eq:policy definition}) always takes an action with the indicator bonus $b^n_\1(s,a)$ if the state $s\notin \K^n$.

The optimistic MDP has an extra action $a^\dagger$ that self loops in the current state with probability $1$ with a reward $r(s,a^\dagger) =3$. The bonus function $b^n(s,a^\dagger) = 0$. (The agent is not even aware of the existence of $a^\dagger$; this extra action  $a^\dagger$ is introduced purely for analysis.)
Denote the state-action value function of a generic policy $\pi$ on $\M^n$ with $Q^{n,\pi}(s,a) = \frac{1}{1-\gamma}\E_{(s',a')\sim\pi \mid (s,a)} [r(s',a') + b^n(s',a')]$.
The state value function is denoted with $V^{n,\pi}(s) = \E_{a\sim \pi(\cdot \mid s)}Q^{n,\pi}(s,a)$.

Let $\pi^n$ be the policy identified by the agent in the outer episode $n$ (the policy returned by \cref{alg:solver}).
We define $Q^n = Q^{n,\pi^n}$, $V^n = V^{n,\pi^n}$ for brevity.

\paragraph{Approximators}
On the known states, in outer iteration $n$ and inner iteration $k$, we define the best $Q$-approximator $Q^{n,\star}_k$ and the agent's approximator $\widehat Q^{n}_k$ as
\begin{align}
\text{if} \; s \in \K^n: \;
\begin{cases}
Q^{n,\star}_k(s,a) & = \phi(s,a)^\top w_k^{n,\star} + 2b^n_\phi(s,a) \\
\Qhat^{n}_k(s,a) & =
\phi(s,a)^\top \widehat w_k^n + b^n_\phi(s,a)
\end{cases}
\end{align}
Otherwise, we set them to be the same as $b^n(s,a)$.
We omit either $n$ or $k$ when there is no possibility of confusion.

\newpage
\section{Main Analysis}
We start our analysis by showing some auxiliary lemmas which we will later use to prove \cref{prop:MainAnalysis}. In particular, \cref{lem:PartialOptimism} and \cref{lem:Aneg} are variations of the corresponding lemmas in \citep{agarwal2020pc}.

We start by recalling the performance difference lemma (e.g., \citep{kakade2002approximately}) which states that for any two policies $\pi,\pi'$ we can write
\begin{align}
	(V^{\pi'} - V^{\pi})(s_0) = \frac{1}{1-\gamma}\E_{(s,a) \sim \pi} A^{\pi'}(s,a)
\end{align}
where $A^\pi$ is the advantage function associated with $\pi$.

The following lemma is similar to lemma B.2 \citep{agarwal2020pc}.
\begin{lemma}[Partial optimism]
\label[lemma]{lem:PartialOptimism}
Fix a policy $\pitilde$ that never takes $a^\dagger$. Define the policy $\pitilde^n$ on $\M^n$ such that $\pitilde^n(\cdot \mid s) = \pitilde(\cdot \mid s)$ if $s \in \K^n$ and $\pitilde^n(a^\dagger \mid s) = 1$ if $s \not \in \K^n$.
In any episode $n$ it holds that
\begin{align}
V^{\pitilde}(s_0) + \frac{1}{1-\gamma}\E_{s\sim\pitilde \mid s_0} 2b^{n}_\phi(s,\pitilde) \leq V^{n,\pitilde^n}(s_0).\end{align}
\end{lemma}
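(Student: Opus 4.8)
The plan is to use the performance difference lemma applied to the two policies $\pitilde^n$ and $\pitilde$ on the optimistic MDP $\M^n$, but read in the reverse direction so that the advantage is evaluated along $\pitilde$. Concretely, I would compare $V^{n,\pitilde^n}(s_0)$ against $V^{n,\pitilde}(s_0)$, where the latter is the value of the original (never-$a^\dagger$) comparator on the bonus-augmented MDP. Since $\pitilde^n$ agrees with $\pitilde$ exactly on the known set $\K^n$ and only diverges on unknown states (where it deterministically selects the self-looping action $a^\dagger$), the key structural fact I would exploit is that $a^\dagger$ yields reward $3 > \frac{3}{1-\gamma}\cdot(1-\gamma)$ per step and loops forever, so its value $\frac{3}{1-\gamma}$ dominates whatever $\pitilde$ could achieve from an unknown state. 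This should give $V^{n,\pitilde^n}(s_0) \geq V^{n,\pitilde}(s_0)$ as a first inequality, capturing the ``optimism'' contributed by redirecting mass on unknown states to the maximally-rewarding dummy action.

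The second ingredient is to lower bound $V^{n,\pitilde}(s_0)$ itself. Because $\pitilde$ never takes $a^\dagger$ and the bonus-augmented reward is $r + b^n$ with $b^n(s,a) = 2b^n_\phi(s,a) + b^n_\1(s,a) \geq 2b^n_\phi(s,a)$ pointwise, I would write $V^{n,\pitilde}(s_0) = V^{\pitilde}(s_0;r) + \frac{1}{1-\gamma}\E_{(s,a)\sim\pitilde\mid s_0} b^n(s,a)$, and then drop the nonnegative indicator-bonus contribution and retain only the $2b^n_\phi$ piece. Since $b^n_\phi$ is supported on $\K^n$ (it carries the indicator $\kn$), the surviving term is exactly $\frac{1}{1-\gamma}\E_{s\sim\pitilde\mid s_0} 2b^n_\phi(s,\pitilde)$, which is what appears in the claim. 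Chaining the two bounds yields $V^{\pitilde}(s_0) + \frac{1}{1-\gamma}\E_{s\sim\pitilde\mid s_0} 2b^n_\phi(s,\pitilde) \leq V^{n,\pitilde^n}(s_0)$, the desired statement.

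I expect the main obstacle to be making the first inequality fully rigorous, i.e. verifying that replacing $\pitilde$'s behavior on unknown states with $a^\dagger$ can only \emph{increase} the augmented value. The subtlety is that $\pitilde$ and $\pitilde^n$ induce genuinely different state-visitation distributions once an unknown state is reached, so a naive pointwise reward comparison is not enough; I would instead argue by a one-step (or coupling) comparison along trajectories, noting that the two policies produce identical distributions over $(s,a)$ until the first visit to an unknown state, and from that point $\pitilde^n$ accrues the discounted constant value $\frac{3}{1-\gamma}$, which upper bounds the discounted augmented return $\frac{2+B}{1-\gamma}$ of any continuation under $\pitilde$ since the augmented reward is bounded by $2 + \frac{3}{1-\gamma}$ per step. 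Care is needed with the exact constants in the bonus range and with the fact that the indicator bonus $b^n_\1 = \frac{3}{1-\gamma}$ makes unknown-state rewards large under $\pitilde$ too, so I would confirm that the self-loop value still dominates — this constant-chasing, together with cleanly handling the boundary state-actions where $s\notin\K^n$ but $(s,a)\in\K^n$, is where the real work lies.
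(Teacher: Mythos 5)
Your overall two-step skeleton (compare $\pitilde^n$ to $\pitilde$ on $\M^n$, then decompose $V^{n,\pitilde}$ into true value plus bonuses) matches the paper's, but your first inequality is false, and it fails exactly at the point you flagged as the ``main obstacle.'' The claim $V^{n,\pitilde^n}(s_0) \geq V^{n,\pitilde}(s_0)$ does not hold in general: under $\pitilde$, every visit to an unknown state-action collects the augmented reward $r + b^n_\1 \geq \frac{3}{1-\gamma}$, and this can be collected \emph{repeatedly}, so the continuation value of $\pitilde$ on $\M^n$ can be as large as roughly $\frac{3}{(1-\gamma)^2}$, which dwarfs the self-loop value $\frac{3}{1-\gamma}$ of $a^\dagger$. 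Concretely: take a single unknown state whose only non-dummy action self-loops with $r = 0$; then $V^{n,\pitilde}(s) = \frac{3}{(1-\gamma)^2} > \frac{3}{1-\gamma} = V^{n,\pitilde^n}(s)$. Your own constant check is internally inconsistent on this point --- you assert that $\frac{3}{1-\gamma}$ upper bounds the continuation return $\frac{2+B}{1-\gamma}$, but with $B = \frac{3}{1-\gamma}$ that quantity equals $\frac{2}{1-\gamma} + \frac{3}{(1-\gamma)^2}$, which is strictly larger. No coupling argument can rescue the inequality as stated, because it is simply false.

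The fix, which is what the paper does, is to prove only the weaker bound $V^{n,\pitilde^n}(s_0) \geq V^{n,\pitilde}(s_0) - \frac{1}{1-\gamma}\E_{(s,a)\sim\pitilde\mid s_0} b^n_\1(s,a)$, and then \emph{not} throw away the indicator bonus in your second step but cancel it exactly: $V^{n,\pitilde}(s_0) - \frac{1}{1-\gamma}\E\, b^n_\1 = V^{\pitilde}(s_0) + \frac{1}{1-\gamma}\E\, (b^n - b^n_\1) = V^{\pitilde}(s_0) + \frac{1}{1-\gamma}\E\, 2b^n_\phi$. In other words, you spend your slack in the wrong place: dropping $b^n_\1 \geq 0$ in step two is a needless giveaway, and that is precisely the term you must retain to absorb the deficit in step one. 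The weak bound itself follows from the performance difference lemma applied to the pair $(\pitilde^n, \pitilde)$: the deviation terms are supported on unknown states, where $Q^{n,\pitilde^n}(s,\pitilde^n) = \frac{3}{1-\gamma}$ while $Q^{n,\pitilde^n}(s,\pitilde) \leq 1 + 2 + b^n_\1(s,\pitilde) + \frac{3\gamma}{1-\gamma} = \frac{3}{1-\gamma} + b^n_\1(s,\pitilde)$. The crucial feature of the PDL route --- and the reason it succeeds where your trajectory coupling fails --- is that the continuation value after the one-step deviation is always evaluated under $\pitilde^n$ (hence bounded by $\frac{3}{1-\gamma}$), never under $\pitilde$, whose augmented return is unbounded by that quantity.
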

\begin{proof}
Notice that $\pitilde^n$ always takes an action where $b^n_\1(s,a) = 0$. A quick computation gives:
\begin{equation}
\label{eqn:Vmax}
V^{n,\pitilde^n}(s) \leq \frac{3}{1-\gamma}.
\end{equation}
and in particular, if $s\not\in\K^n$ then $V^{n,\pitilde^n}(s) = \frac{3}{1-\gamma}$ as the policy self-loops in $s$ by taking $a^\dagger$ there.
Using the performance difference lemma we get:
\begin{align}
& (1-\gamma) \left( V^{n,\pitilde^n}(s_0) -  V^{n,\pitilde}(s_0) \right)= \\
& = \E_{(s,a) \sim \pitilde \mid s_0} \Big[Q^{n,\pitilde^n}(s,\pitilde^n) - Q^{n,\pitilde^n}(s,\pitilde) \Big] \\
& =  \E_{(s,a) \sim \pitilde \mid s_0} \Big[ \left(Q^{n,\pitilde^n}(s,\pitilde^n) - Q^{n,\pitilde^n}(s,\pitilde) \right) \unkn \Big] \\
& =  \E_{(s,a) \sim \pitilde \mid s_0} \Big[ \left(\frac{3}{1-\gamma} - Q^{n,\pitilde^n}(s,\pitilde) \right) \unkn \Big]\\
& =  \E_{(s,a) \sim \pitilde \mid s_0} \Big[ \left( \frac{3}{1-\gamma} - \underbrace{r(s,\pitilde)}_{\leq 1} - \underbrace{2b^n_\phi(s,\pitilde)}_{\leq 2} - b^n_\1(s,\pitilde) - \underbrace{\gamma \E_{s' \sim p(s,\pitilde)}V^{n,\pitilde^n}(s')}_{\leq \frac{3\gamma}{1-\gamma} \; \text{by \cref{eqn:Vmax}}} \right)\unkn  \Big].
\end{align}
We have $ \underbrace{r(s,\pitilde)}_{\leq 1} + \underbrace{2b^n_\phi(s,\pitilde)}_{\leq 2} + \underbrace{\gamma \E_{s' \sim p(s,\pitilde)}V^{n,\pitilde^n}(s')}_{\leq \frac{3\gamma}{1-\gamma} \; \text{by \cref{eqn:Vmax}}}  \leq 3 + \frac{3\gamma}{1-\gamma} \leq \frac{3}{1-\gamma}$. Continuing the chain above:
\begin{align}
& \geq  \E_{(s,a) \sim \pitilde \mid s_0} \Big[- b^n_\1(s,\pitilde) \unkn \Big] \\
& = \E_{(s,a) \sim \pitilde \mid s_0} \Big[- b^n_\1(s,\pitilde) \Big].
\end{align}
Thus,
\begin{align}
	V^{n,\pitilde^n}(s_0) &\geq V^{n,\pitilde}(s_0) - \frac{1}{1-\gamma}\E_{(s,a) \sim \pitilde \mid s_0}  b^n_\1(s,a)\\
				&= V^{\pitilde}(s_0) +\frac{1}{1-\gamma}\E_{(s,a)\sim \pitilde \mid s_0}b^n(s,a) - \frac{1}{1-\gamma}\E_{(s,a) \sim \pitilde \mid s_0}  b^n_\1(s,a)\\
				&= V^{\pitilde}(s_0) + \frac{1}{1-\gamma}\E_{(s,a)\sim \pitilde \mid s_0}2b^n_\phi(s,a)
\end{align}
\end{proof}

The following lemma is similar to lemma A.1 in \citep{agarwal2020pc}.
\begin{lemma}[Negative Advantage]
\label[lemma]{lem:Aneg}
We have $$A^{n}(s,\pitilde^n)\unkn \leq 0.$$
\end{lemma}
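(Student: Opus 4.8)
The plan is to reduce the claim to a one-step Bellman expansion on the unknown states, since the indicator $\unkn$ makes the inequality vacuous on $\K^n$. Fix any $s \notin \K^n$. By the definition of $\pitilde^n$ we have $\pitilde^n(a^\dagger \mid s) = 1$, so the advantage of the agent's policy $\pi^n$ evaluated under $\pitilde^n$ collapses to the single action $a^\dagger$:
\[
A^n(s,\pitilde^n) = Q^n(s,\pitilde^n) - V^n(s) = Q^{n,\pi^n}(s,a^\dagger) - V^n(s).
\]
First I would evaluate $Q^{n,\pi^n}(s,a^\dagger)$ using the recursion for $\M^n$. Since $a^\dagger$ self-loops in $s$ with reward $r(s,a^\dagger) = 3$ and carries no bonus ($b^n(s,a^\dagger) = 0$), one step of $a^\dagger$ followed by $\pi^n$ gives $Q^{n,\pi^n}(s,a^\dagger) = 3 + \gamma V^{n,\pi^n}(s) = 3 + \gamma V^n(s)$. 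Substituting this into the display reduces the entire claim to the scalar inequality $V^n(s) \geq \tfrac{3}{1-\gamma}$, because then $A^n(s,\pitilde^n) = 3 - (1-\gamma)V^n(s) \leq 0$.

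The second and key step is to establish that lower bound on the agent's own value at unknown states. Here I would invoke the construction of $\pi^n$ on $\StateSpace \setminus \K^n$ (cf. \cref{eq:policy definition}): at any $s \notin \K^n$ the policy $\pi^n$ only selects actions $a$ with $(s,a) \notin \K^n$, and by the definition of the bonus every such action carries $b^n(s,a) = b^n_\1(s,a) = \tfrac{3}{1-\gamma}$. Expanding one step of the Bellman recursion,
\[
V^n(s) = \E_{a\sim\pi^n(\cdot\mid s)}\Big[ r(s,a) + b^n(s,a) + \gamma\,\E_{s'\sim p(\cdot\mid s,a)} V^n(s') \Big] \geq \tfrac{3}{1-\gamma},
\]
where I use $r \geq 0$, that the bonus contributes exactly $\tfrac{3}{1-\gamma}$, and that $V^n \geq 0$ (all rewards and bonuses in $\M^n$ are nonnegative). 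Combining with $A^n(s,\pitilde^n) = 3 - (1-\gamma)V^n(s)$ yields $A^n(s,\pitilde^n) \leq 0$ for every $s \notin \K^n$, i.e. $A^n(s,\pitilde^n)\unkn \leq 0$.

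I do not expect a serious obstacle: the statement follows once the two bookkeeping facts are pinned down, namely (i) the $a^\dagger$ self-loop makes $Q^{n,\pi^n}(s,a^\dagger)$ the clean affine function $3 + \gamma V^n(s)$ of the agent's value, and (ii) the \textsc{R-max}-style bonus $\tfrac{3}{1-\gamma}$ that $\pi^n$ is forced to collect at unknown states drives $V^n(s) \geq \tfrac{3}{1-\gamma}$. The only places requiring care are the sign conventions in the advantage and the nonnegativity of $V^n$, which relies on all rewards (including $r(s,a^\dagger)=3$) and all bonus terms being nonnegative; both are immediate from the definitions in the MDP construction.
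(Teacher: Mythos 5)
Your proof is correct and follows essentially the same route as the paper: evaluate $Q^{n}(s,\pitilde^n)=3+\gamma V^n(s)$ via the $a^\dagger$ self-loop, lower-bound $V^n(s)\geq \tfrac{3}{1-\gamma}$ using the fact that $\pi^n$ at unknown states only takes actions carrying the indicator bonus $b^n_\1=\tfrac{3}{1-\gamma}$, and combine to get $A^n(s,\pitilde^n)=3-(1-\gamma)V^n(s)\leq 0$. The only difference is presentational: you spell out the one-step Bellman expansion and the nonnegativity of $V^n$ explicitly, which the paper leaves implicit (strictly speaking, since $\pi^n$ is a mixture of the inner policies, the bound is cleanest as an average of per-component bounds, but this does not affect the conclusion).
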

\begin{proof}
Assume $s \not \in \K^n$. In such state, $\pitilde^n$ takes action $a^\dagger$ and self-loops in $s$, where a reward $= 3$ is received for the first timestep. Thus, for $Q^n = Q^{n,\pi^n}$,
$$Q^n(s,\pitilde^n) = 3 + \gamma V^{n}(s).$$
In addition, in $s \not \in \K^n$ an action $a \neq a^\dagger$ such that $b^n_\1(s,a) = \frac{3}{1-\gamma}$ must exist. In such case, $\pi^n$ always takes one such action; this is because $\pi^n$ by definition is a mixture of the policies $\pi_1,\dots,\pi_{K-1}$ computed by \cref{alg:solver}, and they all choose an action with the indicator bonus if the state $s \notin \K^n$, see line \ref{line:policydef} in \cref{alg:solver}. Therefore
$$V^{n}(s) \geq \frac{3}{1-\gamma}.$$
Combining the two expressions we obtain that, in any state $s \not \in \K^n$,
$$
A^{n}(s,\pitilde^n) = Q^{n}(s,\pitilde^n) - V^{n}(s) = \Big[ 3 + \gamma V^{n}(s) - V^{n}(s) \Big] = 3 - (1-\gamma) V^{n}(s) \leq 0.
$$
\end{proof}

\begin{proposition}[Analysis of \Alg]
\label[proposition]{prop:MainAnalysis}
With probability at least $1-\delta$ it holds that
\begin{align}
\frac{1}{N}\sum_{n=1}^N\( V^{\pitilde} - V^{\pi^n} \)(s_0) \leq \frac{\mathcal R(K)}{(1-\gamma)K} + \frac{2\sqrt{2A\mathcal E_n}}{1-\gamma} +   \frac{1}{{\sqrt{N}}} \times \widetilde O\(\frac{\sqrt{\beta d}}{(1-\gamma)^2}\)
\end{align}
\end{proposition}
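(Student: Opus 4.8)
The plan is to establish the single-iteration performance bound of \cref{prop:main:MainAnalysisIntermediate} and then average it over the $N$ outer iterations, controlling each resulting term with the lemmas already in place. Fixing an outer index $n$, I would route the comparison through the optimistic MDP $\M^n$. Applying \cref{lem:PartialOptimism} gives $V^{\pitilde}(s_0) \leq V^{n,\pitilde^n}(s_0) - \frac{2}{1-\gamma}\E_{s\sim\pitilde}b^n_\phi(s,\pitilde)$, while decomposing the value of $\pi^n$ linearly in the reward yields $V^{\pi^n}(s_0;r) = V^n(s_0) - \frac{1}{1-\gamma}\E_{(s,a)\sim\pi^n}b^n(s,a)$. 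Subtracting these reduces $(V^{\pitilde}-V^{\pi^n})(s_0)$ to $V^{n,\pitilde^n}(s_0)-V^n(s_0)$ plus the two explicit bonus terms, and the performance difference lemma on $\M^n$ rewrites $V^{n,\pitilde^n}(s_0)-V^n(s_0) = \frac{1}{1-\gamma}\E_{(s,a)\sim\pitilde^n}A^n(s,a)$, where $A^n$ is the learner's advantage.

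Next I would split this advantage across the partition induced by $\K^n$. On the unknown states, \cref{lem:Aneg} gives $A^n(s,\pitilde^n)\unkn \leq 0$, so that contribution is discarded; on the known states $\pitilde^n$ agrees with $\pitilde$, and I would decompose $A^n = \Ahat^n + (A^n - A^{n,\star}) + (A^{n,\star}-\Ahat^n)$. The first piece produces the solver term $\sup_{s\in\K^n}\Ahat^n(s,\pitilde)$, the second is the approximation error (which reduces to the regression residual $Q^n - Q^{n,\star}$), and for the third I would use the lower bound $Q^{n,\star}-\Qhat^n \geq 0$ from \cref{lem:main:sided} to drop the nonnegative value gap under $\pi^n$, leaving only the $Q$-level statistical error $\E_{a\sim\pitilde}(Q^{n,\star}-\Qhat^n)$. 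Collecting these pieces reproduces the five-term bound of \cref{prop:main:MainAnalysisIntermediate}.

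Summing over $n$ and dividing by $N(1-\gamma)$, I would bound the terms individually. The solver term is handled by the softmax online regret of \cref{prop:main:npg}, giving $\frac{\mathcal R(K)}{(1-\gamma)K}$. The approximation term becomes $\frac{2\sqrt{2|\ActionSpace|\mathcal E_n}}{1-\gamma}$ by first applying Cauchy--Schwarz to pass to the squared residual and then changing measure inside the square from $\pitilde(\cdot\mid s)$ to $\text{Unif}(\ActionSpace)$, which costs a factor $|\ActionSpace|$ and exposes the transfer-error integrand of \cref{def:TransferError}. The crucial cancellation of cautious optimism is that the upper bound $Q^{n,\star}-\Qhat^n \leq 2b^n_\phi$ on $\K^n$, again from \cref{lem:main:sided}, makes the statistical-error term along $\pitilde$ no larger than $\E_{(s,a)\sim\pitilde}2b^n_\phi\kn$, which is exactly offset by the subtracted bonus-along-$\pitilde$ term; their sum is nonpositive and is simply dropped.

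The principal obstacle is the surviving statistical-uncertainty term $\frac{1}{N(1-\gamma)}\sum_n\E_{(s,a)\sim\pi^n}b^n(s,a)$. I would control it via the Concentration on Bonus result (\cref{lem:SumOfIndicators,lem:SumOfBonuses}), whose proof is the delicate step: one must move from the \emph{expected} quadratic form $\E_{\pi^n}\sqrt{\beta}\|\phi(s,a)\|_{(\widehat{\Sigma}^n)^{-1}}$ to the features actually sampled into $\widehat{\Sigma}^{n+1}$ through a martingale argument, invoke the new covariance concentration of \cref{lem:Covariance} to relate the population and empirical inverse covariances, and then telescope through the elliptic potential lemma, all while bookkeeping the lazy updates that freeze the bonus at $\widehat{\Sigma}^{\underline n}$ rather than $\widehat{\Sigma}^n$. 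This yields $\sum_n\E_{(s,a)\sim\pi^n}b^n(s,a) = \widetilde O\big(d\sqrt{N}/(1-\gamma)^3\big)$, hence the final $\frac{1}{\sqrt N}\widetilde O\big(\sqrt{\beta d}/(1-\gamma)^2\big)$ term and the claimed bound.
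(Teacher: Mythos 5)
Your proposal is correct and follows essentially the same route as the paper's proof: the same partial-optimism decomposition through the optimistic MDP $\M^n$, the same three-way advantage split on the known set, the same use of the one-sided error bounds for the exact bonus cancellation, and the same bonus-concentration endgame via \cref{lem:SumOfIndicators,lem:SumOfBonuses}. The only step you gloss over is the change of \emph{state} distribution from $d^{\pitilde^n}$ (which the performance difference lemma on $\M^n$ produces) to $d^{\pitilde}$ (which both the transfer-error definition and the cancellation against the subtracted bonus-along-$\pitilde$ term require); the paper handles this with \cref{lem:DistributionDominance}, and this is precisely why the absolute values in the approximation term and the nonnegativity of $Q^{n,\star}-\Qhat^{n}$ on $\K^n$ are needed --- not only to drop the value gap under $\pi^n$ as you state.
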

\begin{proof}
Fix a policy $\pitilde$ on $\M$ ($\pitilde$ does not take $a^\dagger$ since $a^\dagger$ is not available on $\M$).
Consider the following decomposition for an outer episode $n$ (recall the policy $\pi^n$ is the mixture policy of the policies $\pi_0,\dots,\pi_{K-1}$ computed by the \textsc{Solver}, see \cref{sec:Notation} for more details)
\begin{align*}
\numberthis{\label{eqn:MainStart}}
\( V^{\pitilde} - V^{\pi^n} \)(s_0) & =  \underbrace{V^{\pitilde}(s_0) +  \frac{1}{1-\gamma} \E_{s\sim\pitilde \mid s_0}2b^{n}_\phi(s,\pitilde)}_{ \leq V^{n,\pitilde^n}(s_0) \; \text{by \cref{lem:PartialOptimism}}} \underbrace{- V^{\pi^n}(s_0) - \frac{1}{1-\gamma} \E_{s\sim\pi^n \mid s_0} b^n(s,\pi^n)}_{\defeq - V^n(s_0)} \\
& + \frac{1}{1-\gamma}  \underbrace{\Big[ -\E_{s\sim\pitilde \mid s_0} 2b^{n}_\phi(s,\pitilde) + \E_{s\sim\pi^n \mid s_0} b^n(s,\pi^n)\Big]}_{\defeq B^n}
\end{align*}
We put the term involving $B^n$ aside for a moment and use the performance difference lemma to obtain
\begin{align*}
\numberthis{\label{eqn:this}}
V^{n,\pitilde^n}(s_0)  - V^n(s_0)
& = \frac{1}{1-\gamma}\E_{s \sim \pitilde^n \mid s_0} \Big[ \underbrace{Q^n(s,\pitilde^n) - V^n(s)}_{A^{n}(s,\pitilde^n)}\Big]  \\
& = \frac{1}{1-\gamma}\E_{s \sim \pitilde^n \mid s_0} \Big[A^n(s,\pitilde^n)\kn + \underbrace{A^n(s,\pitilde^n)\unkn}_{\leq 0 \text{ by \cref{lem:Aneg}}} \Big]\\
& = \frac{1}{1-\gamma}\E_{s \sim \pitilde^n \mid s_0} \Big[A^n(s,\pitilde)\kn \Big]
\end{align*}
where the last step is because on states $s \in \K^n$ we have $\pitilde^n(\cdot \mid s) = \pitilde(\cdot \mid s)$; using this, we can derive
\begin{align*} \numberthis{\label{eqn:advantage decomposition}}
& = \frac{1}{1-\gamma} \Bigg[ \E_{s \sim \pitilde^n \mid s_0} \Ahat^n(s,\pitilde)\kn + \E_{s \sim \pitilde^n \mid s_0} \Big[ A^n(s,\pitilde) -  \Ahat^n(s,\pitilde)\Big]\kn \Bigg] \\
& \leq \frac{1}{1-\gamma} \Bigg[ \underbrace{\sup_{s\in\K^n} \Ahat^n(s,\pitilde)}_{\text{term 1}} + \underbrace{\E_{s \sim \pitilde^n \mid s_0} \Big[ A^n(s,\pitilde) -  A^{n,\star}(s,\pitilde)\Big]\kn}_{\text{term 2}} \\&\qquad\qquad+ \underbrace{\E_{s \sim \pitilde^n \mid s_0} \Big[ A^{n,\star}(s,\pitilde) -  \Ahat^n(s,\pitilde)\Big]\kn}_{\text{term 3}} \Bigg].
\end{align*}

The second term is the approximation error, and we can bound it as follows by taking absolute values and using \fullref{lem:DistributionDominance}

\begin{align*}
\numberthis{\label{eqn:thatA}}
	\E_{s \sim \pitilde^n \mid s_0} \Big[ A^n(s,\pitilde) -  A^{n,\star}(s,\pitilde)\Big]\kn
	\leq & \E_{s \sim \pitilde^n \mid s_0} \Big| A^n(s,\pitilde) -  A^{n,\star}(s,\pitilde)\Big|\kn \\
	\leq & \E_{s \sim \pitilde \mid s_0} \Big| A^n(s,\pitilde) -  A^{n,\star}(s,\pitilde)\Big|\kn\\
	= & \E_{s \sim \pitilde \mid s_0} \Big| \frac{1}{K}\sum_{k=0}^{K-1} A_k^n(s,\pitilde) -  A_k^{n,\star}(s,\pitilde)\Big|\kn \\
\end{align*}

Now we focus on the third term in \eqref{eqn:advantage decomposition};  \fullref{lem:ValidityOfConfidenceItervals} ensures that with probability at least $1-\frac{\delta}{2}$ it holds that
\begin{align}
\label{eqn:thisQ}
\forall n \in [N], \; \forall k \in \{0,\dots,K-1 \}, \; \forall (s,a)\in\K^n: \quad 0 \leq Q^{n,\star}_k(s,a)-\Qhat^n_k(s,a) \leq 2b^{n}_\phi(s,a).
\end{align}
In what follows we omit the subscript $k$  as we need the bound to hold only for the mixture policy $\pi^n$.
Then $\forall n \in [N],\; \forall (s,a)\in\K^n$:
\begin{align}
  A^{n,\star}(s,a)-\Ahat^n(s,a) & = \frac{1}{K}\sum_{k=0}^{K-1} \Bigg[\(Q^{n,\star}_k(s,a)-\Qhat^n_k(s,a)\) - \underbrace{\(Q^{n,\star}_k(s,\pi^n_k)-\Qhat^n_k(s,\pi^n_k) \)}_{\leq 0} \Bigg]\\
  & \leq Q^{n,\star}(s,a)-\Qhat^n(s,a).
\end{align}
The right hand side is by definition positive using \cref{eqn:thisQ}.
Thus \fullref{lem:DistributionDominance} can be applied to obtain
\begin{align}
\label{eqn:thatQ}
	\E_{s \sim \pitilde^n \mid s_0} \Big[ A^{n,\star}(s,\pitilde) -  \Ahat^n(s,\pitilde)\Big]\kn & \leq \E_{s \sim \pitilde^n \mid s_0} \Big[ Q^{n,\star}(s,\pitilde) -  \Qhat^n(s,\pitilde)\Big]\kn \\
	& \leq \E_{s \sim \pitilde \mid s_0} \Big[ Q^{n,\star}(s,\pitilde) -  \Qhat^n(s,\pitilde)\Big]\kn.
	\end{align}

Together, plugging \cref{eqn:thatQ,eqn:thatA} back into \cref{eqn:advantage decomposition}, \cref{eqn:this} and finally \cref{eqn:MainStart} gives
\begin{align*}
\numberthis{\label{eqn:RefInMain}}
	\( V^{\pitilde} - V^{\pi^n} \)(s_0) & \leq   \frac{1}{1-\gamma} \Bigg[ \underbrace{\sup_{s\in\K^n} \Ahat^n(s,\pitilde)}_{\text{term 1}} + \underbrace{\E_{s \sim \pitilde \mid s_0} \Big| A^n(s,\pitilde) -  A^{n,\star}(s,\pitilde)\Big|\kn}_{\text{term 2}} \\
	& + \underbrace{\E_{s \sim \pitilde \mid s_0} \Big[ Q^{n,\star}(s,\pitilde) -  \Qhat^n(s,\pitilde)\Big]\kn}_{\text{term 3}} + B^n\Bigg]
\end{align*}

We can bound term 1 using \fullref{lem:NPG}.
We then obtain (the online regret $\mathcal R(K)$ is defined in the lemma)
\begin{align}
\sup_{s\in \StateSpace} \Ahat^n(s,\pitilde)\kn = \sup_{s\in \StateSpace}
\frac{1}{K}\sum_{k=0}^{K-1} \E_{a \sim \pitilde(\cdot \mid s)} \Ahat^n_k(s,a)\kn \leq \frac{\mathcal R(K)}{K}.
\end{align}

We can bound the second term in the prior display by invoking \fullref{lem:adv}. The third term is finally bounded by \cref{eqn:thisQ}. As a result, the performance difference has an upper bound:
\begin{align}
\( V^{\pitilde} - V^{\pi^n} \)(s_0) & \leq \frac{1}{1-\gamma}\Bigg[ \frac{\mathcal R(K)}{K} +   2\sqrt{2A\mathcal E^n} + \E_{(s,a) \sim \pitilde \mid s_0} 2b^n_\phi(s,a)\kn + B^n \Bigg]\\
& = \frac{1}{1-\gamma}\Bigg[\frac{\mathcal R(K)}{K} +  2\sqrt{2A\mathcal E^n} +  \E_{s\sim\pi^n \mid s_0} b^n(s,\pi^n) \Bigg]
\end{align}
Averaging over the outer rounds $n\in[N]$ and defining $\sqrt{\mathcal E} \defeq \frac{1}{N}\sum_{n=1}^N \sqrt{\mathcal E^n}$ (where $\sqrt{\mathcal E^n}$ itself is an average of the \textsc{Solver}'s errors $\sqrt{\mathcal E^n} \defeq \frac{1}{K}\sum_{k=0}^{K-1} \sqrt{\mathcal E^n_k}$) gives
\begin{align}
\frac{1}{N}\sum_{n=1}^N \( V^{\pitilde} - V^{\pi^n} \)(s_0) \leq \frac{\mathcal R(K)}{(1-\gamma)K} + \frac{2\sqrt{2A\mathcal E}}{1-\gamma} +   \frac{1}{N(1-\gamma)} \sum_{n=1}^N\E_{s\sim\pi^n \mid s_0} b^n(s,\pi^n)
\end{align}
Finally, \fullref{lem:SumOfBonuses} and \fullref{lem:SumOfIndicators} and a union bound conclude.
\end{proof}

The following lemma is similar to lemma C.1 in \citep{agarwal2020pc}.
\begin{lemma}[Advantage Transfer Error Decomposition]
\label[lemma]{lem:adv}
We have
\begin{align}
\E_{s \sim \pitilde \mid s_0}\Big|A^n(s,\pitilde) - \Ahat^{n,\star}(s,\pitilde) \Big|\kn \leq 2\sqrt{2A\mathcal E^n}. % + \E_{ (s,a) \sim \pitilde \mid s_0}  2b^n_\phi(s,a)\kn
\end{align}
\end{lemma}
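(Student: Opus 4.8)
The plan is to reduce the advantage difference to the pointwise regression residual and then change measure from the action distributions of $\pitilde$ and $\pi^n$ to the uniform action distribution appearing in the transfer error. First I would work at the level of a single inner iteration $k$, writing $\xi_k(s,a) \defeq Q^n_k(s,a) - Q^{n,\star}_k(s,a)$ for the pointwise residual. Since $A^n_k(s,a) = Q^n_k(s,a) - Q^n_k(s,\pi^n_k)$ and $A^{n,\star}_k(s,a) = Q^{n,\star}_k(s,a) - Q^{n,\star}_k(s,\pi^n_k)$, the two value baselines combine into an expectation of $\xi_k$ under $\pi^n_k$, giving the identity
\[
A^n_k(s,\pitilde) - A^{n,\star}_k(s,\pitilde) = \E_{a\sim\pitilde}\xi_k(s,a) - \E_{a\sim\pi^n_k}\xi_k(s,a).
\]
This is the key structural observation: the advantage difference is a difference of two action-averages of the \emph{same} residual, one under the comparator $\pitilde$ and one under the learner's baseline $\pi^n_k$.

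The crucial step is the change of measure to the uniform distribution. For any action distribution $p$, Cauchy--Schwarz gives $|\E_{a\sim p}\xi_k(s,a)| \le \sqrt{\E_{a\sim p}\xi_k(s,a)^2}$, and since $p(a)\le 1$ we have $\E_{a\sim p}\xi_k^2 \le \sum_a \xi_k^2 = A\,\E_{a\sim\text{Unif}}\xi_k^2$ with $A=|\ActionSpace|$. Applying this to both $p=\pitilde$ and $p=\pi^n_k$ and using the triangle inequality yields the pointwise bound $|A^n_k(s,\pitilde) - A^{n,\star}_k(s,\pitilde)| \le 2\sqrt{A}\,\sqrt{\E_{a\sim\text{Unif}}\xi_k(s,a)^2}$. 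This is precisely why the transfer error in \cref{def:TransferError} is defined against $\text{Unif}(|\ActionSpace|)$ rather than the comparator's own action marginal: a single uniform reference distribution must simultaneously dominate both the $\pitilde$-average (from the comparator's advantage) and the $\pi^n_k$-average (from subtracting the value baseline).

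From here I would average over the inner iterations, take the expectation over $s\sim\pitilde$ restricted to $\K^n$, and pass the square root outside by Jensen's inequality applied \emph{per} $k$, so that the $\frac1K\sum_k$ stays outside the root and matches $\sqrt{\mathcal E^n}=\frac1K\sum_k\sqrt{\mathcal E^n_k}$. Using $|A^n-A^{n,\star}|\le\frac1K\sum_k|A^n_k-A^{n,\star}_k|$ and Jensen on $\E_{s\sim\pitilde}[\,\cdot\,\kn]$ termwise,
\[
\E_{s\sim\pitilde}\bigl|A^n(s,\pitilde)-A^{n,\star}(s,\pitilde)\bigr|\kn \le \frac{2\sqrt{A}}{K}\sum_{k=0}^{K-1}\sqrt{\,\E_{s\sim\pitilde}\E_{a\sim\text{Unif}}\xi_k(s,a)^2\kn\,}.
\]
On the known states picked out by $\kn$ we have $\xi_k(s,a)=\phi(s,a)^\top w^{n,\star}_k-(Q^n_k(s,a)-b^n(s,a))$ (since there $b^n=2b^n_\phi$), so the indicator-restricted expectation of $\xi_k^2$ equals that of the transfer-error integrand and, dropping the indicator, is at most $\E_{(s,a)\sim d^{\pitilde}\circ\text{Unif}}[\phi(s,a)^\top w^{n,\star}_k-(Q^n_k-b^n)]^2=2\mathcal E^n_k$ by \cref{def:TransferError}. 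Substituting gives $\frac{2\sqrt{A}}{K}\sum_k\sqrt{2\mathcal E^n_k}=2\sqrt{2A}\cdot\frac1K\sum_k\sqrt{\mathcal E^n_k}=2\sqrt{2A\mathcal E^n}$, as claimed.

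The main obstacle is bookkeeping the power of $A$ and the order of operations. Bounding $|\E_{a\sim p}\xi_k|\le\E_{a\sim p}|\xi_k|\le A\,\E_{\text{Unif}}|\xi_k|$ and only then invoking Cauchy--Schwarz would lose an extra $\sqrt{A}$; the correct order is to square-root first and change measure afterwards, which produces $\sqrt{A}$ rather than $A$. Likewise, Jensen must be applied to each inner-iteration term before summing over $k$, so that the \emph{average of square roots}---not the square root of the average---appears, matching the definition of $\sqrt{\mathcal E^n}$; performing these steps in the wrong order would alter the stated constant.
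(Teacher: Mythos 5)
Your proof is correct and follows essentially the same route as the paper's: split the advantage difference into the $Q$-residual averaged under $\pitilde$ minus the same residual averaged under the learner's baseline policy, apply the triangle inequality and Jensen/Cauchy--Schwarz, change measure from both action distributions to $\text{Unif}(|\ActionSpace|)$ at the cost of a factor $\sqrt{|\ActionSpace|}$ each, and identify the resulting squared residual (using that $b^n = 2b^n_\phi$ on known states) with the transfer-error loss. The only difference is bookkeeping: the paper manipulates the mixture-level quantities $Q^n$, $w^{n,\star}$ directly and passes to $\mathcal E^n$ at the last step, whereas you work per inner iteration $k$ and average the square roots, which matches the downstream definition $\sqrt{\mathcal E^n} = \frac{1}{K}\sum_{k=0}^{K-1}\sqrt{\mathcal E^n_k}$ more faithfully.
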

\begin{proof}
We leave the conditioning on the starting state $s_0$ implicit.
Using \fullref{def:TransferError}
\begin{align}
& = \E_{s \sim \pitilde}\E_{a \sim \pitilde(\cdot \mid s)}\Big| \left(A^n(s,a) - A^{n,\star}(s,a)\right)\kn \Big| \\
& \leq \E_{s \sim \pitilde}\E_{a \sim \pitilde(\cdot \mid s)}\Big| \left(Q^n(s,a) - Q^{n,\star}(s,a)\right)\kn \Big| \nonumber \\
&\quad + \E_{s \sim \pitilde}\E_{a \sim \pi^n(\cdot \mid s)}\Big| \left(Q^n(s,a) - Q^{n,\star}(s,a)\right) \kn \Big| \\
& \stackrel{\text{Jensen}}{\leq} \sqrt{\E_{s \sim \pitilde}\E_{a \sim \pitilde(\cdot \mid s)}\Big[ \left( Q^n(s,a) - Q^{n,\star}(s,a)\right)^2 \kn \Big]}\\
&\qquad\qquad\qquad+ \sqrt{ \E_{s \sim \pitilde}\E_{a \sim \pi^n(\cdot \mid s)}\Big[ \left(Q^n(s,a) - Q^{n,\star}(s,a) \right)^2\kn \Big]} \\
& \leq \sqrt{|\ActionSpace|\E_{s \sim \pitilde}\E_{a \sim \text{Unif}|\ActionSpace|}\Big[ \left(Q^n(s,a) - Q^{n,\star}(s,a)\right)^2\Big]} + \sqrt{|\ActionSpace|\E_{s \sim \pitilde}\E_{a \sim \text{Unif}|\ActionSpace|}\Big[ \left(Q^n(s,a) - Q^{n,\star}(s,a)\right)^2\Big]} \\
& \leq 2\sqrt{|\ActionSpace|}\sqrt{2\mathcal L(w^{n,\star},d^{\pitilde} \circ \text{Unif}|\ActionSpace|,Q^n - b^n)} \\
& \leq 2\sqrt{2|\ActionSpace|\mathcal E^n}.
\end{align}
\end{proof}

The following lemma is similar to B.1 in \citep{agarwal2020pc}.
\begin{lemma}[Distribution Dominance]
\label[lemma]{lem:DistributionDominance}
If $f: \StateSpace \rightarrow \R$ is a positive function then we have
$$\E_{s \sim \pitilde^n} f(s)\kn \leq \E_{s \sim \pitilde} f(s)\kn.$$
\end{lemma}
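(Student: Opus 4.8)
The plan is to prove the pointwise domination of discounted state-occupancy on the known set, namely that for every $s\in\K^n$,
\[
d^{\pitilde^n}_{s_0}(s) \le d^{\pitilde}_{s_0}(s),
\]
after which the lemma follows immediately: since $f \ge 0$ and the indicator $\kn$ restricts attention to $\K^n$, integrating the nonnegative function $f$ against the two occupancy measures preserves the inequality, giving $\E_{s\sim\pitilde^n} f(s)\kn \le \E_{s\sim\pitilde} f(s)\kn$. Thus the entire content of the lemma reduces to this occupancy comparison, which I would establish at the level of the time-indexed state distributions through the identity $d^\pi_{s_0}(s) = (1-\gamma)\sum_{t\ge 0}\gamma^t \Pro^\pi(s_t = s \mid s_0)$; it suffices to show $\Pro^{\pitilde^n}(s_t = s \mid s_0) \le \Pro^{\pitilde}(s_t = s \mid s_0)$ for every $s\in\K^n$ and every $t$.

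First I would couple the two state processes generated by $\pitilde$ and $\pitilde^n$ from $s_0$ using shared randomness. The key structural fact is that the two policies agree on all known states, $\pitilde^n(\cdot\mid s)=\pitilde(\cdot\mid s)$ for $s\in\K^n$, and that the bonus augmentation defining $\M^n$ modifies only the reward, not the transition kernel $p$. Hence, on the event that a coupled trajectory has remained inside $\K^n$ through time $t$, the two processes take identical actions and experience identical transitions, so they are pathwise identical up to time $t$. This yields, for every $s\in\K^n$,
\[
\Pro^{\pitilde^n}(s_t=s,\ \text{stayed in }\K^n \mid s_0) = \Pro^{\pitilde}(s_t=s,\ \text{stayed in }\K^n \mid s_0).
\]

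The main obstacle, and the crux of the argument, is to account for trajectories that leave $\K^n$, and here I would exploit the absorbing behavior of the special action $a^\dagger$. By the definition of $\pitilde^n$, as soon as a trajectory reaches an unknown state it deterministically plays $a^\dagger$, which self-loops in that state with probability one; consequently, under $\pitilde^n$ a trajectory that ever leaves $\K^n$ can never re-enter it. Therefore the only way $\pitilde^n$ can occupy a known state $s$ at time $t$ is to have remained in $\K^n$ throughout, so $\Pro^{\pitilde^n}(s_t=s \mid s_0)=\Pro^{\pitilde^n}(s_t=s,\ \text{stayed in }\K^n \mid s_0)$. By contrast, $\pitilde$ evolves in the original MDP and may leave $\K^n$ and later return, whence $\Pro^{\pitilde}(s_t=s \mid s_0)\ge \Pro^{\pitilde}(s_t=s,\ \text{stayed in }\K^n \mid s_0)$. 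Chaining these three relations gives $\Pro^{\pitilde^n}(s_t=s \mid s_0)\le \Pro^{\pitilde}(s_t=s \mid s_0)$ for all $s\in\K^n$ and all $t$; multiplying by $(1-\gamma)\gamma^t$ and summing over $t$ yields the claimed occupancy domination, completing the proof.
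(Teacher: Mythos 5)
Your proof is correct, but it takes a genuinely different route from the paper's. The paper argues through value functions: it constructs an auxiliary MDP whose reward at $s$ is $f(s)\kn$, and applies the performance difference lemma to the pair $(\pitilde^n,\pitilde)$, observing that on unknown states the advantage term vanishes or is nonpositive (because $\pitilde^n$ absorbs at $a^\dagger$ and collects zero reward forever, while $\widetilde Q^{\pitilde^n}\geq 0$ by positivity of the reward), and on known states the two policies agree so the corresponding term is exactly zero. You instead work directly at the level of path measures: coupling the two processes, you show the restriction of the trajectory distribution to paths remaining in $\K^n$ is identical under both policies, that under $\pitilde^n$ occupancy of a known state at time $t$ forces the path to have stayed in $\K^n$ (absorption at $a^\dagger$ prevents re-entry), and that dropping the staying event only increases the probability under $\pitilde$; this yields the pointwise domination $d^{\pitilde^n}_{s_0}(s)\leq d^{\pitilde}_{s_0}(s)$ for all $s\in\K^n$, from which the lemma follows by integrating the nonnegative $f$. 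Your argument proves a strictly more transparent (and formally stronger-looking, though equivalent, since the lemma quantifies over all positive $f$) statement, and it makes the mechanism visible --- the inequality is really about occupancy measures, with positivity of $f$ entering only at the last step; it is also self-contained, requiring no value-function machinery. The paper's proof is shorter on the page because the performance difference lemma is already the workhorse of the surrounding analysis (\cref{lem:PartialOptimism}, \cref{lem:Aneg} are proved the same way), so it buys uniformity of technique at the cost of obscuring the underlying coupling intuition. Both proofs rely on the same two structural facts: agreement of $\pitilde^n$ and $\pitilde$ on $\K^n$, and the absorbing self-loop of $a^\dagger$ outside $\K^n$.
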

\begin{proof}
Consider the MDP $\M^n$ but with $f(s)\kn$ as the total reward function in $s$ and let $\widetilde Q,\widetilde V$ be the value functions. Recall that the reward is positive and that $\pitilde^n$ circles back to  $s\not\in\K^n$ once such state is reached. Then the performance difference lemma ensures
\begin{align}
	&\E_{s \sim \pitilde^n} f(s)\kn - \E_{s \sim \pitilde} f(s)\kn \\
	& = (1-\gamma)(\widetilde V^{\pitilde^n}  -
	\widetilde V^{\pitilde}) \\
	& =  \E_{s \sim \pitilde} \Big[  \left( \underbrace{\widetilde Q^{\pitilde^n}(s,\pitilde^n)}_{=0}-\widetilde Q^{\pitilde^n}(s,\pitilde) \right) \unkn \Big] \\
	& +   \E_{s \sim \pitilde} \Big[  \left( \underbrace{\widetilde Q^{\pitilde^n}(s,\pitilde^n)}_{= \widetilde Q^{\pitilde^n}(s,\pitilde)}-\widetilde Q^{\pitilde^n}(s,\pitilde) \right) \kn \Big] \\ &
		 \leq 0.
\end{align}
\end{proof}

\newpage
\section{NPG Guarantees}
Consider a fixed episode $n$ where NPG is invoked (we omit the dependence on $n$ in the notation) and notice that the set $\K^n$ is fixed.
\begin{lemma}[NPG lemma]
\label[lemma]{lem:NPG}
Fix $n$. If $K \geq 4\ln |\ActionSpace|$ and the learning rate is $\eta = \frac{\sqrt{\ln |\ActionSpace|}}{\sqrt{K}W}$, then $\eta|\widehat A_k(\cdot,\cdot)| \leq 1$ and we have for any fixed state $s \in \K^n$ and distribution $\widetilde\pi(\cdot \mid s)$
\begin{align}
\sum_{k=0}^{K-1} \E_{a \sim \widetilde\pi(\cdot \mid s)}  \widehat A_k(s,a)\kn \leq 2W\sqrt{\ln |\ActionSpace | K} \defeq \mathcal R(K).
\end{align}
\end{lemma}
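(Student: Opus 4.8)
The plan is to recognize the left-hand side as the cumulative regret of an exponentiated-weights (Hedge) online learner run at the fixed state $s \in \K^n$, and to bound it by the standard mirror-descent potential argument with the negative-entropy regularizer. Since $s$ is fixed and known, the indicator $\kn$ equals $1$ and may be dropped. First I would observe that the update $\pi_{k+1}(\cdot \mid s)\propto\pi_{k}(\cdot \mid s)e^{\eta \widehat Q_{k}(s,\cdot)}$ is identical to $\pi_{k+1}(\cdot \mid s)\propto\pi_{k}(\cdot \mid s)e^{\eta \widehat A_{k}(s,\cdot)}$, because $\widehat Q_k$ and $\widehat A_k$ differ only by the action-independent constant $\widehat V_k(s)$. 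Using $\widehat V_k(s)=\E_{a\sim\pi_k}\widehat Q_k(s,a)$, the summand rewrites as $\E_{a\sim\widetilde\pi}\widehat A_k(s,a)=\langle\widetilde\pi(\cdot\mid s)-\pi_k(\cdot\mid s),\widehat Q_k(s,\cdot)\rangle$, which exhibits the target sum as exactly the Hedge regret against the comparator $\widetilde\pi$, with the convenient normalization $\E_{a\sim\pi_k}\widehat A_k(s,a)=0$.

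Next I would carry out the Kullback--Leibler telescoping argument, tracking the potential $\Phi_k \defeq \E_{a\sim\widetilde\pi}\ln\frac{\widetilde\pi(a\mid s)}{\pi_k(a\mid s)}$. A direct computation from the update rule gives the exact one-step identity $\Phi_{k+1}-\Phi_k=\ln Z_k-\eta\,\E_{a\sim\widetilde\pi}\widehat A_k(s,a)$, where $Z_k=\E_{a\sim\pi_k}e^{\eta\widehat A_k(s,a)}$ is the normalizer. Applying the elementary inequality $e^x\le 1+x+x^2$, valid for $|x|\le 1$ and hence applicable since $\eta|\widehat A_k|\le 1$, together with the zero-mean property $\E_{a\sim\pi_k}\widehat A_k=0$, yields $Z_k\le 1+\eta^2\,\E_{a\sim\pi_k}\widehat A_k(s,a)^2$; combined with $\ln(1+x)\le x$ this produces the per-step inequality $\eta\,\E_{a\sim\widetilde\pi}\widehat A_k(s,a)\le \Phi_k-\Phi_{k+1}+\eta^2\,\E_{a\sim\pi_k}\widehat A_k(s,a)^2$.

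Summing over $k=0,\dots,K-1$ telescopes the potential; discarding the nonnegative terminal term $\Phi_K$ and bounding $\Phi_0=\E_{a\sim\widetilde\pi}\ln\frac{\widetilde\pi}{\pi_0}\le\ln|\ActionSpace|$ via the uniform initialization $\pi_0=\text{Unif}(\ActionSpace)$ on $\K^n$ gives $\sum_k\E_{a\sim\widetilde\pi}\widehat A_k(s,a)\le\frac{\ln|\ActionSpace|}{\eta}+\eta\sum_k\E_{a\sim\pi_k}\widehat A_k(s,a)^2$. Bounding each second-order term by $\E_{a\sim\pi_k}\widehat A_k(s,a)^2\le W^2$ and substituting $\eta=\frac{\sqrt{\ln|\ActionSpace|}}{\sqrt{K}W}$ makes both terms equal to $W\sqrt{K\ln|\ActionSpace|}$, for a total of $2W\sqrt{\ln|\ActionSpace|\,K}=\mathcal R(K)$, as claimed.

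The main obstacle I anticipate is bookkeeping rather than conceptual: establishing the uniform advantage bound $|\widehat A_k(s,a)|\le W$ on $\K^n$, which simultaneously licenses the quadratic Taylor step (through $\eta|\widehat A_k|\le 1$) and, by keeping the second-order term at $W^2$, delivers the clean constant $2$ in $\mathcal R(K)$. This reduces to showing that the critic $\widehat Q_k=\phi(s,\cdot)^\top\widehat w+\tfrac12 b^n(s,\cdot)$ varies over actions within a range of width at most $W$ on known states, a consequence of the constraint $\|\widehat w\|_2\le W$, the normalization $\|\phi\|_2\le 1$, and the bonus being bounded by $2$ inside $\K^n$; the hypothesis $K\ge 4\ln|\ActionSpace|$ then yields $\eta|\widehat A_k|\le\sqrt{\ln|\ActionSpace|/K}\le\tfrac12\le 1$. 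Everything after this verification is the routine Hedge regret computation outlined above.
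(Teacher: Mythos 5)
Your proposal is correct and follows essentially the same argument as the paper: the same KL-divergence potential telescoping for the exponentiated-weights update, the same second-order bound $e^x \leq 1+x+x^2$ on the normalizer (using the zero-mean property of $\widehat A_k$ under $\pi_k$), the same bound $\kl(\widetilde\pi \| \pi_0) \leq \ln|\ActionSpace|$ from the uniform initialization, and the same choice of $\eta$ balancing the two terms to give $2W\sqrt{\ln|\ActionSpace| K}$. The only difference is presentational (the paper, like you, invokes $|\widehat A_k| \leq W$ without a detailed verification), so your outline matches the paper's proof in both structure and substance.
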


\begin{proof}
The update rule in known states reads
\begin{align}
\pi_{k+1}(\cdot \mid s) & \propto \pi_{k}(\cdot \mid s) e^{\eta \widehat Q_k(s,\cdot)} \\
& \propto \pi_{k}(\cdot \mid s) e^{\eta \widehat Q_k(s,\cdot)}e^{-\eta\Vhat_k(s)} \\
& = \pi_{k}(\cdot \mid s) e^{\eta \widehat A_k(s,\cdot)}.
\end{align}
Denote the normalizer $z_k(s) = \sum_{a'}\pi_{k}(a' \mid s)e^{\eta \widehat A_k(s,a')}$. The update rule in known states can be written as
\begin{align}
\pi_{k+1}(\cdot \mid s) & = \frac{\pi_{k}(\cdot \mid s) e^{\eta \widehat A_k(s,\cdot)}}{z_k(s)}.
\end{align}

Then we have the following equality for any state $s \in \K$:
\begin{align*}
&\kl(\widetilde\pi(\cdot \mid s) \mid \mid  \pi_{k+1}(\cdot \mid s)) - \kl(\widetilde\pi(\cdot \mid s) \mid \mid \pi_k(\cdot \mid s)) \\
& = \sum_{a} \widetilde\pi(a \mid s) \ln\frac{ \widetilde\pi(a \mid s)}{\pi_{k+1}(a\mid s)} - \sum_{a} \widetilde\pi(a \mid s) \ln\frac{ \widetilde\pi(a \mid s)}{\pi_k(a\mid s)} \\
& = \sum_{a} \widetilde\pi(a \mid s) \ln\frac{ \pi_k(a \mid s)}{\pi_{k+1}(a\mid s)} \\
&  = \sum_{a} \widetilde\pi(a \mid s) \ln \( z_k e^{-\eta \widehat A_k(s,a)} \)\\
&  = - \eta\sum_{a} \widetilde\pi(a \mid s)  \widehat A_k(s,a) + \ln z_k(s).
\numberthis{\label{eqn:kl}}
\end{align*}
We show that for any know state $s$ we have $\ln z_k(s) \leq \eta^2W^2$. To see this, we use the fact that $|\eta \widehat A_k(\cdot,\cdot)| \leq 1$ which allows us to use the inequality $e^x \leq 1+x+x^2$ to claim for any known state
\begin{align}
\ln z_k(s) & = \ln \( \sum_{a'}\pi_{k}(a' \mid s)e^{\eta \widehat A_k(s,a')} \) \\
& \leq \ln \( \sum_{a'}\pi_{k}(a' \mid s)(1+\eta \widehat A_k(s,a') + \eta^2 \widehat A^2_k(s,a') ) \)  \\
& \leq \ln \(1+ \eta^2 W^2 \) \leq \eta^2W^2.
\end{align}

Plugging the above result into \cref{eqn:kl} and summing over $k$ gives
\begin{align}
&\kl(\widetilde\pi(\cdot \mid s) \mid \mid  \pi_{K}(\cdot \mid s)) - \kl(\widetilde\pi(\cdot \mid s) \mid \mid \pi_0(\cdot \mid s)) \\
&  = \sum_{k=0}^{K-1} \Big[ \kl(\widetilde\pi (\cdot \mid s) \mid \mid  \pi_{k+1}(\cdot \mid s)) - \kl(\widetilde\pi(\cdot \mid s) \mid \mid \pi_k(\cdot \mid s)) \Big] \\
&  \leq - \eta \sum_{k=0}^{K-1} \sum_{a} \widetilde\pi(a \mid s)  \widehat A_k(s,a) + \eta^2W^2K.
\end{align}
Recalling that the $\kl$ divergence is positive and that $\kl(\widetilde\pi(\cdot \mid s) \mid \mid \pi_0(\cdot \mid s)) \leq \ln | \ActionSpace |$ for know states gives:
\begin{align}
\eta \sum_{k=0}^{K-1} \E_{a \sim \widetilde\pi (\cdot \mid s)} \widehat A_k(s,a)\kn \leq \ln |\ActionSpace | + \eta^2W^2K.
\end{align}
Choosing $\eta = \frac{\sqrt{\ln |\ActionSpace|}}{\sqrt{K}W}$ finally gives
\begin{align}
\sum_{k=0}^{K-1} \E_{a \sim \widetilde\pi (\cdot \mid s) }  \widehat A_k(s,a) \kn \leq 2W\sqrt{\ln |\ActionSpace | K} \defeq \mathcal R(K).
\end{align}
\end{proof}

\newpage
\section{Iteration and Sample Complexity}
\label{sec:Complexity}
In this section we examine the sample and iteration complexity of the algorithm
\begin{lemma}[Iteration Complexity]
\label{lem:Complexity}
With probability at least $1-\delta$ we have
\begin{align}
	\frac{1}{N}\sum_{n=1}^N\( V^{\pitilde} - V^{\pi^n} \)(s_0)  \leq  \epsilon + \frac{2\sqrt{2A\mathcal E}}{1-\gamma}
\end{align}
with the number of inner iterations $K$ and the number of outer iterations $N$ no larger than
\begin{align}
	K  =  \widetilde O\(\frac{\ln|\mathcal A|W^2}{(1-\gamma)^2\epsilon^2}\),
	\qquad
	N  = \widetilde O\(\frac{d^2}{(1-\gamma)^8\epsilon^2}\).
\end{align}
\end{lemma}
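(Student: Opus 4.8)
The plan is to derive this bound purely by balancing the three error terms in the convergence guarantee of \cref{prop:MainAnalysis}, which already absorbs all the high-probability events (the NPG regret, the one-sided error property, and the bonus concentration) through its internal union bound. That proposition states, with probability at least $1-\delta$,
$$\frac{1}{N}\sum_{n=1}^N\( V^{\pitilde} - V^{\pi^n} \)(s_0) \leq \frac{\mathcal R(K)}{(1-\gamma)K} + \frac{2\sqrt{2A\mathcal E}}{1-\gamma} + \frac{1}{\sqrt{N}}\times \widetilde O\(\frac{\sqrt{\beta d}}{(1-\gamma)^2}\).$$
The middle term is exactly the advertised misspecification offset, so it suffices to drive each of the remaining two terms below $\epsilon/2$ by an appropriate choice of $K$ and $N$.

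First I would substitute the online-regret bound $\mathcal R(K) = 2W\sqrt{\ln|\ActionSpace|\,K}$ from \cref{lem:NPG}, which turns the solver-error term into $\tfrac{2W\sqrt{\ln|\ActionSpace|}}{(1-\gamma)\sqrt{K}}$. Requiring this to be at most $\epsilon/2$ and solving for $K$ yields the stated inner-iteration count $K = \widetilde O\(\tfrac{W^2\ln|\ActionSpace|}{(1-\gamma)^2\epsilon^2}\)$. I would also verify the side condition $K \geq 4\ln|\ActionSpace|$ required by \cref{lem:NPG}, which holds automatically for small $\epsilon$ since $W \geq 1$ and $(1-\gamma)\leq 1$.

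Next, for the statistical-uncertainty term, I would impose $\tfrac{1}{\sqrt{N}}\,\widetilde O\(\tfrac{\sqrt{\beta d}}{(1-\gamma)^2}\) \leq \epsilon/2$, which rearranges to $N = \widetilde O\(\tfrac{\beta d}{(1-\gamma)^4\epsilon^2}\)$. Plugging in the value $\beta = \widetilde O\(\tfrac{d}{(1-\gamma)^4}\)$ from \cref{eqn:beta} collapses this to the claimed $N = \widetilde O\(\tfrac{d^2}{(1-\gamma)^8\epsilon^2}\)$. Summing the two halves, each bounded by $\epsilon/2$, gives a total suboptimality of $\epsilon$ plus the offset $\tfrac{2\sqrt{2A\mathcal E}}{1-\gamma}$, completing the argument.

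The only real subtlety here is bookkeeping of the problem constants hidden inside $\widetilde O$ rather than any genuine mathematical obstacle: one must confirm that the normalization $W = \widetilde O(1/(1-\gamma)^2)$ and the precise scaling of $\beta$ are mutually consistent with the advertised exponents on $d$ and $\tfrac{1}{1-\gamma}$. No fresh analysis is needed — the lemma is essentially an algebraic term-balancing corollary of \cref{prop:MainAnalysis} combined with the explicit regret rate of \cref{lem:NPG}.
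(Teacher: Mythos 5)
Your proposal is correct and matches the paper's own proof essentially verbatim: both start from \cref{prop:MainAnalysis}, bound the solver-error term by $\epsilon/2$ using $\mathcal R(K) = 2W\sqrt{\ln|\ActionSpace|\,K}$ to fix $K$, and bound the statistical term by $\epsilon/2$ to get $N = \widetilde O\big(\tfrac{d\beta}{(1-\gamma)^4\epsilon^2}\big)$ before substituting $\beta = \widetilde O\big(\tfrac{d}{(1-\gamma)^4}\big)$ from \cref{eqn:beta}. Your explicit check of the side condition $K \geq 4\ln|\ActionSpace|$ from \cref{lem:NPG} is a small bonus the paper leaves implicit.
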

\begin{proof}
Consider \fullref{prop:MainAnalysis}.
We need ensure
\begin{align}
\frac{\mathcal R(K)}{(1-\gamma)K} = \frac{2W}{(1-\gamma)}\sqrt{\frac{\ln |\ActionSpace | }{K}}\leq \frac{\epsilon}{2} \quad \longrightarrow \quad K = \widetilde O\(\frac{\ln|\mathcal A|W^2}{(1-\gamma)^2\epsilon^2}\).
\end{align}
This gives the inner iteration complexity. Next ($\beta$ comes from \cref{eqn:beta})
\begin{align}
\frac{1}{{\sqrt{N}}} \times \widetilde O\(\frac{\sqrt{\beta d}}{(1-\gamma)^2}\) \leq \frac{\epsilon}{2} \quad \longrightarrow \quad N & = \widetilde O\(\frac{d\beta}{(1-\gamma)^4\epsilon^2}\) \\
& =\widetilde O\(\frac{d}{(1-\gamma)^4\epsilon^2}\)  \times \widetilde O\(\frac{d}{(1-\gamma)^4}\) \\
& = \widetilde O\(\frac{d^2}{(1-\gamma)^8\epsilon^2}\)
\end{align}
gives the outer iteration complexity.
\end{proof}

\begin{lemma}[Sample Complexity]
In the same setting as \fullref{lem:Complexity}, the total number of sampled trajectories is
\begin{align}
\widetilde O\(\frac{d^3}{(1-\gamma)^{12}\epsilon^3}\)
\end{align}
or equivalently
\begin{align}
\widetilde O\(\frac{d^3}{(1-\gamma)^{13}\epsilon^3}\)
\end{align}
samples.
\end{lemma}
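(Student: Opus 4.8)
The plan is to write the total number of sampled trajectories as a product of four counts, each already controlled by one of the preceding lemmas, and then to substitute the values of $K$ and $N$ from \fullref{lem:Complexity}. The four factors are: (i) the number of times \textsc{Solver} is invoked across the $N$ outer iterations, which the lazy-update argument (line~\ref{line:lazyupdate} of \cref{alg:driver}, via the bound $O(d\log N)$) controls by $\widetilde O(d)$; (ii) the number of fresh data-collection rounds inside each \textsc{Solver} call, which equals $K/\kappa$ — the importance-sampling stability lemma lets us set $\kappa = \widetilde O(\sqrt{K}(1-\gamma))$ while keeping the trajectory-level likelihood ratio bounded by $2$, so this factor is $\widetilde O(\sqrt{K}/(1-\gamma))$; (iii) the number of Monte Carlo trajectories gathered in each data-collection round, which is at most the current outer index $\underline n \le N$ and hence $\widetilde O(N)$ in the worst case; and (iv) the number of transitions per trajectory, which $t_{max} = \widetilde O(1/(1-\gamma))$ bounds with high probability.

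Then I would carry out the arithmetic. Multiplying the first three factors gives the trajectory count
\begin{align*}
\widetilde O(d) \times \widetilde O\(\frac{\sqrt{K}}{1-\gamma}\) \times \widetilde O(N).
\end{align*}
Substituting $W^2 = \widetilde O(1/(1-\gamma)^4)$ into the expression for $K$ from \fullref{lem:Complexity} yields $K = \widetilde O(1/((1-\gamma)^6\epsilon^2))$, so $\sqrt{K} = \widetilde O(1/((1-\gamma)^3\epsilon))$ and the factor in (ii) is $\widetilde O(1/((1-\gamma)^4\epsilon))$; combining this with $N = \widetilde O(d^2/((1-\gamma)^8\epsilon^2))$ collapses the product to $\widetilde O(d^3/((1-\gamma)^{12}\epsilon^3))$ trajectories. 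Multiplying by the per-trajectory length $\widetilde O(1/(1-\gamma))$ from (iv) then gives $\widetilde O(d^3/((1-\gamma)^{13}\epsilon^3))$ samples, matching both displayed quantities.

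The statement is essentially bookkeeping, so the main obstacle is not any single inequality but getting the nested accounting exactly right — recognizing that data collection is organized hierarchically (solver calls $\to$ data-collection rounds $\to$ trajectories per round $\to$ transitions per trajectory) and that these counts \emph{multiply} rather than add. The two places where a genuine argument is reused are factor (ii), where the product of importance weights must stay bounded across $\kappa$ steps and the relative stability of successive NPG iterates enters, and factor (iii), where I would lean on the linear-regression analysis showing that $\widetilde O(\underline n)$ returns suffice to drive the critic error at outer iteration $\underline n$ below the bonus $b^n$. Finally, I would verify that the worst-case substitution $\underline n \le N$ costs at most logarithmic factors, and that the union bounds over the $\widetilde O(NK)$ iterations together with $t_{max}$ introduce only $\polylog$ dependence on $|\ActionSpace|$ and $1/\delta$, so that nothing escapes the $\widetilde O$ notation.
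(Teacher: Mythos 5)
Your proposal is correct and follows essentially the same route as the paper's proof: the same four-factor decomposition (number of solver calls $\widetilde O(d)$, data-collection rounds $K/\kappa = \widetilde O(\sqrt{K}/(1-\gamma))$ per call, at most $N$ trajectories per round, and $t_{max} = \widetilde O(1/(1-\gamma))$ samples per trajectory), and the same arithmetic substitution of $K$ and $N$ from the iteration-complexity lemma yielding $\widetilde O(d^3/((1-\gamma)^{12}\epsilon^3))$ trajectories and $\widetilde O(d^3/((1-\gamma)^{13}\epsilon^3))$ samples. The only cosmetic difference is that the paper carries out the bound on $K/\kappa$ by substituting the explicit expressions for $\kappa$, $\eta$, $B$, and $W$ rather than quoting $\kappa = \widetilde O(\sqrt{K}(1-\gamma))$ directly, but the content is identical.
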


\begin{proof}
Every time the bonus switches, \cref{alg:solver} is invoked, and runs for $K$ iterations. From  \fullref{lem:UnionBound} we know that once data are collected, they can be reused for the next $\kappa$ policies (defined in \cref{eqn:kappa}). Let $S$ be the number of bonus switches given in \fullref{lem:Switches};  then fresh data is collected a total of (for the definitions of the symbols, please see \cref{tab:MainNotation})
\begin{align}
S \times \ceil{\frac{K}{\kappa}}
& = \widetilde O\(d \times \frac{2\ln(1/\delta)(\frac{\sqrt{\ln |\ActionSpace|}}{\sqrt{K}W})\( B + W \)}{(1-\gamma)\ln 2} \times K \) \\
& = \widetilde O \( d \times  \(\frac{B}{W} + 1\)\frac{\sqrt{K}}{1-\gamma} \) \\
& = \widetilde O\(d \times \frac{1}{1-\gamma}  \times \frac{W}{(1-\gamma)\epsilon}\)\\
& = O\(\frac{d}{(1-\gamma)^4\epsilon} \)
\end{align}
times (as $W \geq B$). Every time data is collected by the critic at most $N$ rollouts are performed, giving the total number of trajectories:
\begin{align}
	N \times \widetilde O\(\frac{d}{(1-\gamma)^4\epsilon} \) = \widetilde O\(\frac{d^3}{(1-\gamma)^	{12}\epsilon^3}\).
\end{align}
The sample complexity is then obtained by multiplying the above result by $t_{max}$ in \cref{tab:MainNotation}, which is a uniform bound on the trajectory length in the event we consider.
\end{proof}

\newpage
\section{Regression with Monte Carlo and Importance Sampling}
\label{sec:Critic}
In this section we derive high probability confidence intervals for Monte Carlo with importance sampling.
\begin{itemize}
	\item \fullref{sec:ISE} gives generic properties of the importance sampling estimator when the target and behavioral policies are not too different.
	\item \fullref{sec:Perturbations} examines the effect of small perturbations to policies; this is needed in the union bound in the section described below.
	\item \fullref{sec:Regression} gives the actual confidence intervals for the $Q$-values for the algorithm we examine in this paper
\end{itemize}

\subsection{Importance Sampling Estimator}
The importance sampling ratio used in this work starts from the timestep $t=2$: since we are estimating the $Q$-values of policies, the first state-action from the cover is always fixed, the two policies in the ratio at $t=1$ cancel each other out.
\label{sec:ISE}
\begin{definition}[Importance Sampling Estimator]
\label{def:IS} Let $t$ be a positive discrete random variable with probability mass function $\Pro(t=\tau)= \gamma^{\tau-1}(1-\gamma)$, and let $\{(s_\tau,a_\tau,r_\tau)\}_{\tau=1,\dots,t}$ be a random trajectory of length $t$ obtained by following a fixed ``behavioral'' policy $\underline \pi$ from $(s,a)$. The importance sampling estimator of the target policy $\pi$ is:
\begin{align}
\label{eqn:IS}
\( \Pi_{\tau=2}^t \frac{\pi(s_\tau,a_\tau)}{\underline \pi(s_\tau,a_\tau)}\) \frac{r_t}{1-\gamma}.
\end{align}
\end{definition}

In this section, we will focus on a specific type of behavior and target policies, which are related as
\begin{align}
\label{eqn:policy-conditions}
\forall (s,a), \quad  \quad \pi(a \mid s) = \underline \pi(a \mid s) \times \frac{e^{c(s,a)}}{\sum_{a'}\underline \pi(a' \mid s) e^{c(s,a')}}, \quad \quad  \sup_{(s,a)} |c(s,a)| \leq \frac{(1-\gamma)\ln \(1 + \epsilon'\)}{2 \ln(1/\delta')}.
\end{align}

For such policies, we have the following results.

\begin{lemma}[Policy Ratio]
\label[lemma]{lem:PolicyRatio}
Assume that $\pi,\underline \pi$  are related through \cref{eqn:policy-conditions} and that $c \defeq \sup_{(s,a)}|c(s,a)|$.
Then
\begin{align}
e^{-2c} \leq \sup_{(s,a)}\frac{\pi(a \mid s)}{\underline \pi(a \mid s)} \leq e^{2c}.\end{align}
\end{lemma}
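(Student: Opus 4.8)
The plan is to reduce the entire statement to a bound on the normalizing constant appearing in the definition of $\pi$. First I would divide the defining relation \eqref{eqn:policy-conditions} through by $\underline\pi(a\mid s)$ to obtain, for every state-action pair,
\[
\frac{\pi(a\mid s)}{\underline\pi(a\mid s)} = \frac{e^{c(s,a)}}{Z(s)}, \qquad Z(s)\defeq \sum_{a'}\underline\pi(a'\mid s)\,e^{c(s,a')}.
\]
The key observation is that $Z(s) = \E_{a'\sim\underline\pi(\cdot\mid s)}\,e^{c(s,a')}$ is a convex combination (an expectation under the probability distribution $\underline\pi(\cdot\mid s)$) of the quantities $e^{c(s,a')}$, rather than an arbitrary sum.

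Next I would invoke the hypothesis $c = \sup_{(s,a)}|c(s,a)|$, which gives the pointwise two-sided bound $e^{-c}\le e^{c(s,a')}\le e^{c}$ for every action $a'$. Because $Z(s)$ is an average of terms each lying in $[e^{-c},e^{c}]$, monotonicity of the expectation immediately yields $e^{-c}\le Z(s)\le e^{c}$ at every state $s$. Combining the numerator bound $e^{-c}\le e^{c(s,a)}\le e^{c}$ with this denominator bound produces, for each $(s,a)$,
\[
e^{-2c} = \frac{e^{-c}}{e^{c}} \;\le\; \frac{e^{c(s,a)}}{Z(s)} = \frac{\pi(a\mid s)}{\underline\pi(a\mid s)} \;\le\; \frac{e^{c}}{e^{-c}} = e^{2c}.
\]

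Finally I would pass to the supremum over $(s,a)$. The upper bound is pointwise, so it is inherited directly: $\sup_{(s,a)}\frac{\pi(a\mid s)}{\underline\pi(a\mid s)}\le e^{2c}$. For the lower bound it suffices to note that every term of the supremum is at least $e^{-2c}$, hence so is the supremum. I expect essentially no obstacle here: the only point requiring care is recognizing $Z(s)$ as an expectation under $\underline\pi(\cdot\mid s)$, so that the pointwise bounds on $e^{c(s,a')}$ transfer to the normalizer; everything else is arithmetic with exponentials. I would also remark that the restrictive numerical bound on $|c(s,a)|$ in \eqref{eqn:policy-conditions} plays no role in this particular lemma, since only the definition of $c$ as the supremum of $|c(s,a)|$ is used.
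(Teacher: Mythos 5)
Your proof is correct and follows essentially the same route as the paper's: the paper likewise writes the ratio as $e^{c(s,a)}$ over the normalizer $\sum_{a'}\underline\pi(a'\mid s)e^{c(s,a')}$ and bounds numerator and denominator separately by $e^{\pm c}$, using that the normalizer is a convex combination of terms in $[e^{-c},e^{c}]$. Your added remarks (passing to the supremum, and that the numerical constraint on $|c(s,a)|$ in \cref{eqn:policy-conditions} is not needed here) are accurate but do not change the argument.
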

\begin{proof}
The following chain of inequalities is true.
\begin{align}
e^{-2c}\leq \frac{e^{-c}}{\sum_{a'}\underline \pi(a' \mid s) e^{c}} \leq \frac{\pi(a \mid s)}{\underline \pi(a \mid s)} =  \frac{e^{c(s,a)}}{\sum_{a'}\underline \pi(a' \mid s) e^{c(s,a')}} \leq \frac{e^{c}}{\sum_{a'}\underline \pi(a' \mid s) e^{-c}} = e^{2c}.
\end{align}
\end{proof}
In addition, for the policies in \cref{eqn:policy-conditions} we can also examine the bias and variance of the importance sampling estimator.
\begin{lemma}[Bias and Variance of Importance Sampling Estimator]
\label[lemma]{lem:bias-variance}
Let $\underline \pi$ be a fixed  behavioral policy. If $\pi$ is a fixed target policy with the same support as $\underline \pi$ then \cref{eqn:IS} is an unbiased estimator of the value of $\pi$ from $(s,a)$. In addition, assume $\pi,\underline \pi$ are related by \cref{eqn:policy-conditions} where in particular $c(s,a)$ satisfies the constraint in \cref{eqn:policy-conditions}. Let $R_{max}$ be a deterministic upper bound to the maximum absolute value of the reward $r_t$.
Then with probability at least $1-\delta'$ the importance sampling estimator in \cref{eqn:IS} is bounded in absolute value by $\frac{1+\epsilon'}{1-\gamma}R_{max}$ and the random timestep $t$ in the importance sampling estimator is bounded by $\frac{\ln1/\delta'}{1-\gamma}$.

\end{lemma}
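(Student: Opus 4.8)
The plan is to prove the two assertions separately. Unbiasedness holds for any target policy $\pi$ sharing the support of $\underline\pi$ and is a pure change-of-measure statement, so it does not use \cref{eqn:policy-conditions} at all. The high-probability absolute bound, by contrast, is where the specific relationship between $\pi$ and $\underline\pi$ enters, through \cref{lem:PolicyRatio} and the cap on $\sup_{(s,a)}|c(s,a)|$, together with a tail bound on the geometric length $t$.

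For unbiasedness I would condition on the random length $t=\tau$, which is drawn independently of the trajectory with $\Pro(t=\tau)=\gamma^{\tau-1}(1-\gamma)$. With $(s_1,a_1)=(s,a)$ held fixed, the likelihood of a trajectory $(s_1,a_1,\dots,s_\tau,a_\tau)$ under the behavioral policy factorizes as $\prod_{i=2}^{\tau}\underline\pi(a_i\mid s_i)\prod_{i=1}^{\tau-1}p(s_{i+1}\mid s_i,a_i)$, and the same factorization holds under $\pi$. Since the transition factors are policy-independent they cancel in the ratio, leaving precisely the weight $\prod_{i=2}^{\tau}\frac{\pi(a_i\mid s_i)}{\underline\pi(a_i\mid s_i)}$ of \cref{eqn:IS}; hence the standard likelihood-ratio identity gives $\E_{\underline\pi}\big[\big(\prod_{i=2}^{\tau}\frac{\pi(a_i\mid s_i)}{\underline\pi(a_i\mid s_i)}\big)r_\tau\big]=\E_{\pi}[r_\tau]$, the latter expectation following $\pi$ from $(s,a)$. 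Summing over $\tau$ against the geometric weights and cancelling the two factors of $1-\gamma$ yields $\sum_{\tau\ge1}\gamma^{\tau-1}\E_{\pi}[r_\tau]=Q^\pi(s,a)$, the claimed value. The only care needed here is that the product starts at $\tau=2$ because the initial action is fixed and its two policy factors cancel.

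For the boundedness claim I would work on the single event $\mathcal G=\{t\le T\}$ with $T=\frac{\ln(1/\delta')}{1-\gamma}$, showing both that $\Pro(\mathcal G)\ge1-\delta'$ and that the two stated bounds hold deterministically on $\mathcal G$. The tail control uses $\Pro(t\ge\tau)=\gamma^{\tau-1}\le e^{-(1-\gamma)(\tau-1)}$, which follows from $\ln\gamma\le-(1-\gamma)$; setting the right-hand side to $\delta'$ gives the threshold $T$ (up to the harmless integer rounding of $t$), and this is exactly the claimed bound on the random timestep. On $\mathcal G$, \cref{lem:PolicyRatio} bounds each of the $t-1$ factors of the importance weight by $e^{2c}$ with $c=\sup_{(s,a)}|c(s,a)|$, so the weight is at most $e^{2c(t-1)}\le e^{2cT}$; the constraint in \cref{eqn:policy-conditions} is calibrated so that $2cT\le\ln(1+\epsilon')$, whence the weight is at most $1+\epsilon'$, and multiplying by $\frac{|r_t|}{1-\gamma}\le\frac{R_{max}}{1-\gamma}$ delivers the bound $\frac{1+\epsilon'}{1-\gamma}R_{max}$.

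I expect the unbiasedness to be routine once the independent length and fixed first action are tracked correctly. The main (though modest) obstacle is the quantitative bookkeeping in the boundedness step: one must check that the length threshold $T\sim\frac{\ln(1/\delta')}{1-\gamma}$ and the cap $c\lesssim\frac{(1-\gamma)\ln(1+\epsilon')}{\ln(1/\delta')}$ interlock so that their product telescopes to $\ln(1+\epsilon')$ with the dependence on $\gamma$ and $\delta'$ cancelling exactly. This cancellation is precisely what keeps the importance-sampling estimator bounded by a constant rather than exponentially large, and it is the quantitative mechanism underlying the later claim that a single Monte Carlo dataset can be reused for $\widetilde O(\sqrt K(1-\gamma))$ consecutive policy updates.
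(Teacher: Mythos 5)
Your proof is correct and follows essentially the same route as the paper's: the high-probability bound is obtained exactly as in the paper, by combining the geometric tail bound $\Pro(t>\tau)=\gamma^{\tau}$ (via $\ln(1/\gamma)\geq 1-\gamma$) with \cref{lem:PolicyRatio} and observing that the cap on $\sup_{(s,a)}|c(s,a)|$ in \cref{eqn:policy-conditions} is calibrated so that $2c(t-1)\leq\ln(1+\epsilon')$ on the event $\{t\leq \frac{\ln(1/\delta')}{1-\gamma}\}$. The only difference is that you spell out the change-of-measure argument for unbiasedness, which the paper simply cites as well known from \citet{precup2000eligibility}; your derivation of it is correct.
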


\begin{proof}
It is well known that the importance sampling estimator is unbiased \citep{precup2000eligibility}.
For the high probability bound we proceed as follows.
Using \fullref{lem:PolicyRatio} we claim
\begin{align}
\( \sup_{(s,a)}\frac{\pi(a \mid s)}{\underline \pi(a \mid s)} \)^{t-1} & \leq e^{2(t-1)c}.
\end{align}
We show that $t$ is small with high probability:
\begin{align}
\Pro(t> \tau) & = \sum_{t=\tau+1}^\infty\gamma^{\tau-1}(1-\gamma)\\
& = \gamma^\tau \sum_{t=0}^\infty\gamma^{\tau}(1-\gamma) \\
& = \gamma^\tau \defeq \delta'.
\end{align}
This implies
\begin{align}
\tau = \frac{\ln \delta'}{\ln \gamma} =  \frac{\ln 1/\delta'}{\ln 1/\gamma} \leq \frac{\ln 1/\delta'}{1-\gamma}
\end{align}
In the complement of the above event:
\begin{align}
\( \sup_{(s,a)}\frac{\pi(a \mid s)}{\underline \pi(a \mid s)} \)^{t-1} \leq  e^{2(\tau-1)c}.
\end{align}
We require that the exponential above be $\leq 1 + \epsilon'$, leading to the condition:
\begin{align}
2(\tau-1)c \leq \ln\(1 + \epsilon'\) \Rightarrow c \leq \frac{\ln \(1 + \epsilon'\)}{2(\tau-1)}.
\end{align}
Under the assumption of \cref{eqn:policy-conditions}, the above condition holds because
\begin{align}
	\sup_{(s,a)} |c(s,a)| = c \leq \frac{(1-\gamma)\ln \(1 + \epsilon'\)}{2 \ln(1/\delta')} \leq \frac{\ln \(1 + \epsilon'\)}{2\tau}
\leq \frac{\ln \(1 + \epsilon'\)}{2(\tau-1)},
\end{align}
Therefore, we can ensure %setting $c$ less than the lhs in the above display suffices to ensure:
\begin{align}
\Pro\( \Big\{ \forall \pi \; \text{satisfying } \cref{eqn:policy-conditions}, \( \sup_{(s,a)}\frac{\pi(s,a)}{\underline \pi(s,a)} \)^{t-1} \leq \(1 + \epsilon'\) \Big\} \bigcap  \Big\{ t \leq \frac{\ln 1/\delta'}{1-\gamma} \Big\}\) \geq 1-\delta'.
\end{align}
Then with probability at least $1-\delta'$ if the importance sampling ratio is upper bounded
\begin{align}
\Pi_{\tau=2}^t \frac{\pi(s_\tau,a_\tau)}{\underline \pi(s_\tau,a_\tau)} \leq \(  \sup_{(s,a)}\frac{\pi(a \mid s)}{\underline \pi(a \mid s)} \)^{t-1} \leq 1+\epsilon'
\end{align}
the thesis follows.
\end{proof}

\subsection{Small Perturbations to Policies}
\label{sec:Perturbations}
In this section we examine the effect on the loss of small perturbations to the algorithm policies. This is useful when dealing with a discretization argument in \fullref{sec:Regression}. We highlight that the $\epsilon''$ in this section concerns the discretization error in the union bound in \fullref{sec:Regression}, and is not to be confused with the value that $\epsilon'$ takes in \fullref{sec:ISE} (in particular, $\epsilon'$ is implicitly defined in \cref{eqn:kappa}).
\begin{lemma}[Difference and Ratio of Nearby Policies]
\label[lemma]{lem:NearbyPolicies}
Fix $\underline \pi$ and assume that $\pi,\pi',\epsilon''$ satisfy $\forall (s,a)$ the following conditions for some function $b(s,a)$:
\begin{align*}
& \| \phi(s,a)\|_2 \leq 1, \quad \|w-w' \|_2 \defeq \epsilon'' \leq 1 \\  \pi'(a\mid s) & \defeq \underline \pi(a \mid s) \times \frac{e^{c'(s,a)}}{\sum_{a'}\underline \pi( a' \mid s) e^{c'(s,a')}}, \quad \quad \text{where} \quad  c'(s,a) \defeq b(s,a) + \phi(s,a)^\top  w' \\
\pi(a\mid s) & \defeq \underline \pi(a \mid s) \times \frac{e^{c(s,a)}}{\sum_{a'}\underline \pi( a' \mid s) e^{c(s,a')}}, \quad \quad \text{where} \quad  c(s,a) \defeq b(s,a) + \phi(s,a)^\top  w.
\numberthis{\label{eqn:pi}}
\end{align*}

If $\underline \pi(a \mid s) = 0$ then $\pi(a \mid s) = \pi'(a \mid s) = 0$. Otherwise we have the following inequalities:
\begin{align}
\frac{\pi'(a\mid s)}{\pi(a\mid s)} \leq 1+4\epsilon'', \quad \quad \frac{\pi(a\mid s)}{\pi'(a\mid s)} \leq 1+4\epsilon'', \quad \quad
\sum_{a} | \pi'(a\mid s) - \pi(a \mid s) | \leq 8\epsilon''.
\end{align}
\end{lemma}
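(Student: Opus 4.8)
The plan is to reduce both claimed inequalities to the softmax ratio bound already established in \cref{lem:PolicyRatio}, after recognizing that $\pi'$ is a softmax tilt \emph{of $\pi$} (rather than of $\underline\pi$). The trivial case is immediate: whenever $\underline\pi(a\mid s)=0$, each defining formula in \eqref{eqn:pi} multiplies $\underline\pi(a\mid s)$ by a strictly positive factor, so $\pi(a\mid s)=\pi'(a\mid s)=0$. For the main case I would divide the two definitions and cancel the common base $b(s,a)$ to obtain
\begin{align*}
\pi'(a\mid s) = \pi(a\mid s)\,\frac{e^{g(s,a)}}{\sum_{a'}\pi(a'\mid s)\,e^{g(s,a')}}, \qquad g(s,a)\defeq\phi(s,a)^\top(w'-w),
\end{align*}
which is exactly the relation \eqref{eqn:policy-conditions} with behavioral policy $\pi$ and tilt function $g$. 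The crucial cancellation here is that the base term $b(s,a)$ is \emph{identical} in $c$ and $c'$, so the tilt depends only on $w'-w$.

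Next I would control the size of the tilt. By Cauchy--Schwarz together with the normalization assumptions $\|\phi(s,a)\|_2\le 1$ and $\|w-w'\|_2=\epsilon''$, we get $\sup_{(s,a)}|g(s,a)|\le \|\phi(s,a)\|_2\,\|w'-w\|_2\le\epsilon''$. Feeding this into \cref{lem:PolicyRatio} (with its base policy taken to be $\pi$ and its $c$ replaced by $\epsilon''$) yields the pointwise two-sided multiplicative bound $e^{-2\epsilon''}\le \pi'(a\mid s)/\pi(a\mid s)\le e^{2\epsilon''}$; the symmetric statement for $\pi/\pi'$ follows by swapping the roles of $w$ and $w'$, which only flips the sign of $g$.

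It then remains to pass from the multiplicative form $e^{2\epsilon''}$ to the stated additive form $1+4\epsilon''$, and to derive the total-variation bound. For the former I would invoke the elementary inequality $e^{2x}\le 1+4x$ valid for $x$ in the relevant small range, which gives both ratio bounds; I expect this to be the \textbf{only genuinely delicate point}, since the worst case $g$ equal to $+\epsilon''$ at one action and $-\epsilon''$ at the others shows the factor $2$ in the exponent is tight, so one must track the constant and the admissible range of $\epsilon''$ carefully rather than hoping for extra cancellation. For the total-variation bound I would write $|\pi'(a\mid s)-\pi(a\mid s)|=\pi(a\mid s)\,|\pi'(a\mid s)/\pi(a\mid s)-1|$; combining the upper bound $\pi'/\pi\le 1+4\epsilon''$ with the lower bound $\pi'/\pi\ge e^{-2\epsilon''}\ge 1-4\epsilon''$ gives $|\pi'/\pi-1|\le 4\epsilon''$, so summing over $a$ against the weights $\pi(\cdot\mid s)$ produces $\sum_a|\pi'(a\mid s)-\pi(a\mid s)|\le 4\epsilon''\le 8\epsilon''$, establishing the last claim with room to spare.
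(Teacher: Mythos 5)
Your proof is correct and takes essentially the same route as the paper's: both hinge on the cancellation of the common base term $b(s,a)$, so that $\pi'$ is a softmax tilt of $\pi$ by $g(s,a)=\phi(s,a)^\top(w'-w)$ with $\sup_{(s,a)}|g(s,a)|\le\epsilon''$, followed by the conversion $e^{2\epsilon''}\le 1+4\epsilon''$ --- the paper merely carries out the ratio computation inline (in effect re-deriving \cref{lem:PolicyRatio} for the pair $(\pi,\pi')$) instead of citing that lemma as you do. Two minor notes: your total-variation step is actually sharper (you weight both one-sided bounds by $\pi$ and get $4\epsilon''$, whereas the paper bounds $|\pi'-\pi|$ against $\max\{\pi,\pi'\}$ and sums to $8\epsilon''$), and the range caveat you flag for $e^{2x}\le 1+4x$ is equally present in the paper's own proof, which invokes $e^x\le 1+2x$ for $x\in[0,1]$ with $x=2\epsilon''$ possibly as large as $2$; both arguments are harmless in context since $\epsilon''$ is later set to $1/\poly(N,K,\dots)$.
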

\begin{proof}
Dividing the two expressions gives
\begin{align}
\frac{\pi'(a\mid s)}{\pi(a\mid s)} & = \frac{e^{c'(s,a)}}{e^{c(s,a)}} \times \frac{\sum_{a'}\underline \pi( a' \mid s) e^{c(s,a')}}{\sum_{a'}\underline \pi( a' \mid s) e^{c'(s,a')}} \\
& = e^{\phi(s,a)^\top (w'-w)} \times \sum_{a''} \underline \pi(a'' \mid s) \frac{e^{b(s,a'') + \phi(s,a'')^\top w'}e^{\phi(s,a'')^\top (w-w')}}{\sum_{a'}\underline \pi( a' \mid s) e^{c'(s,a')}} \\
& = e^{\phi(s,a)^\top (w'-w)} \times  \sum_{a''} \pi'(a'' \mid s) e^{\phi(s,a'')^\top (w-w')} \\
& \leq e^{2\|w - w'\|_2} \leq 1+4\|w-w' \|_2.
\end{align}

The last step follows if $\|w - w' \|_2 \leq 1$ by the inequality $e^x \leq 1+2x$ if $x \in [0,1]$. By symmetry, we obtain the other inequality concerning the ratio of the policies in the statement of the lemma. Using the expression derived above we can write (the second expression below follows by symmetry)
\begin{align}
\pi'(a\mid s) - \pi(a \mid s) & \leq 4\|w-w'\|_2 \pi(a\mid s) \\
\pi(a\mid s) - \pi'(a \mid s) & \leq 4\|w-w'\|_2 \pi'(a\mid s)
\end{align}
Taking absolute values and summing over the actions leads to the third expression in the lemma's statement.
\end{proof}

\begin{lemma}[Stability of the $Q$-values]
\label[lemma]{lem:StabilityQ}
Let $\pi',\pi,\epsilon''$ as in \cref{eqn:pi}. It holds that
\begin{align}
\forall (s,a), \quad \quad \quad |\(Q_k^{\pi'} - Q_k^{\pi} \)(s,a)|  	& \leq \frac{8\epsilon''}{1-\gamma} \times \sup_{(s'',a''),\pi''\in\{\pi,\pi'\}} |Q_k^{\pi''}(s'',a'')|.
\end{align}
\end{lemma}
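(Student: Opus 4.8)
The plan is to reduce the claim to a bound on the difference of the value functions $V_k^{\pi'}$ and $V_k^{\pi}$, and then control that difference with the performance difference lemma together with the $\ell_1$ closeness of $\pi$ and $\pi'$ already established in \cref{lem:NearbyPolicies}. First I would exploit that the first state-action pair $(s,a)$ is held fixed in both $Q_k^{\pi'}(s,a)$ and $Q_k^{\pi}(s,a)$, so the immediate reward-plus-bonus contribution is identical and cancels. Writing the Bellman relation $Q_k^{\pi''}(s,a) = r_k(s,a) + \gamma\,\E_{s'\sim p(\cdot\mid s,a)} V_k^{\pi''}(s')$ for each $\pi'' \in \{\pi,\pi'\}$ (where $r_k$ is whatever reward defines $Q_k$), this yields
\begin{align*}
\big(Q_k^{\pi'} - Q_k^{\pi}\big)(s,a) = \gamma\,\E_{s'\sim p(\cdot\mid s,a)}\big(V_k^{\pi'} - V_k^{\pi}\big)(s'),
\end{align*}
so it suffices to bound $\sup_{s'} \big|(V_k^{\pi'} - V_k^{\pi})(s')\big|$.

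Next I would invoke the performance difference lemma~\citep{kakade2002approximately} with $\pi'$ as the first policy and $\pi$ as the baseline, giving
\begin{align*}
\big(V_k^{\pi'} - V_k^{\pi}\big)(s') = \frac{1}{1-\gamma}\,\E_{(s'',a'')\sim \pi \mid s'} A_k^{\pi'}(s'',a''),
\end{align*}
where $A_k^{\pi'} = Q_k^{\pi'} - V_k^{\pi'}$. The key observation is that $\E_{a''\sim\pi'(\cdot\mid s'')} A_k^{\pi'}(s'',a'') = 0$, so the inner expectation only sees the gap between $\pi$ and $\pi'$:
\begin{align*}
\E_{a''\sim \pi(\cdot\mid s'')} A_k^{\pi'}(s'',a'') = \sum_{a''}\big(\pi(a''\mid s'') - \pi'(a''\mid s'')\big)\,Q_k^{\pi'}(s'',a'').
\end{align*}
I would then bound this sum by $\big(\sum_{a''}|\pi(a''\mid s'') - \pi'(a''\mid s'')|\big)\cdot \sup_{a''}|Q_k^{\pi'}(s'',a'')|$ and apply \cref{lem:NearbyPolicies}, whose total-variation estimate controls the $\ell_1$ factor by $8\epsilon''$. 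Hence every inner term is at most $8\epsilon''\,\sup_{(s'',a'')}|Q_k^{\pi'}(s'',a'')|$.

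Finally I would collect factors: the advantage bound gives $\big|(V_k^{\pi'} - V_k^{\pi})(s')\big| \le \frac{8\epsilon''}{1-\gamma}\sup_{(s'',a'')}|Q_k^{\pi'}(s'',a'')|$, and multiplying by the leading $\gamma \le 1$ from the Bellman step produces the claimed bound, with the supremum over $\pi'' \in \{\pi,\pi'\}$ absorbing the surviving $Q_k^{\pi'}$. I expect no serious obstacle here; the only points needing care are orienting the performance difference lemma so that the surviving value function corresponds to a policy inside the stated supremum, and checking that the hypotheses of \cref{lem:NearbyPolicies} (the log-linear form \eqref{eqn:pi} and $\|w-w'\|_2 = \epsilon'' \le 1$) are exactly those assumed here so that the $8\epsilon''$ bound transfers verbatim.
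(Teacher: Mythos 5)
Your proposal is correct and follows essentially the same route as the paper's proof: both reduce the $Q$-value difference via the performance difference lemma to a discounted sum of terms of the form $\sum_{a}\big(\pi(a\mid s)-\pi'(a\mid s)\big)Q_k^{\pi'}(s,a)$ over states visited under $\pi$, then bound each term by the $\ell_1$ estimate $8\epsilon''$ from \cref{lem:NearbyPolicies} times $\sup_{(s'',a'')}|Q_k^{\pi'}(s'',a'')|$, picking up the $\frac{1}{1-\gamma}$ factor from the discounted sum. The only cosmetic difference is that you peel off the first transition with the Bellman equation and apply the lemma at the level of $V$, whereas the paper applies the $Q$-level version directly (summing from $t=2$); the sign orientation of the performance difference lemma is immaterial here since the final claim takes absolute values and the argument is symmetric in $\pi$ and $\pi'$.
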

\begin{proof}
Using the performance difference lemma and \fullref{lem:NearbyPolicies} we can write
\begin{align}
\(Q_k^{\pi'} - Q_k^{\pi} \)(s,a) & = \sum_{t=2}^\infty \gamma^{t-1} \E_{s_t \sim \pi \mid (s,a)} \Big[ \sum_{a_t}\pi'(a_t \mid s_t)Q_k^{\pi'}(s_t,a_t) - \sum_{a_t}\pi(a_t \mid s_t) Q_k^{\pi'}(s_t,a_t) \Big] \\
& \leq \frac{1}{1-\gamma} \times \sup_s \sum_{a} |\pi'(a \mid s) - \pi(a \mid s)| \times \sup_{(s'',a'')} |Q_k^{\pi'}(s'',a'')| \\
& \leq \frac{8\epsilon''}{1-\gamma}\sup_{(s'',a'')} |Q_k^{\pi'}(s'',a'')|.
\end{align}
Symmetry concludes.
\end{proof}

\begin{lemma}[Stability of the Empirical $Q$-values]
\label[lemma]{lem:StabilityEmpiricalQ}
Let $\pi',\pi,\epsilon''$ as in \cref{eqn:pi}. For any trajectory $\{s_1,a_1,r_1,\dots,s_t,a_t,r_t\}$ of length $t\leq t_{max}$ if $4\epsilon''t_{max} \leq 1$ it holds that
\begin{align}
\Bigg| \( \Pi_{\tau=2}^t \frac{\pi'(s_\tau,a_\tau)}{\underline \pi(s_\tau,a_\tau)} \) \frac{r_t}{1-\gamma} - \(\Pi_{\tau=2}^t \frac{\pi(s_\tau,a_\tau)}{\underline \pi(s_\tau,a_\tau)} \)\frac{r_t}{1-\gamma} \Bigg| \leq 8\epsilon''t_{max} \times  \max_{\pi'' \in \{\pi,\pi'\}} \(\Pi_{\tau=2}^t \frac{\pi''(s_\tau,a_\tau)}{\underline \pi(s_\tau,a_\tau)} \) \frac{r_t}{1-\gamma}
\end{align}
\end{lemma}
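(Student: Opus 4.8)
The plan is to reduce this trajectory-level difference bound to the per-step policy-ratio estimates already established in \cref{lem:NearbyPolicies}. Write $\rho' \defeq \Pi_{\tau=2}^t \frac{\pi'(s_\tau,a_\tau)}{\underline\pi(s_\tau,a_\tau)}$ and $\rho \defeq \Pi_{\tau=2}^t \frac{\pi(s_\tau,a_\tau)}{\underline\pi(s_\tau,a_\tau)}$ for the two importance weights along the fixed trajectory, so that the left-hand side equals $|\rho'-\rho|\,\frac{|r_t|}{1-\gamma}$ and the right-hand side is $8\epsilon'' t_{max}\max\{\rho,\rho'\}\frac{r_t}{1-\gamma}$ (all weights are nonnegative products of nonnegative ratios, and $r_t\geq 0$ here since reward and bonus are nonnegative, so the absolute value may be dropped on the right). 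The central observation is that the two weights share the same behavioral denominators $\underline\pi(s_\tau,a_\tau)$, so their quotient telescopes into a product of per-step policy ratios: $\frac{\rho'}{\rho} = \Pi_{\tau=2}^t \frac{\pi'(a_\tau\mid s_\tau)}{\pi(a_\tau\mid s_\tau)}$, and symmetrically for $\frac{\rho}{\rho'}$.

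Next I would control this product. \cref{lem:NearbyPolicies} gives $\frac{\pi'(a\mid s)}{\pi(a\mid s)}\leq 1+4\epsilon''$ (and the reverse) at every state-action, so with at most $t-1\leq t_{max}$ factors we get $\frac{\rho'}{\rho}\leq (1+4\epsilon'')^{t-1}\leq (1+4\epsilon'')^{t_{max}}$, and likewise $\frac{\rho}{\rho'}\leq (1+4\epsilon'')^{t_{max}}$. Here the hypothesis $4\epsilon'' t_{max}\leq 1$ is exactly what prevents the product of many near-one factors from blowing up: using $(1+x)^m\leq e^{mx}$ and then $e^y\leq 1+2y$ valid for $y\in[0,1]$, I obtain $(1+4\epsilon'')^{t_{max}}\leq e^{4\epsilon'' t_{max}}\leq 1+8\epsilon'' t_{max}$.

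Finally I would convert this multiplicative bound into the additive one in the statement. Assuming without loss of generality $\rho'\geq\rho$ (otherwise swap the roles of $\pi$ and $\pi'$), $|\rho'-\rho| = \rho\big(\frac{\rho'}{\rho}-1\big)\leq 8\epsilon'' t_{max}\,\rho\leq 8\epsilon'' t_{max}\max\{\rho,\rho'\}$, and the symmetric case gives the same bound with $\rho'$ in place of $\rho$. Multiplying through by $\frac{|r_t|}{1-\gamma}$ yields exactly the claimed inequality. I expect the only real subtlety to be the exponentiation step, namely verifying that the $t-1$ near-one per-step ratios compound to a factor no larger than $1+8\epsilon'' t_{max}$; this hinges on the two elementary inequalities above together with the hypothesis $4\epsilon'' t_{max}\leq 1$, whereas the telescoping reduction to per-step ratios and the multiplicative-to-additive conversion are routine.
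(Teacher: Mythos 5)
Your proof is correct and follows essentially the same route as the paper's: factor the difference as $\rho\bigl(\tfrac{\rho'}{\rho}-1\bigr)$, bound the telescoped per-step ratios via \cref{lem:NearbyPolicies} by $(1+4\epsilon'')^{t_{max}}$, apply $(1+x)^{t_{max}}\leq e^{t_{max}x}\leq 1+2t_{max}x$ under the hypothesis $4\epsilon'' t_{max}\leq 1$, and finish by symmetry. Your write-up is in fact slightly more explicit than the paper's on two minor points — dropping the absolute value via nonnegativity of $r_t$, and stating the without-loss-of-generality case split that the paper compresses into ``symmetry concludes'' — but the substance is identical.
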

\begin{proof}
Using \fullref{lem:NearbyPolicies} we can write
\begin{align}
& \( \Pi_{\tau=2}^t \frac{\pi'(s_\tau,a_\tau)}{\underline \pi(s_\tau,a_\tau)} - \Pi_{\tau=2}^t \frac{\pi(s_\tau,a_\tau)}{\underline \pi(s_\tau,a_\tau)} \) \frac{r_t}{1-\gamma} \\
& = \Pi_{\tau=2}^t \frac{\pi(s_\tau,a_\tau)}{\underline \pi(s_\tau,a_\tau)} \( \Pi_{\tau=2}^t \frac{\pi'(s_\tau,a_\tau)}{\pi(s_\tau,a_\tau)} - 1 \) \frac{r_t}{1-\gamma}.
\end{align}
Now, apply \fullref{lem:NearbyPolicies} and the condition $t \leq t_{max}$ to the middle term to derive
\begin{align}
	\Pi_{\tau=2}^t \frac{\pi'(s_\tau,a_\tau)}{\pi(s_\tau,a_\tau)} - 1  \leq (1+4\epsilon'')^{t_{max}} - 1.
\end{align}
Recall $t_{max} \geq 1$; for $x \in \R$, when $xt_{max} \leq 1$ but $x \geq 0$, we have the following inequalities:
\begin{align}
	1+x \leq e^{x} \rightarrow (1+x)^{t_{max}} \leq e^{t_{max}x} \leq 1+2t_{max}x.
\end{align}
Let $x = 4\epsilon''$; if $4\epsilon'' t_{max} \leq 1$ then we have
\begin{align}
(1+4\epsilon'')^{t_{max}} - 1 \leq 8\epsilon''t_{max}.
\end{align}
Symmetry concludes.
\end{proof}

\subsection{Regression Guarantees with Importance Sampling}
\label{sec:Regression}
In this section we examine the rate of convergence of the linear regression that uses the importance sampling estimator in the way it is implemented in the algorithm. The bonus $b^n$ and the `reference' expected and empirical covariance matrices $\Sigma^n, \widehat \Sigma^n$ are fixed throughout this section (as they are fixed in all inner iterations of the algorithm). As the outer iteration index $n$ is constant, we often omit it for brevity.

\textbf{Remark}: for additional notation please see \cref{tab:MainNotation}.

\textbf{Remark}: in \cref{eqn:Gdef}, if the trajectory is of length $1$ then the bonus is not added to accommodate the linear MDP framework, and instead it is (half) added directly to the predictor $\Qhat$, resulting in the pessimistic biased estimate described in the main text.

\begin{lemma}[Statistical Rate for a Fixed Target Policy Regression with Importance Sampling]
\label[lemma]{lem:ISrates}
Fix a behavioral policy $\underline \pi$ and a target policy $\pi$ satisfying \cref{eqn:policy-conditions} with $\epsilon' = 1$. Fix a bonus function  $0 \leq b(\cdot,\cdot) \leq B$.
Consider drawing $n$ samples, as follows. For every sample $i\in[n]$ first draw a timestep $t_i \geq 1$ with probability $\Pro(t_i = \tau) = \gamma^{\tau-1}(1-\gamma)$ and a starting state-action $(s_{i1},a_{i1})\sim \rho$ for some distribution $\rho$. Second, draw a trajectory $\{s_{i1},a_{i1},r_{i1},\dots,s_{it_i},a_{it_i},r_{it_i} \}$ from $(s_{i1},a_{i1})$ by following $\underline \pi$ for $t_i-1$ timesteps. Define the random return $G_i$ as \begin{align}
\label{eqn:Gdef}
G_i =
\begin{cases}
\frac{1}{1-\gamma}\Big[r_{it_i} + b(s_{it_i},a_{it_i})\Big] \quad & \text{if} \; t_i \geq 2 \\
\frac{1}{1-\gamma}\Big[r_{it_i}\Big]  \quad & \text{if} \; t_i = 1
\end{cases}
\end{align}
Let $G_{max}$ be a deterministic upper bound to any realization of $G_i$ above (its value is defined in \cref{tab:MainNotation}).
Define the empirical loss and the empirical minimizer
\begin{align}
\label{eqn:Lhat}
\widehat {\mathcal L}(w,\pi) = \frac{1}{2n}\sum_{i=1}^n  \( \phi(s_{i1},a_{i1})^\top w - \Pi_{\tau=2}^{t_i} \frac{\pi(s_{i\tau},a_{i\tau})}{\underline \pi(s_{i\tau},a_{i\tau})} G_i\)^2, \quad \quad \quad \widehat w = \argmin_{\| w \|_2 \leq W} \widehat {\mathcal L}(w,\pi).
\end{align}
Define the true minimizer of the loss in \cref{eqn:L}
\begin{align}
\label{eqn:Largmin}
w^\star \defeq \argmin_{\|w\|_2 \leq W} \mathcal L(w,\rho,Q^{b,\pi} - b)
\end{align}
where $Q^{b,\pi}$ is the state-action value function of $\pi$ on $\M(\StateSpace,\ActionSpace,p,r+b,\gamma)$.
Let $\Sigma = n \E_{(s,a)\sim \rho}\phi(s,a)\phi(s,a)^\top + \lambda I$. With probability at least $1-(n+1)\delta'$
\begin{align}
\|w^\star - \widehat w \|^2_{\Sigma} \leq 2(C_1+C_2\ln\frac{1}{\delta'}) + 2\lambda W^2 = \widetilde O(d W^2).
\end{align}

\end{lemma}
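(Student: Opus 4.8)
The plan is to read this as a constrained linear least-squares problem in which the importance-weighted Monte Carlo return $\rho_i G_i$ is an unbiased and (with high probability) bounded estimate of the regression target, and then to run a fast-rate least-squares argument directly in the population $\Sigma$-norm. First I would identify the target. Conditioning on the initial pair $(s_{i1},a_{i1})$, the importance sampling estimator is unbiased for the value of $\pi$ on the bonus-augmented MDP by \cref{lem:bias-variance}, so $\E[\rho_i G_i \mid (s_{i1},a_{i1})]$ is the discounted $(r+b)$-return of $\pi$. The only subtlety is the $t_i=1$ branch of \cref{eqn:Gdef}, where the bonus is deliberately omitted; carrying out the geometric-horizon expectation shows this drops exactly the first-step bonus, so $\E[\rho_i G_i\mid (s_{i1},a_{i1})] = (Q^{b,\pi}-b)(s_{i1},a_{i1})$, which is precisely the function whose best $\rho$-fit defines $w^\star$ in \cref{eqn:Largmin}. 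The problem therefore reduces to least squares over i.i.d. draws $(s_{i1},a_{i1})\sim\rho$ with target $g\defeq Q^{b,\pi}-b$, population minimizer $w^\star$ and empirical minimizer $\widehat w$.

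Next I would establish boundedness. Applying \cref{lem:bias-variance} with $\epsilon'=1$ to each sample, the horizon is at most $\ln(1/\delta')/(1-\gamma)$ and $\rho_i\le 2$ with probability at least $1-\delta'$, hence $|\rho_i G_i|\le 2G_{max}=W$; a union bound over the $n$ samples contributes the $n\delta'$ part of the stated $(n+1)\delta'$ failure probability and bounds every residual $\eta_i\defeq \rho_iG_i-\phi_i^\top w^\star$ (writing $\phi_i\defeq \phi(s_{i1},a_{i1})$) by $O(W)$. On this event, optimality of $\widehat w$ for the constrained empirical loss gives the basic inequality $\tfrac12\sum_i(\phi_i^\top\Delta)^2 \le \langle Z,\Delta\rangle$ with $\Delta\defeq\widehat w-w^\star$ and $Z\defeq\sum_i\eta_i\phi_i$.

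The heart of the argument is bounding $\langle Z,\Delta\rangle$ in the $\Sigma$-norm so as to obtain the fast $d/n$ rate. I would split $Z$ into its mean and fluctuation. First-order optimality of $w^\star$ over the $W$-ball gives $\langle \E Z,\Delta\rangle \le 0$, so only $\langle Z-\E Z,\Delta\rangle\le \|Z-\E Z\|_{\Sigma^{-1}}\,\|\Delta\|_\Sigma$ matters; in particular the misspecification never hurts and can contribute at most an additive $O(W^2)$ slack. Since the samples are independent and $|\eta_i|=O(W)$, $\mathrm{Cov}(Z)=\sum_i\mathrm{Cov}(\eta_i\phi_i)\preceq 4W^2\, n\,\E_\rho[\phi\phi^\top]\preceq 4W^2\Sigma$, so the expected self-normalized size is $\E\|Z-\E Z\|_{\Sigma^{-1}}^2 \le \mathrm{tr}\!\big(\Sigma^{-1}\mathrm{Cov}(Z)\big)\le 4W^2 d$; a single Bernstein/self-normalized tail bound then yields $\|Z-\E Z\|_{\Sigma^{-1}}^2\le C_1+C_2\ln(1/\delta')$ with $C_1=\widetilde O(dW^2)$ and $C_2=\widetilde O(W^2)$, consuming the last ``$+1$'' event. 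Converting the empirical Gram form $\sum_i(\phi_i^\top\Delta)^2$ into $\|\Delta\|_\Sigma^2$ (up to the regularization term, absorbed via the crude bound $\lambda\|\Delta\|_2^2\le 4\lambda W^2$) turns the basic inequality into a scalar quadratic of the form $\|\Delta\|_\Sigma^2 \le a\|\Delta\|_\Sigma + b$ with $a=O(\|Z-\E Z\|_{\Sigma^{-1}})$ and $b=O(\lambda W^2)$, which I solve to get $\|w^\star-\widehat w\|_\Sigma^2 \le 2(C_1+C_2\ln(1/\delta'))+2\lambda W^2 = \widetilde O(dW^2)$.

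The hard part will be the passage between the empirical quadratic form $\sum_i(\phi_i^\top\Delta)^2$ produced by the basic inequality and the population matrix $\Sigma = n\,\E_\rho[\phi\phi^\top]+\lambda I$ in which the conclusion is phrased: bridging these requires a covariance concentration (the natural place to invoke \cref{lem:Covariance}) carried out simultaneously with keeping the heavy importance weights controlled through the step-two boundedness event. The second delicate point is making the self-normalized bound on $Z$ interact cleanly with the non-mean-zero residual, so that the misspecification enters only as the $O(W^2)$ additive slack and never degrades the $d/n$ rate; this relies entirely on the optimality of $w^\star$ steering the cross-term's mean to the favorable side.
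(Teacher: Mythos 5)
Your proposal reaches the right conclusion but by a genuinely different route than the paper. The paper's proof is a short composition of three black boxes: \cref{lem:bias-variance} gives exactly your first two steps (unbiasedness of the importance-weighted return for $Q^{b,\pi}-b$, and boundedness by $2G_{max}$ on an event of probability $1-n\delta'$); then \cref{lem:StatisticalRates} --- the exp-concave ERM fast-rate theorem of \citet{mehta2017fast} --- bounds the \emph{population} excess risk $\mathcal L(\widehat w,\pi)-\min_{\|w\|_2\leq W}\mathcal L(w,\rho,Q^{b,\pi}-b)$ by $(C_1+C_2\ln(1/\delta'))/n$; and \cref{lem:Sigma2ERM}, a deterministic argument built on the first-order optimality of $w^\star$ (the same optimality trick you use to kill the bias term), converts that excess risk into the $\Sigma$-norm bound by taking $M=n$. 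The crucial structural difference is that excess risk is already a population quantity, so the paper never touches the empirical Gram matrix $\sum_i \phi_i\phi_i^\top$ and needs no lower bound on $\lambda$ in this lemma. Your basic-inequality route lives at the empirical level and therefore must do two extra pieces of work that the paper's modularity avoids: (a) a self-normalized vector concentration bound for $\|Z-\E Z\|_{\Sigma^{-1}}$, which you assert but do not prove --- note that \cref{lem:Bernstein} is scalar, so this requires a sphere-covering union bound in the style of the proof of \cref{lem:Covariance}; and (b) the passage from $\sum_i(\phi_i^\top\Delta)^2$ to $\|\Delta\|^2_\Sigma$ via \cref{lem:Covariance}, which requires $\lambda\geq\lambda_{min}=\Omega(d\ln(n/\delta''))$ (true under the paper's parameter choice in \cref{tab:MainNotation}, but an extra hypothesis the paper's proof of this lemma does not need) and consumes failure probability beyond your stated $(n+1)\delta'$ budget, which you have already spent entirely on boundedness and the noise bound. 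Both steps can in fact be carried out, so I view your route as viable rather than flawed: it buys self-containedness (no external fast-rate theorem, and the dimension factor appears transparently as $\Tr(\Sigma^{-1}\mathrm{Cov}(Z))\leq 4W^2d$), at the price of heavier concentration machinery, a stronger requirement on $\lambda$, and probability bookkeeping that must be repaired.
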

\begin{proof} The hypotheses ensure through
\fullref{lem:bias-variance}  that the importance sampling estimator in \cref{eqn:Gdef} is unbiased estimate of $Q^{b,\pi}(s_{i_1},a_{i_1}) -b(s_{i_1},a_{i_1})$ and bounded by $2G_{max}$ in absolute value with probability at least $1-n\delta'$ for all $n$ samples.
Combining this with \fullref{lem:StatisticalRates} we obtain that for a fixed target policy with probability at least $1-(n+1)\delta'$ we must have
\begin{align}
& \mathcal L(\widehat w,\pi) - \min_{\|w\|_2\leq W}\mathcal L(w,\rho,Q^{b,\pi} - b)\leq \frac{C_1 + C_2\ln\frac{1}{\delta'}}{n}, \quad \quad \quad
\end{align}
where
\begin{align}
C_1 & = \widetilde O\((W^2+G_{max}^2)d\) \\
C_2 & = \widetilde O\((W^2+G_{max}^2)\).
\end{align}
Finally, using \fullref{lem:Sigma2ERM} we conclude.
\end{proof}

\begin{lemma}[Union Bound]
\label[lemma]{lem:UnionBound}
Assume $0 \leq k - \underline k \leq \kappa$ where $\kappa$ is defined in \cref{eqn:kappa}.
Let $w^\star_k$ be as in \fullref{def:TransferError}, and let $\widehat w_k$ be the parameter computed during regression by  \cref{alg:solver} in line \ref{line:regression}. For some universal constant $c$ we have
\begin{align}
\Pro\(\|\widehat w_k -w^\star_k \|_{(\Sigma^{n})^{-1}} \leq c\sqrt{C_1+C_2\ln\frac{1}{\delta'} + \lambda W^2}\) \geq 1-\Big[\polyall\Big]^d \delta' - \frac{\delta}{4}\end{align}
\end{lemma}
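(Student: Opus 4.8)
The plan is to lift the fixed-target guarantee of \cref{lem:ISrates} to the data-dependent policy $\pi_k$ that \cref{alg:solver} actually evaluates, via an $\epsilon''$-net over target policies combined with the perturbation estimates of \cref{sec:Perturbations}. The reason \cref{lem:ISrates} cannot be applied directly is that $\pi_k$ is \emph{not} independent of the regression data: it is produced from the behavioral policy $\pi_{\underline k}$ by the exponentiated-weight updates driven by the critics $\widehat w_{\underline k},\dots,\widehat w_{k-1}$, which are fit on the same reused trajectories. Conditioning on $\pi_{\underline k}$, however, the form \cref{eqn:pi} shows that $\pi_k$ is a log-linear tilt of $\pi_{\underline k}$ indexed by the single vector $w=\eta\sum_{j=\underline k}^{k-1}\widehat w_j$, which lies in a Euclidean ball of radius at most $\eta\kappa W=\poly(\cdot)$ because $k-\underline k\leq\kappa$.

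First I would fix a deterministic $\ell_2$-net $\mathcal N$ of that ball at a polynomially small resolution $\epsilon''$ to be chosen at the end, so that $|\mathcal N|\leq (C/\epsilon'')^d=[\polyall]^d$. Each net point $w'$, together with the fixed $\pi_{\underline k}$, determines a \emph{fixed} target policy $\pi'$ of the form \cref{eqn:pi}, to which \cref{lem:ISrates} applies with its $(n+1)\delta'$ failure probability. A union bound over $\mathcal N$, absorbing the $\poly(n)$ factor, then yields that with probability at least $1-[\polyall]^d\delta'$ the bound $\|\widehat w(\pi')-w^{\star}(\pi')\|_{\Sigma^n}\leq c'\sqrt{C_1+C_2\ln(1/\delta')+\lambda W^2}$ holds \emph{simultaneously} for all $w'\in\mathcal N$, where $\widehat w(\pi')$ and $w^{\star}(\pi')$ are the empirical and population regression solutions for target $\pi'$.

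Next I would transfer this to $\pi_k$ by picking the nearest net point $w'$, so $\|w-w'\|_2\leq\epsilon''$, and using the triangle inequality in the $\Sigma^n$-weighted norm controlled by \cref{lem:ISrates},
\[
\|\widehat w_k-w^{\star}_k\|_{\Sigma^n}\leq \|\widehat w_k-\widehat w(\pi')\|_{\Sigma^n}+\|\widehat w(\pi')-w^{\star}(\pi')\|_{\Sigma^n}+\|w^{\star}(\pi')-w^{\star}_k\|_{\Sigma^n}.
\]
The middle term is the union-bound estimate above. The last (population) term is bounded by \cref{lem:StabilityQ}: since $\pi'$ and $\pi_k$ differ only by the $\epsilon''$-perturbation in $w$, their optimistic $Q$-functions, and hence the ridge minimizers $w^{\star}(\pi'),w^{\star}_k$, are $O(\epsilon''/(1-\gamma))$-close. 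The first (empirical) term is bounded by \cref{lem:StabilityEmpiricalQ}: on the event that every reused trajectory has length at most $t_{max}$ --- which holds with probability at least $1-\delta/4$ by \fullref{lem:TrajectoryBoundness}, supplying precisely the $\delta/4$ in the statement --- the importance-weighted returns for the two targets agree up to a multiplicative $8\epsilon'' t_{max}$, so the two empirical least-squares objectives, and therefore their constrained minimizers, are correspondingly close.

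Finally I would choose $\epsilon''$ polynomially small so that $4\epsilon'' t_{max}\leq 1$ (required by \cref{lem:StabilityEmpiricalQ}) and so that both perturbation terms are dominated by the fixed-policy term; after adjusting $c$ this collapses the right-hand side to $c\sqrt{C_1+C_2\ln(1/\delta')+\lambda W^2}$, while $\epsilon''=1/\poly(\cdot)$ keeps $|\mathcal N|=[\polyall]^d$ and the failure probability at $[\polyall]^d\delta'+\delta/4$. I expect the main obstacle to be the final bookkeeping step that converts closeness of the two empirical objectives (\cref{lem:StabilityEmpiricalQ}, which bounds the per-sample reweighted returns only multiplicatively) into closeness of the constrained minimizers $\widehat w_k,\widehat w(\pi')$ in the $\Sigma^n$-norm; this needs the $\Sigma^n$-strong convexity of the least-squares loss together with care that the multiplicative return bound stays controlled because the returns are bounded by $G_{max}$.
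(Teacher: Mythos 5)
Your proposal is correct and follows essentially the same route as the paper's proof: the same $\epsilon''$-net over the data-dependent vector $\sum_{i=\underline k}^{k-1}\widehat w_i$ in a $\poly$-radius ball, the same three-term triangle-inequality decomposition, the middle term handled by a union bound over net points with \cref{lem:ISrates}, and the two perturbation terms handled by \cref{lem:StabilityQ} and \cref{lem:StabilityEmpiricalQ} on the bounded-trajectory-length event, with $\epsilon''$ chosen polynomially small so these terms are absorbed. The one step you flag as an obstacle is exactly what the paper's \cref{lem:Stability} (Stability of the Loss Minimizer) provides: via the first-order optimality conditions of the constrained least-squares problems, a pointwise perturbation of the regression targets by $\epsilon_f$ (obtained from your multiplicative bound times the $2G_{max}$ bound on the returns) yields $\|w'_\star-w_\star\|^2_{M\Sigma+\lambda I}\leq 2M\epsilon_f W+2\lambda W^2$, which is applied once to the empirical minimizers and once to the population ones.
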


\begin{proof}
Assume
\begin{align}
\label{eqn:kappa}
k - \underline k \leq \kappa \defeq \frac{(1-\gamma)\ln 2}{2\ln(8N^2K/\delta)\eta\( B + W \)}.
\end{align}
From we know that the all trajectories are bounded by $t_{max}$ with probability $>1-\delta/8$.
Define the unit ball  $\mathcal B \defeq \{ v \in \R^d \mid \| v \|_2 \leq 1 \}$.
As the target policy $\pi_k$ is a priori unknown we do a union bound over the possible vectors $v \defeq \sum_{i = \underline k}^{k-1} \widehat w_{i}$ which is a priori unknown and data-dependent. Since $v \in  (\kappa W)\mathcal B$,  consider the discretization $\mathcal D \subset (\kappa W)\mathcal B$ given in \fullref{lem:Discretization} with $\epsilon''$ to be determined in this proof.
The lemma ensures that for any $v \in (\kappa W)\mathcal B, \; \exists v' \in \mathcal D, \|v - v'\|_2 \leq \epsilon''$ and that $| \mathcal D| = (1+\frac{2\kappa W}{\epsilon''})^d$.

Now fix $v$ and let $\pi$ be the policy induced by $v$ and $\pi'$ the policy induced by $v'$.
Consider the empirical loss in \cref{eqn:Lhat} and let $\widehat w_v$ be the empirical minimizer corresponding to $\pi$ and $\widehat w_{v'}$ that corresponding to $\pi'$.
In addition let $w^{\star}_v$ be the minimizer corresponding
to $\pi$ and $w^{\star}_{v'}$ the minimizer corresponding to $\pi'$ of the true loss in \cref{eqn:Largmin}. We can write ($\Sigma$ is the expected covariance matrix which is fixed throughout the inner iterations)
\begin{align}
\label{eqn:Decomposition}
\| \widehat w_v - w^\star_v \|_{\Sigma} & \leq
\| \widehat w_v - \widehat w_{v'}\|_{\Sigma} + \| \widehat w_{v'} - w^\star_{v'}\|_{\Sigma} + \| w^\star_{v'} - w^\star_{v} \|_{\Sigma}.
\end{align}
We bound each term above.

In the event defined at the beginning of this proof that all trajectories are bounded in length by $t_{max}$ the importance sampling estimator in \cref{eqn:IS} is bounded in absolute value by $2G_{max} = \frac{2}{1-\gamma}(3B)$ for all policies $\pi$ satisfying \cref{eqn:policy-conditions} ($3B$ is the maximum absolute value of the reward including the bonus) and for all $n$ samples; In particular, the random timestep $t$ of any trajectory is bounded by $t_{max}$. Then \fullref{lem:StabilityEmpiricalQ} ensures that the importance sampling estimator for $\pi$ and $\pi'$ only differ by $8\epsilon''t_{max} G_{max}$. Plugging this into \fullref{lem:Stability} ensures
\begin{align}
\label{eqn:100}
\| \widehat w_{v'} - \widehat  w_{v} \|^2_{\Sigma} &  \leq   2 n(8\epsilon''t_{max} G_{max})  W + 2 \lambda W^2.
\end{align}
Likewise, \fullref{lem:StabilityQ} ensures that the true $Q$ values for $\pi$ and $\pi'$ on the optimistic MDP differ by at most $\frac{8\epsilon''}{1-\gamma}(2G_{max})$. Then \fullref{lem:Stability} ensures
\begin{align}
\label{eqn:101}
\| \widehat w^{\star}_{v'} - \widehat  w^{\star}_{v} \|^2_{\Sigma} &  \leq   2 n\( \frac{8\epsilon''}{1-\gamma}(2G_{max})\)  W + 2 \lambda W^2.
\end{align}
Setting $\frac{1}{\epsilon''} = \poly(N,K,\frac{1}{1-\gamma},G_{max},\ln(1/\delta))$ ensures that the rhs of \cref{eqn:100,eqn:101} is, say, $\leq 4\lambda W^2$ (we will have $\lambda > 1$ and $W > 1$) and also satisfies the requirement $4\epsilon''t_{max}\leq 1$ of  \fullref{lem:StabilityEmpiricalQ} ($t_{max}$ was defined at the beginning of the proof).

The $\epsilon''$ just computed determines the size of the discretization set $|\mathcal D|$ which is $\Big[\poly(n,\frac{1}{1-\gamma},G_{max})\Big]^d $.
A union bound over all $v' \in \mathcal D$ coupled with \fullref{lem:ISrates} ensures that with probability at least $$1-\Big[\polyall\Big]^d\frac{\delta'}{2}-\frac{\delta}{8}$$ for an appropriate universal constant $c$
\begin{align}
\forall v' \in \mathcal D: \quad \quad \quad \|\widehat w_{v'} - \widehat w_{v'}^\star \|^2_{\Sigma} \leq c \(C_1+C_2\ln\frac{1}{\delta'} + \lambda W^2 \).
\end{align}
Plugging back to \cref{eqn:Decomposition}
 concludes.
\end{proof}

\begin{lemma}[Validity of Confidence Intervals]
\label[lemma]{lem:ValidityOfConfidenceItervals}
With probability at least $1-\frac{\delta}{2}$ for all inner and outer iterations $n \in [N], k = 0,\dots,K-1$ of the algorithm it holds that
\begin{align}
\forall s\in \K^n, \forall a: \quad \quad \quad 	|Q^{n,\star}_k(s,a) - \Qhat^n_k(s,a) - b^n_\phi(s,a)| \leq b^n_\phi(s,a) \defeq \sqrt{\beta}\| \phi(s,a) \|_{(\widehat \Sigma^{n})^{-1}}
\end{align}
where $\beta$ is defined in \cref{eqn:beta}.
\end{lemma}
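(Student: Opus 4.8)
The plan is to reduce the two-sided confidence interval to a single inner-product bound and then control it by Cauchy--Schwarz. First I would substitute the definitions of the two predictors on a known state-action $(s,a)\in\K^n$ (where every action is known, so both predictors take their linear-plus-bonus form), namely $Q^{n,\star}_k(s,a)=\phi(s,a)^\top w^{n,\star}_k + 2b^n_\phi(s,a)$ and $\Qhat^n_k(s,a)=\phi(s,a)^\top\widehat w^n_k + b^n_\phi(s,a)$. Their difference telescopes the bonus offsets, giving
\begin{align*}
Q^{n,\star}_k(s,a) - \Qhat^n_k(s,a) - b^n_\phi(s,a) = \phi(s,a)^\top\big(w^{n,\star}_k - \widehat w^n_k\big),
\end{align*}
so the claim is exactly $|\phi(s,a)^\top(w^{n,\star}_k-\widehat w^n_k)|\leq \sqrt{\beta}\,\|\phi(s,a)\|_{(\widehat\Sigma^n)^{-1}}$. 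Writing $\phi^\top\Delta w = \big((\widehat\Sigma^n)^{-1/2}\phi\big)^\top\big((\widehat\Sigma^n)^{1/2}\Delta w\big)$ with $\Delta w \defeq w^{n,\star}_k-\widehat w^n_k$ and applying Cauchy--Schwarz yields $|\phi^\top\Delta w|\leq \|\phi\|_{(\widehat\Sigma^n)^{-1}}\,\|\Delta w\|_{\widehat\Sigma^n}$, which reduces the entire statement to the single bound $\|w^{n,\star}_k-\widehat w^n_k\|_{\widehat\Sigma^n}\leq\sqrt{\beta}$, uniformly over the iterations.

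The error vector $\Delta w$ is controlled by the regression analysis, but in the wrong geometry: \cref{lem:UnionBound} (via \cref{lem:ISrates}) bounds $\|\Delta w\|$ in the norm of the \emph{expected} regression covariance $\Sigma^n\defeq \underline n\,\E_{(s,a)\sim\rho^n_{cov}}[\phi(s,a)\phi(s,a)^\top]+\lambda I$ by $c\sqrt{C_1+C_2\ln(1/\delta')+\lambda W^2}=\widetilde O(\sqrt d\,W)$, whereas the bonus is defined through the \emph{empirical} covariance $\widehat\Sigma^n$. The key remaining step is therefore to transfer the bound from the $\Sigma^n$-norm to the $\widehat\Sigma^n$-norm. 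For this I would invoke the covariance concentration result \cref{lem:Covariance} to establish $\widehat\Sigma^n\preceq 2\Sigma^n$ with high probability, so that $\|\Delta w\|^2_{\widehat\Sigma^n}\leq 2\|\Delta w\|^2_{\Sigma^n}\leq 2c^2\big(C_1+C_2\ln(1/\delta')+\lambda W^2\big)$; defining $\beta$ in \eqref{eqn:beta} to equal this final quantity (so $\beta=\widetilde O(dW^2)$) then closes the argument. This spectral comparison is legitimate only because the two matrices are built on the same scale: \textsc{FeatureSampler} accumulates $\widehat\Sigma^n$ from exactly the sequence of policies $\{\pi^{\underline m}\}$ that also constitutes the policy cover $\rho^n_{cov}$ driving the regression, so $\E[\widehat\Sigma^n]$ matches $\Sigma^n$ up to a constant factor.

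The hard part is precisely this covariance concentration step. Because the features $\phi^m$ entering $\widehat\Sigma^n$ are drawn from policies $\pi^{\underline m}$ that themselves depend on the past data, $\widehat\Sigma^n$ is a sum of an adapted (martingale) sequence rather than of i.i.d.\ terms, so \cref{lem:Covariance} must deliver a self-normalized matrix concentration bound comparing $\widehat\Sigma^n$ with its conditional mean, uniformly over all outer iterations $n$; this is the genuine technical crux and the reason a new inequality is needed. The remaining work is bookkeeping for the union bound: \cref{lem:UnionBound} already covers \emph{all} admissible target policies $\pi_k$ through its discretization net, at a failure cost of $[\polyall]^d\delta'$ per data-collection round, and I would union bound this over the $\widetilde O\big(\tfrac{d}{(1-\gamma)^4\epsilon}\big)$ rounds, choosing $\delta'$ small enough that the aggregate failure probability is at most $\delta/2$. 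The resulting $\ln(1/\delta')=\widetilde O(d)$ is absorbed into the $C_2\ln(1/\delta')$ term and hence into $\beta=\widetilde O(dW^2)$, leaving the stated bound intact.
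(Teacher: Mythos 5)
Your proposal is correct and follows essentially the same route as the paper: the same algebraic cancellation reducing the claim to $|\phi(s,a)^\top(w^{n,\star}_k-\widehat w^n_k)|\leq\sqrt{\beta}\|\phi(s,a)\|_{(\widehat\Sigma^n)^{-1}}$, Cauchy--Schwarz, the regression-plus-discretization bound of \cref{lem:UnionBound}, and the covariance concentration of \cref{lem:Covariance} to reconcile the expected and empirical matrices, with the same union-bound bookkeeping defining $\beta$. The only cosmetic difference is that you apply the spectral comparison to the parameter error ($\|\Delta w\|_{\widehat\Sigma^n}\lesssim\|\Delta w\|_{\Sigma^n}$) whereas the paper applies it to the feature norm ($\|\phi\|_{(\Sigma^n)^{-1}}\leq 3\|\phi\|_{(\widehat\Sigma^n)^{-1}}$); since \cref{lem:Covariance} is two-sided, these are interchangeable.
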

\begin{proof}
Define an appropriate
$\delta = \Big[\polyall\Big]^d \times \delta'$
and invoke \cref{lem:UnionBound}. A union bound over all inner and outer iterations ensures
\begin{align}
\Pro\(\forall n\in[N],\forall k=0,1,\dots,K-1: \|\widehat w^n_k -w^{n,\star}_k \|_{(\Sigma^{n})^{-1}} \leq \frac{1}{3}\sqrt{\beta}\) \geq 1-\frac{\delta}{4}
\end{align}
where
\begin{align}
\label{eqn:beta}
\beta & = \widetilde O \(C_1+C_2 d \times\ln[\polyall] + \lambda W^2 \) \\
& = \widetilde O(dW^2 + dG_{max}^2)
\end{align}
Combining the above result with \fullref{lem:Covariance} gives with probability $1-\delta/2$:
\begin{align}
|\phi(s,a)^\top (w_k^{n,\star} - \widehat w^n_k)| & \leq \| \phi(s,a) \|_{( \Sigma^{n})^{-1}}\| \widehat w^n_k - w^{n,\star}_k \|_{\Sigma^n} \\
& \leq 3\| \phi(s,a) \|_{(\widehat \Sigma^{n})^{-1}}\| \widehat w^n_k - w^{n,\star}_k \|_{\Sigma^n} \\
& = \sqrt{\beta}\| \phi(s,a) \|_{(\widehat \Sigma^{n})^{-1}}.
\end{align}
In other words, thanks to \fullref{lem:Covariance} we can use the empirical covariance in place of the full covariance. This implies that we can write the confidence intervals fully as a function of known quantities, in particular, using the empirical covariance matrix $\widehat \Sigma^n$ that \cref{alg:driver} maintains.

Using the definitions for the $Q$ values (still under the same event in known states):
\begin{align}
|\Qhat^n_k(s,a) + b^n_\phi(s,a) - Q^{n,\star}_k(s,a)| & = |\phi(s,a)^\top\widehat w^n_k + b^n_\phi(s,a) + b^n_\phi(s,a) - \phi(s,a)^\top w^{n,\star}_k - 2b^n_\phi(s,a)| \\
& \leq |\phi(s,a)^\top\widehat w^n_k - \phi(s,a)^\top w^{n,\star}_k| \\
& \leq \sqrt{\beta}\| \phi(s,a) \|_{(\widehat \Sigma^{n})^{-1}} \defeq b^n_\phi(s,a).
\end{align}
\end{proof}

\begin{lemma}[Trajectory Boundness]
\label[lemma]{lem:TrajectoryBoundness}
	Under the conditions on $\kappa$ in \fullref{lem:UnionBound}, all trajectories sampled by  \cref{alg:Sampler,alg:FeatureSampler} are bounded in length by $t_{max} = \frac{\ln(16N^2K/\delta)}{1-\gamma}$ with probability at least $1-\delta/8$.
\end{lemma}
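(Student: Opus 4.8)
The plan is to reduce the statement to a geometric tail estimate followed by a union bound over every trajectory the algorithm ever samples. The starting point is the observation that each trajectory length produced by \cref{alg:Sampler,alg:FeatureSampler} is governed by a geometric random variable $t$ with mass function $\Pro(t=\tau)=\gamma^{\tau-1}(1-\gamma)$: in \textsc{FeatureSampler} this is the single horizon $\tau$, whereas in \textsc{MonteCarlo} there are two such variables per sampled trajectory, namely the burn-in length $\tau$ used to reach the state $s$ and the rollout length $h$ along the evaluation policy. As already computed inside the proof of \fullref{lem:bias-variance}, any such variable obeys $\Pro(t>T)=\gamma^{T}$.

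First I would instantiate this tail at $T=t_{max}$. Using $\ln(1/\gamma)\ge 1-\gamma$, hence $\gamma\le e^{-(1-\gamma)}$, together with the definition $t_{max}=\frac{\ln(16N^2K/\delta)}{1-\gamma}$, I obtain
\begin{align*}
\Pro(t>t_{max})=\gamma^{t_{max}}\le e^{-(1-\gamma)t_{max}}=\frac{\delta}{16N^2K}.
\end{align*}
This is the per-trajectory failure probability; the factor $16N^2K$ baked into $t_{max}$ is exactly what will absorb the union bound below.

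Next I would count the total number $M$ of geometric length variables drawn over the whole execution and show $M\le 2N^2K$. \textsc{FeatureSampler} is called once per outer iteration (line~\ref{line:Sigmahat}), contributing $N$ variables. For \textsc{MonteCarlo}, the number of \textsc{Solver} invocations is at most $N$, and in fact only $O(d\log N)$ by the lazy-update rule (\cref{lem:Switches}); within each invocation the data-collection rule, whose period is the $\kappa$ of \fullref{lem:UnionBound}, fires at most $\lceil K/\kappa\rceil\le K$ times, and each such collection draws $q\le N$ trajectories, each carrying two geometric variables. Hence \textsc{MonteCarlo} contributes at most $2\,(d\log N)\,K\,N\le N^2K$ variables in the regime of interest, which together with the $N$ feature samples gives $M\le 2N^2K$.

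Finally, a union bound over all $M\le 2N^2K$ variables yields
\begin{align*}
\Pro\big(\exists\ \text{length}>t_{max}\big)\le M\cdot\frac{\delta}{16N^2K}\le\frac{\delta}{8},
\end{align*}
which is the claim. The probabilistic content is entirely routine; I expect the only place requiring care to be the trajectory bookkeeping --- in particular not forgetting the two independent geometric draws per \textsc{MonteCarlo} sample, and tracking that it is the deliberately loose over-count $N\cdot K\cdot N$ that dictates the $N^2K$ appearing inside the logarithm in $t_{max}$.
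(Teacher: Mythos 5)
Your proposal is correct and follows essentially the same route as the paper: the paper's proof likewise reduces to the geometric tail bound $\Pro(t>t_{max})=\gamma^{t_{max}}\le \delta/(16N^2K)$ (obtained there by citing \cref{lem:bias-variance} rather than computing it inline) followed by a union bound over the at most $N\cdot K\cdot N$ Monte Carlo trajectories plus the $N$ draws from \cref{alg:FeatureSampler}. If anything, your bookkeeping is slightly more careful than the paper's --- you account for the two geometric draws per \textsc{MonteCarlo} trajectory and correctly observe that the length claim is a policy-independent property of the geometric sampling, whereas the paper's proof routes through \cref{lem:Update} and the $\kappa$/policy-form conditions only because it invokes \cref{lem:bias-variance} as a black box.
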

\begin{proof}
	Then \fullref{lem:Update} ensures that the policies $\pi_k,\pi_{\underline k}$ take the form described in \cref{eqn:policy-conditions} with $\epsilon' = 1, \delta'=\delta/(8N^2K)$ and in particular, \fullref{lem:bias-variance} ensures that the trajectory lengths are all bounded by $t_{max} = \frac{\ln(16N^2K/\delta)}{1-\gamma}$ with probability at least $1-\delta/8$ after a union bound over $N$ trajectories collected possibly collected at each of the $K$ solver's iterations, times at most $N$ calls to the \textsc{Solver}, and a final union bound over the trajectories samples by \cref{alg:Sampler,alg:FeatureSampler}.
\end{proof}

\newpage
\section{Concentration of Bonuses}

The proof proceeds with the empirical covariance matrices since the determinant conditions is checked on the empirical matrices.

\textbf{Notation:} In this section for notational convenience the subscripts refer to the outer episode $n$; for example, we denote the covariance matrix with $\Sigma_n$ instead of $\Sigma^n$

\begin{lemma}[Bounding the Sum of Indicators]
\label[lemma]{lem:SumOfIndicators}
For any outer episode $n$ during the execution of the algorithm, let $\sigma(n)$ be the last episode smaller than $n$ where the bonus was updated. If $\pi^n$ takes an action where $b_\1$ is nonzero in a state  $s\notin \K^n$ and $\lambda \geq 1$ then under the event of \fullref{lem:SumOfBonuses} we have
\begin{align}
\sum_{n=1}^N\E_{(s,a) \sim \pi^n \mid s_0}b^{\sigma(n)}_\1(s,a) \defeq \frac{3}{1-\gamma} \sum_{n=1}^N\E_{s \sim \pi^n \mid s_0}\unkn &  \leq \widetilde O\(\frac{\sqrt{\beta Nd}}{1-\gamma}\).
\end{align}
\end{lemma}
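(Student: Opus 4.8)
The plan is to bound the expected indicator pointwise by a truncated elliptic bonus, pass from the expectation under $\pi^n$ to the single feature $\phi^n$ that the algorithm actually appends to $\widehat\Sigma$, and then close with the elliptic potential lemma; the only genuinely algorithm-specific wrinkle is the covariance staleness introduced by the lazy updates.

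First I would record the pointwise inequality $\1\{(s,a)\notin\K^n\}\leq\min\{1,\sqrt\beta\|\phi(s,a)\|_{(\widehat\Sigma^{\sigma(n)})^{-1}}\}$. Because of the lazy rule, the known set at iteration $n$ is defined through the stale matrix $\widehat\Sigma^{\sigma(n)}$, so $(s,a)\notin\K^n$ is exactly the event $\sqrt\beta\|\phi(s,a)\|_{(\widehat\Sigma^{\sigma(n)})^{-1}}\geq 1$, which makes the right-hand minimum equal to $1$ and yields the bound. Moreover, by the definition of $\pi^n$ on unknown states (it always selects an action with $(s,a)\notin\K^n$), whenever $s\notin\K^n$ the sampled state-action is itself unknown, so $\E_{s\sim\pi^n}\unkn=\E_{(s,a)\sim\pi^n}\1\{(s,a)\notin\K^n\}$. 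Combining these gives $\sum_{n=1}^N\E_{s\sim\pi^n}\unkn\leq\sum_{n=1}^N X_n$, where $X_n\defeq\E_{(s,a)\sim\pi^n}\min\{1,\sqrt\beta\|\phi(s,a)\|_{(\widehat\Sigma^{\sigma(n)})^{-1}}\}$.

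Next comes the martingale step. Since $\pi^n=\pi^{\underline n}$ in every iteration, the feature $\phi^n$ drawn by \textsc{FeatureSampler}$(\pi^{\underline n})$ in \cref{line:Sigmahat} has law $d^{\pi^n}$, while $\widehat\Sigma^{\sigma(n)}$ and $\pi^n$ are measurable with respect to the history $\mathcal F_{n-1}$ preceding that draw. Hence $X_n=\E[Y_n\mid\mathcal F_{n-1}]$ for $Y_n\defeq\min\{1,\sqrt\beta\|\phi^n\|_{(\widehat\Sigma^{\sigma(n)})^{-1}}\}\in[0,1]$, so $\{X_n-Y_n\}$ is a bounded martingale difference sequence; an Azuma--Hoeffding bound (the high-probability event of \cref{lem:SumOfBonuses}) gives $\sum_{n=1}^N X_n\leq\sum_{n=1}^N Y_n+\widetilde{O}(\sqrt N)$.

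Finally I would reconcile the stale covariance inside $Y_n$ with the current $\widehat\Sigma^n$ to which $\phi^n$ is appended. As the increments are positive semidefinite we have $\widehat\Sigma^{\sigma(n)}\preceq\widehat\Sigma^n$, and the lazy-update test in \cref{line:lazyupdate} guarantees $\det(\widehat\Sigma^n)\leq 2\det(\widehat\Sigma^{\sigma(n)})$. A standard eigenvalue argument (the eigenvalues of $(\widehat\Sigma^{\sigma(n)})^{-1/2}\widehat\Sigma^n(\widehat\Sigma^{\sigma(n)})^{-1/2}$ are all $\geq 1$ and their product equals the determinant ratio $\leq 2$, so the largest is $\leq 2$) yields $\|\phi^n\|_{(\widehat\Sigma^{\sigma(n)})^{-1}}\leq\sqrt2\,\|\phi^n\|_{(\widehat\Sigma^n)^{-1}}$, hence $Y_n\leq\sqrt2\min\{1,\sqrt\beta\|\phi^n\|_{(\widehat\Sigma^n)^{-1}}\}$. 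Then Cauchy--Schwarz together with $\beta\geq 1$ gives $\sum_n Y_n\leq\sqrt2\,\sqrt N\,\sqrt{\beta\sum_n\min\{1,\|\phi^n\|^2_{(\widehat\Sigma^n)^{-1}}\}}$, and the elliptic potential lemma \citep{Abbasi11} controls $\sum_n\min\{1,\|\phi^n\|^2_{(\widehat\Sigma^n)^{-1}}\}\leq 2\log(\det\widehat\Sigma^{N+1}/\det\widehat\Sigma^1)=\widetilde{O}(d)$. Assembling the pieces gives $\sum_n X_n\leq\widetilde{O}(\sqrt{\beta Nd})+\widetilde{O}(\sqrt N)=\widetilde{O}(\sqrt{\beta Nd})$, and multiplying by $\tfrac{3}{1-\gamma}$ produces the stated bound. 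I expect the covariance mismatch from the lazy updates to be the main obstacle: without the determinant-doubling guarantee the stale $\widehat\Sigma^{\sigma(n)}$ could inflate the bonus beyond what the potential lemma controls, so quantifying exactly how stale $\widehat\Sigma^{\sigma(n)}$ is relative to $\widehat\Sigma^n$ is the crux of the argument.
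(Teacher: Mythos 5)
Your proposal is correct, and its opening reduction---replacing the indicator of an unknown state--action by a (truncated) elliptic bonus, using the hypothesis that $\pi^n$ selects an unknown action whenever $s\notin\K^n$---is exactly the paper's first step. Where you diverge is in everything after that: the paper's own proof is two lines, because it bounds $\1\{\sqrt{\beta}\|\phi(s,a)\|_{(\widehat\Sigma^{\sigma(n)})^{-1}}\geq 1\}\leq \sqrt{\beta}\|\phi(s,a)\|_{(\widehat\Sigma^{\sigma(n)})^{-1}}$ and then simply invokes \cref{lem:SumOfBonuses}, which already packages both the martingale concentration and the potential argument (indeed the statement of \cref{lem:SumOfIndicators} explicitly conditions on that lemma's event), whereas you re-derive that content inline with different tools. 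Two substantive differences: (i) for the concentration you use Azuma--Hoeffding on $[0,1]$-truncated increments, while \cref{lem:SumOfBonuses} uses a Bernstein martingale bound (\cref{lem:Bernstein}) with a self-bounding variance step (the conditional second moments sum to at most the quantity $A$ being bounded, after which one solves a quadratic inequality in $A$); your version is simpler and loses nothing here, since the extra $\widetilde O(\sqrt{N})$ slack is dominated by $\sqrt{\beta N d}$. (ii) For the staleness of $\widehat\Sigma^{\sigma(n)}$ you prove the spectral comparison $(\widehat\Sigma^{\sigma(n)})^{-1}\preceq 2(\widehat\Sigma^{n})^{-1}$ from the determinant-doubling test and then apply the standard elliptic potential lemma of \citet{Abbasi11} with the up-to-date matrices, while the paper keeps the stale matrices throughout and proves a bespoke potential lemma for them (\cref{lem:PotentialArgument}, via \cref{lem:TrLogDet}: $\ln\det(\Sigma')\geq\ln\det(\Sigma)+\tfrac{1}{3}\Tr(M\Sigma^{-1})$ whenever $\det(\Sigma')\leq 4\det(\Sigma)$). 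Both routes hinge on the same structural fact---the lazy update keeps determinant ratios bounded by a constant---so your closing remark that this is the crux is exactly right. Your route is more modular and elementary (off-the-shelf Azuma plus the standard potential lemma) at the cost of a $\sqrt{2}$ constant; the paper's route makes \cref{lem:SumOfBonuses} available as a standalone statement that is cited both here and in \cref{prop:MainAnalysis}, which is precisely why its proof of the present lemma collapses to a one-step reduction.
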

\begin{proof}
\begin{align}
\sum_{n=1}^N\E_{s\sim \pi^n \mid s_0}\unkn & = \sum_{n=1}^N\E_{ (s,a)\sim \pi^n \mid s_0}\1\{ \sqrt{\beta}\|\phi(s,a)\|_{\widehat \Sigma^{-1}_{\sigma(n)}} \geq 1 \} \\
& \leq  \sum_{n=1}^N\E_{(s,a)\sim \pi^n \mid s_0}\sqrt{\beta}\|\phi(s,a)\|_{\widehat \Sigma^{-1}_{\sigma(n)}}.
\end{align}
Finally \fullref{lem:SumOfBonuses} concludes.
\end{proof}

\begin{lemma}[Bounding the Sum of Bonuses]
\label[lemma]{lem:SumOfBonuses}
For any outer episode $n$ during the execution of the algorithm, let $\sigma(n)$ be the last episode smaller than $n$ where the bonus was updated.
If $\lambda \geq 1$ then with probability at least $1-\delta'$
\begin{align}
\sum_{n=1}^N\E_{(s,a)\sim \pi^n \mid s_0}b_{\phi}^{\sigma(n)}(s,a) & = \sqrt{\beta}\sum_{n=1}^{N} \E_{(s,a)\sim \pi^n \mid s_0}\| \phi(s,a) \|_{\widehat \Sigma_{\sigma(n)}^{-1}} \\
 & \leq \sqrt{\beta} O\( \sqrt{ND} + \ln(1/\delta') \) = \widetilde O(\sqrt{\beta Nd}),
\end{align}
where $D$ is defined in \cref{eqn:Ddef}. \end{lemma}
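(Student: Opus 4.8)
The plan is to pass from the expected bonuses in the statement to the \emph{realized} features $\phi^n$ drawn at line~\ref{line:Sigmahat}, control the realized sum by an elliptic-potential argument, and then transfer the bound back to the expectation via a self-bounding martingale concentration. Throughout, let $\mathcal F_{n-1}$ denote the history up to (but not including) the draw of $\phi^n$, so that $\widehat\Sigma_{\sigma(n)}$ is $\mathcal F_{n-1}$-measurable. Since \textsc{FeatureSampler}$(\pi^{\underline n})$ returns the feature of a state-action drawn from the discounted visitation of $\pi^n=\pi^{\underline n}$, the conditional expectation $\E[\,\cdot\mid\mathcal F_{n-1}]$ of any function of $\phi^n$ equals $\E_{(s,a)\sim\pi^n\mid s_0}$ of the same function. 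Writing $Y_n\defeq\|\phi^n\|_{\widehat\Sigma_{\sigma(n)}^{-1}}$ and $y_n\defeq\E_{(s,a)\sim\pi^n\mid s_0}\|\phi(s,a)\|_{\widehat\Sigma_{\sigma(n)}^{-1}}=\E[Y_n\mid\mathcal F_{n-1}]$, the left-hand side (after dividing by $\sqrt\beta$) is exactly $\sum_{n=1}^N y_n$, and $\{y_n-Y_n\}$ is a martingale-difference sequence.

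First I would bound the realized sum $\sum_n Y_n$. The lazy-update test in line~\ref{line:lazyupdate} guarantees the invariant $\det(\widehat\Sigma_n)\le 2\det(\widehat\Sigma_{\sigma(n)})$ for every $n$, since otherwise the reference matrix defining the bonus would have been refreshed before episode $n$. Because $\widehat\Sigma_{\sigma(n)}\preceq\widehat\Sigma_n$, the standard determinant-ratio inequality $x^\top A^{-1}x\le\tfrac{\det B}{\det A}\,x^\top B^{-1}x$ (valid for $0\prec A\preceq B$) yields $\|\phi^n\|_{\widehat\Sigma_{\sigma(n)}^{-1}}^2\le 2\|\phi^n\|_{\widehat\Sigma_n^{-1}}^2$, so $Y_n\le\sqrt2\,\|\phi^n\|_{\widehat\Sigma_n^{-1}}$. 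Since $\|\phi^n\|_2\le1$ and $\widehat\Sigma_n\succeq\lambda I\succeq I$, each $\|\phi^n\|_{\widehat\Sigma_n^{-1}}^2\le1$, so the elliptic potential lemma (e.g.,~\citep{Abbasi11}) applies without truncation and gives $\sum_{n=1}^N\|\phi^n\|_{\widehat\Sigma_n^{-1}}^2\le 2\ln\frac{\det\widehat\Sigma_{N+1}}{\det\widehat\Sigma_1}=2D$, with $D$ the log-determinant ratio of \cref{eqn:Ddef}. Cauchy--Schwarz then gives $\sum_{n=1}^N Y_n\le\sqrt2\,\sqrt{N\sum_n\|\phi^n\|_{\widehat\Sigma_n^{-1}}^2}\le 2\sqrt{ND}$.

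It remains to transfer this to $\sum_n y_n$, which is the step that produces the additive $\ln(1/\delta')$ rather than a looser $\sqrt{N\ln(1/\delta')}$. The key observation is that $Y_n\le 1/\sqrt\lambda\le1$, so $Y_n^2\le Y_n$ and the conditional variance is self-bounded: $\mathrm{Var}(Y_n\mid\mathcal F_{n-1})\le\E[Y_n^2\mid\mathcal F_{n-1}]\le y_n$. Setting $S\defeq\sum_n y_n$, Freedman's inequality applied to the bounded martingale $\sum_n(y_n-Y_n)$ with increments in $[-1,1]$ and total conditional variance at most $S$ gives, with probability at least $1-\delta'$,
\[
S=\sum_n Y_n+\sum_n(y_n-Y_n)\le 2\sqrt{ND}+\sqrt{2S\ln(1/\delta')}+\tfrac23\ln(1/\delta').
\]
This is a self-referential inequality of the form $S\le a+b\sqrt S$; solving it gives $S=O(a+b^2)=O(\sqrt{ND}+\ln(1/\delta'))$. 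Multiplying by $\sqrt\beta$ recovers the stated bound, and since $D=\ln\frac{\det\widehat\Sigma_{N+1}}{\det\widehat\Sigma_1}=\widetilde O(d)$ by the AM--GM/trace bound on $\det\widehat\Sigma_{N+1}$, the final expression simplifies to $\widetilde O(\sqrt{\beta Nd})$. I expect the main obstacle to be the bookkeeping around the frozen reference matrix $\widehat\Sigma_{\sigma(n)}$: one must invoke the determinant-doubling rule to trade it for the running matrix $\widehat\Sigma_n$ before any potential argument applies, and then combine this with the self-bounding variance trick to avoid the weaker $\sqrt{N\ln(1/\delta')}$ deviation term.
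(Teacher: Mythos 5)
Your proposal is correct, and its skeleton coincides with the paper's proof: the same passage from expected bonuses to realized features via a martingale-difference decomposition (your $y_n-Y_n$ is exactly the paper's $\xi_n$), the same self-bounding variance observation (for $\lambda\geq 1$ one has $Y_n\leq 1$, hence the conditional variance is dominated by the conditional mean $y_n$), the same Bernstein-for-martingales bound (\cref{lem:Bernstein}), and the same resolution of the self-referential inequality $S\leq a+b\sqrt{S}$. The only genuine divergence is in the potential argument for the lazily-updated matrices. The paper keeps the frozen matrices $\widehat\Sigma_{\sigma(n)}$ throughout and proves a dedicated block-telescoping estimate (\cref{lem:TrLogDet} feeding into \cref{lem:PotentialArgument}), exploiting that the determinant grows by at most a factor of $4$ between consecutive bonus switches so that $\ln(1+\lambda_j)\geq\lambda_j/3$ applies eigenvalue-wise; you instead use the invariant $\det(\widehat\Sigma_n)\leq 2\det(\widehat\Sigma_{\sigma(n)})$ enforced by line~\ref{line:lazyupdate}, together with the norm-comparison inequality $x^\top A^{-1}x\leq\frac{\det B}{\det A}\,x^\top B^{-1}x$ for $0\prec A\preceq B$ (Lemma 12 of \citep{Abbasi11}), to trade the frozen matrix for the running one, and then apply the standard elliptic potential lemma. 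Both routes are sound and give the $\widetilde O(d)$ potential bound; yours is more modular, reusing off-the-shelf facts from the rarely-switching bandit literature, while the paper's argument is self-contained and is what produces the constant $3$ in the definition of $D$ in \cref{eqn:Ddef}. Two cosmetic remarks: with the paper's definition of $D$, your quantity $2\ln\frac{\det\widehat\Sigma_{N+1}}{\det\widehat\Sigma_1}$ equals $\frac{2}{3}D$ rather than $2D$ (immaterial to the conclusion); and your appeal to Freedman's inequality uses the random sum $S$ as the variance proxy, which is most cleanly justified via the fixed-$z$ form of \cref{lem:Bernstein}, namely $\sum_n (y_n-Y_n)\leq zS+\frac{1}{z}\ln(1/\delta')$ with, say, $z=\frac{1}{2}$ --- the same level of care as in the paper's own proof, which likewise bounds the conditional variance by the random quantity $A$ before solving for it.
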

\begin{proof}

We can write
\begin{align}
\sum_{n=1}^{N}\E_{(s,a)\sim \pi^n \mid s_0} b_{\phi}^{\sigma(n)}(s,a) & = \sqrt{\beta}\sum_{n=1}^{N} \E_{(s,a)\sim \pi^n \mid s_0}\| \phi(s,a) \|_{\widehat \Sigma^{-1}_{\sigma(n)}}.
\end{align}

We need to bound the summation
for any realization of the sequence of $(\pi^n,\widehat \Sigma_{\sigma(n)})$.
Define the random dataset $\mathcal D_{1:n}$ containing all the information (i.e., the realization of the random variables) at the \emph{beginning} of iteration $n$ of the algorithm. Conditioning on $\mathcal D_{1:n}$ fixes the policy $\pi^n$ and the covariance $\widehat \Sigma_{\sigma(n)}$ and the distribution over $\phi$.

Notice that in each episode $n$ the random variable $\phi$ and the collected feature $\phi_n$ are identically distributed when conditioned on $\mathcal D_{1:n}$ (since their distribution is uniquely determined by the policy $\pi^n$ in that episode, which is fixed under the conditioning on $\mathcal D_{1:n}$). Therefore we can define the `noise' in the sampled feature
\begin{align}
\xi_n = \E_{(s,a)\sim \pi^n \mid s_0} \Big[\|\phi(s,a)\|_{\widehat  \Sigma^{-1}_{\sigma(n)}} \mid \mathcal D_{1:n}\Big] -  \|\phi_{n}\|_{\widehat \Sigma^{-1}_{\sigma(n)}}
\end{align}
and write
\begin{align}
\label{eqn:Adef}
A \defeq \sum_{n=1}^N \E_{(s,a)\sim \pi^n \mid s_0}\Big[\|\phi(s,a)\|_{\widehat  \Sigma^{-1}_{\sigma(n)}} \mid \mathcal D_{1:n}\Big] & = \sum_{n=1}^N  \|\phi_{n}\|_{\widehat  \Sigma^{-1}_{\sigma(n)}} + \sum_{n=1}^N  \xi_n \\
& \leq \sqrt{N\sum_{n=1}^N  \|\phi_{n}\|^2_{\widehat  \Sigma^{-1}_{\sigma(n)}}} + \sum_{n=1}^N  \xi_n
\end{align}
The first summation on the rhs is bounded by \fullref{lem:PotentialArgument} by $D$; it remains to bound the sum of the noise terms.
Conditioned on $\mathcal D_{1:n}$, the noise  $\xi_n$ is mean-zero. Summing over the conditional second moments gives:
\begin{align}
\sum_{n=1}^N\E_{(s,a) \sim \pi^n \mid s_0} \Big[\xi^2_n \mid \mathcal D_{1:n}\Big] & =  \sum_{n=1}^N \E_{(s,a) \sim \pi^n \mid s_0} \Big[\|\phi(s,a) \|^2_{\widehat \Sigma^{-1}_{\sigma(n)}} \mid \mathcal D_{1:n}\Big] \\
& \leq \sum_{n=1}^N \E_{(s,a) \sim \pi^n \mid s_0} \Big[\|\phi(s,a) \|_{\widehat  \Sigma^{-1}_{\sigma(n)}} \mid \mathcal D_{1:n}\Big] = A.
\end{align}
The last step follows because if $\lambda \geq 1$ and $\| \phi(\cdot,\cdot) \|_2 \leq 1$ we have $\|\phi(\cdot,\cdot) \|_{\widehat  \Sigma^{-1}_{\sigma(n)}} \leq \|\phi(\cdot,\cdot) \|_{2} \leq 1$ giving $\|\phi(\cdot,\cdot) \|_{\widehat  \Sigma^{-1}_{\sigma(n)}}^2 \leq \|\phi(\cdot,\cdot) \|_{\widehat  \Sigma^{-1}_{\sigma(n)}} $.
Now, \fullref{lem:Bernstein}  gives with probability at least $1-\delta'$ for some constant $c$
\begin{align}
\sum_{n=1}^N \xi_n  & \leq c \times \( \sqrt{2 \sum_{n=1}^N \E_{(s,a) \sim \pi^n \mid s_0} [\xi^2_n \mid \mathcal D_{1:n}] \ln(1/\delta')} + \frac{\ln(1/\delta')}{3}\) \\
& = c \times \( \sqrt{2 A\ln(1/\delta')} + \frac{\ln(1/\delta')}{3} \).
\end{align}
Combining with \cref{eqn:Adef} we have shown that with probability at least $1-\delta'$ we must have the following relation
\begin{align}
A \leq \sqrt{ND} + c \times \( \sqrt{2A\ln(1/\delta')} + \frac{\ln(1/\delta')}{3} \).
\end{align}
Solving for $A$ finally gives with high probability
\begin{align}
A   = O \( \sqrt{ND} +\ln(1/\delta')\).
\end{align}
\end{proof}

The following lemma is used to claim that whenever the determinant condition is violated (triggering a new call to the \textsc{Solver}) then the condition is not violated by much.
\begin{lemma}[Maximum Determinant Ratio]
If $\lambda \geq 1$ and $\|\phi(\cdot,\cdot)\|_2 \leq 1$ then $\det\(\widehat \Sigma_{n}\) \leq 4\det\(\widehat \Sigma_{\underline n}\)$.
\end{lemma}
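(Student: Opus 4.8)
The plan is to exploit two facts: the near-identity normalization of the covariance matrices and the single-step control afforded by the lazy update rule in line~\ref{line:lazyupdate}. First I would record that $\widehat\Sigma_n \succeq I$ for every $n$: since $\lambda \geq 1$ we have $\widehat\Sigma_1 = \lambda I \succeq I$, and each update $\widehat\Sigma_{n+1} = \widehat\Sigma_n + \phi_n\phi_n^\top$ (line~\ref{line:Sigmahat}) only appends a positive semidefinite rank-one term, so $\widehat\Sigma_n \succeq \lambda I \succeq I$ throughout. Consequently $(\widehat\Sigma_{n-1})^{-1} \preceq I$, and using $\|\phi_{n-1}\|_2 \leq 1$ the quadratic form obeys $\|\phi_{n-1}\|^2_{(\widehat\Sigma_{n-1})^{-1}} \leq \|\phi_{n-1}\|_2^2 \leq 1$.

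The next step is to bound the single-step determinant growth. Applying the matrix determinant lemma to the rank-one update $\widehat\Sigma_n = \widehat\Sigma_{n-1} + \phi_{n-1}\phi_{n-1}^\top$ gives
\begin{align*}
\det(\widehat\Sigma_n) = \det(\widehat\Sigma_{n-1})\Big(1 + \|\phi_{n-1}\|^2_{(\widehat\Sigma_{n-1})^{-1}}\Big) \leq 2\det(\widehat\Sigma_{n-1}),
\end{align*}
where the inequality uses the bound on the quadratic form from the previous paragraph. Hence the determinant can at most double in any single outer iteration.

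Finally I would combine this with the lazy update invariant. By definition $\underline n$ is the most recent iteration at which the determinant condition in line~\ref{line:lazyupdate} last fired, so at every index strictly between $\underline n$ and $n$ the condition did \emph{not} fire; in particular it did not fire at iteration $n-1$, which (together with the trivial case $n-1 = \underline n$) yields $\det(\widehat\Sigma_{n-1}) \leq 2\det(\widehat\Sigma_{\underline n})$. Chaining the two estimates gives $\det(\widehat\Sigma_n) \leq 2\det(\widehat\Sigma_{n-1}) \leq 4\det(\widehat\Sigma_{\underline n})$, as claimed, with the base case $n=1$ immediate since then $\underline n = n$.

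I expect the only real subtlety, rather than a genuine obstacle, to be the bookkeeping of \emph{when} the determinant check occurs relative to the feature update: the condition is evaluated on $\widehat\Sigma_n$ before $\phi_n$ is appended, so the relevant one-step comparison is between $\widehat\Sigma_{n-1}$ and $\widehat\Sigma_n$, and one must confirm that $\det(\widehat\Sigma_{n-1}) \leq 2\det(\widehat\Sigma_{\underline n})$ continues to hold precisely at the iteration where the doubling condition is finally violated. Everything else reduces to the routine rank-one determinant computation above.
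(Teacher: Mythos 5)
Your proof is correct and follows essentially the same route as the paper's: the rank-one determinant identity $\det(\widehat\Sigma_n) = \det(\widehat\Sigma_{n-1})\bigl(1 + \|\phi_{n-1}\|^2_{\widehat\Sigma_{n-1}^{-1}}\bigr)$ combined with $\|\phi_{n-1}\|^2_{\widehat\Sigma_{n-1}^{-1}} \leq 1$ (from $\lambda \geq 1$), and the lazy-update invariant $\det(\widehat\Sigma_{n-1}) \leq 2\det(\widehat\Sigma_{\underline n})$, chained to give the factor of $4$. The only cosmetic difference is that you argue the chain directly for all $n$ while the paper splits into cases depending on whether the doubling condition fires at iteration $n$; the content is identical.
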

\begin{proof}
If $\det\(\widehat \Sigma_{n}\) \leq 2\det\(\widehat \Sigma_{\underline n}\)$ the statement holds; if $\det\(\widehat \Sigma_{n}\) > 2\det\(\widehat \Sigma_{\underline n}\)$ then by construction we must have $\det\(\widehat \Sigma_{n-1}\) \leq 2\det\(\widehat \Sigma_{\underline n}\)$ (as the algorithm switches to a new policy once such condition is violated).
Use \fullref{lem:DeterminantRatio} and recall $ \| \phi_{n-1}\|^2_{\widehat \Sigma_{n-1}^{-1}} \leq \| \phi_{n-1} \|^2_{2} \leq 1$ for $\lambda \geq 1$ to write
\begin{align}
\det\(\widehat \Sigma_{n}\) = \det\(\widehat \Sigma_{n-1}\) \( 1 + \| \phi_{n-1}\|^2_{\widehat\Sigma_{n-1}^{-1}}\) \leq 2\det\(\widehat \Sigma_{n-1}\) \leq 4\det\(\widehat \Sigma_{\underline n}\).
\end{align}
\end{proof}

The following lemma is key. It implicitly quantifies the loss due to the delayed update of the covariance matrix; the effect of such delay are rather mild, as they only affect a numerical constant.

\begin{lemma}[Trace to LogDeterminant]
\label[lemma]{lem:TrLogDet}
Let $\Sigma$ be a positive define matrix and let $M$ be a symmetric positive semidefinite matrix. Let $\Sigma' = \Sigma + M$. Then if $\det(\Sigma') \leq 4\det(\Sigma)$ we have
\begin{align}
\ln\det(\Sigma') \geq \ln\det(\Sigma) + \frac{1}{3} \Tr(M\Sigma^{-1}).
\end{align}
\end{lemma}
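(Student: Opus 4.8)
The plan is to reduce this matrix statement to a one-variable calculus fact about the eigenvalues of the whitened increment. Let $\widetilde M \defeq \Sigma^{-1/2} M \Sigma^{-1/2}$, which is symmetric positive semidefinite since $M$ is, and let $\mu_1,\dots,\mu_d \geq 0$ denote its eigenvalues. I would first record the two identities
\[
\ln\det(\Sigma') - \ln\det(\Sigma) = \ln\det(I + \widetilde M) = \sum_{i=1}^d \ln(1+\mu_i), \qquad \Tr(M\Sigma^{-1}) = \Tr(\widetilde M) = \sum_{i=1}^d \mu_i,
\]
where the first follows from $\Sigma + M = \Sigma^{1/2}(I + \widetilde M)\Sigma^{1/2}$ and hence $\det(\Sigma+M)=\det(\Sigma)\det(I+\widetilde M)$, and the second from the cyclic property of the trace. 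With these in hand the claim becomes entirely scalar: it suffices to show $\sum_i \ln(1+\mu_i) \geq \tfrac{1}{3}\sum_i \mu_i$.

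Next I would use the hypothesis $\det(\Sigma') \le 4\det(\Sigma)$, which by the first identity reads $\prod_i (1+\mu_i) \le 4$. Since every factor satisfies $1+\mu_i \ge 1$, each individual factor obeys $1+\mu_j \le \prod_i (1+\mu_i) \le 4$, so that $\mu_j \in [0,3]$ for every $j$. This is the \emph{only} place the determinant condition is used, and it is precisely what excludes the large-argument regime in which a linear lower bound on $\ln(1+x)$ must fail.

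The remaining ingredient is the elementary inequality $\ln(1+x) \ge \tfrac{1}{3}x$ for all $x \in [0,3]$. I would establish it by examining $f(x) \defeq \ln(1+x) - \tfrac{1}{3}x$: one has $f(0)=0$, and $f'(x) = \tfrac{1}{1+x} - \tfrac{1}{3}$ is positive for $x<2$ and negative for $x>2$, so $f$ rises to its maximum at $x=2$ and then decreases, while $f(3) = \ln 4 - 1 > 0$; hence $f \ge 0$ on all of $[0,3]$. Applying this bound termwise to each $\mu_i \in [0,3]$ and summing gives $\sum_i \ln(1+\mu_i) \ge \tfrac{1}{3}\sum_i \mu_i$, which together with the two identities is exactly the assertion of the lemma.

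I do not anticipate a genuine obstacle: the whole argument is a single-variable estimate packaged through simultaneous diagonalization. The one point worth verifying carefully is that the constants are compatible, namely that the threshold $4$ in the hypothesis and the factor $\tfrac{1}{3}$ in the conclusion are matched. The endpoint computation $\ln 4 > 1$ is the tight constraint here, and it confirms that $\tfrac{1}{3}$ is an admissible (indeed essentially the natural) constant for the stated determinant threshold.
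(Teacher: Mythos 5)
Your proposal is correct and follows essentially the same route as the paper's proof: factor out $\Sigma^{1/2}$ to reduce to the eigenvalues of $\Sigma^{-1/2}M\Sigma^{-1/2}$, use the determinant hypothesis to confine each eigenvalue to $[0,3]$, and apply $\ln(1+x)\geq \tfrac{1}{3}x$ termwise. The only difference is cosmetic: you verify the scalar inequality by a short calculus argument (which the paper asserts without proof) and bound each factor $1+\mu_j \le 4$ directly rather than via logarithms.
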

\begin{proof}
We have
\begin{align}
\det(\Sigma') & = \det(\Sigma + M) \\
& = \det(\Sigma^{\frac{1}{2}}) \det(I + \Sigma^{-\frac{1}{2}}M \Sigma^{-\frac{1}{2}})  \det(\Sigma^{\frac{1}{2}})\\
& = \det(\Sigma) \det(I + \Sigma^{-\frac{1}{2}}M \Sigma^{-\frac{1}{2}}).\end{align}
Denote with $\lambda_1,\dots,\lambda_d$ the eigenvalues of $\Sigma^{-\frac{1}{2}}M \Sigma^{-\frac{1}{2}}$.
We must have by hypothesis
\begin{align}
\label{eqn:DetRatio}
4 \geq \frac{\det(\Sigma') }{\det(\Sigma) } = \det(I + \Sigma^{-\frac{1}{2}}M \Sigma^{-\frac{1}{2}}) = \prod_{j=1}^d (1+\lambda_j).
\end{align}
Taking $\ln$ gives:
\begin{align}
\ln 4 \geq  \sum_{j=1}^d \ln(1+\lambda_j).
\end{align}
Since all $\lambda_j$'s must be positive, the $\ln$ terms in the rhs above are positive, and each must satisfy
\begin{align}
\ln 4 \geq \ln(1+\lambda_j), \quad \quad \quad \forall j \in [d]
\end{align}
and so in particular (by exponentiating the above display)
\begin{align}
3 \geq \lambda_j, \quad \quad \quad \forall j \in [d]
\end{align}
which allows us to use the following inequality
\begin{align}
\ln(1+\lambda_j) \geq \frac{1}{3}\lambda_j.
\end{align}
Going back to \cref{eqn:DetRatio} (and again taking $\ln$) gives
\begin{align}
\ln\det(\Sigma') & = \ln\det(\Sigma) + \sum_{j=1}^d \ln(1+\lambda_j) \\
& \geq \ln\det(\Sigma) + \frac{1}{3} \sum_{j=1}^d \lambda_j \\
& \geq \ln\det(\Sigma) + \frac{1}{3} \Tr(\Sigma^{-\frac{1}{2}}M \Sigma^{-\frac{1}{2}}) \\
& = \ln\det(\Sigma) + \frac{1}{3} \Tr(M\Sigma^{-1}).
\end{align}
\end{proof}

\begin{lemma}[Potential Argument]
\label[lemma]{lem:PotentialArgument}
Let  ${\underline n}_1,{\underline n}_2,\dots,{\underline n}_{last}$ be the indexes in the sequence $n=1,\dots,N$ where the bonus gets updated and let $\sigma(n)$ be the last episode smaller than $n$ where the bonus was updated, i.e.,
\begin{align}
\widehat\Sigma_{{\underline n}_{i+1}} & = \widehat\Sigma_{{\underline n}_{i}} + \sum_{n={\underline n}_i}^{{\underline n}_{i+1}-1} \phi_n \phi_n^\top \\
\widehat\Sigma_{n} & = \widehat\Sigma_{\sigma(n)}, \quad {\underline n}_{i} \leq n < {\underline n}_{i+1} \\
\det(\widehat\Sigma_{{\underline n}_{i+1}}) & \leq 4 \det(\widehat\Sigma_{{\underline n}_i}).
\end{align}
We have
\begin{align}
\label{eqn:Ddef}
\sum_{n=1}^N \| \phi_n \|^2_{\widehat\Sigma^{-1}_{\sigma(n)}}  \leq 3 \( \ln\det(\widehat\Sigma_{N+1}) - \ln\det(\widehat\Sigma_{1}) \) \defeq D = \widetilde O(d).
\end{align}
\end{lemma}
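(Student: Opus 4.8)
The plan is to exploit the block structure induced by the lazy bonus updates together with \fullref{lem:TrLogDet}, which is tailored precisely to absorb the loss from freezing the covariance matrix within a block. First I would partition the index range $\{1,\dots,N\}$ into the blocks $[\underline n_i,\underline n_{i+1})$ delimited by consecutive bonus updates. Inside such a block the map $\sigma(\cdot)$ is constant and equal to $\underline n_i$, so every quadratic form in the sum uses the same frozen matrix $\widehat\Sigma_{\underline n_i}$. Writing $M_i \defeq \sum_{n=\underline n_i}^{\underline n_{i+1}-1}\phi_n\phi_n^\top = \widehat\Sigma_{\underline n_{i+1}} - \widehat\Sigma_{\underline n_i}$ for the accumulated rank-one updates, the key algebraic identity is
\begin{align*}
\sum_{n=\underline n_i}^{\underline n_{i+1}-1}\|\phi_n\|_{\widehat\Sigma^{-1}_{\sigma(n)}}^2 = \sum_{n=\underline n_i}^{\underline n_{i+1}-1}\Tr\!\big(\phi_n\phi_n^\top\widehat\Sigma_{\underline n_i}^{-1}\big) = \Tr\!\big(M_i\,\widehat\Sigma_{\underline n_i}^{-1}\big),
\end{align*}
which rewrites the block contribution as a single trace against the frozen inverse.

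Next I would invoke \fullref{lem:TrLogDet} with $\Sigma = \widehat\Sigma_{\underline n_i}$, $M = M_i$ and $\Sigma' = \widehat\Sigma_{\underline n_{i+1}}$. The hypothesis $\det(\widehat\Sigma_{\underline n_{i+1}}) \leq 4\det(\widehat\Sigma_{\underline n_i})$ --- guaranteed by the algorithm's doubling rule and the Maximum Determinant Ratio lemma --- is exactly what licenses the inequality $\Tr(M_i\widehat\Sigma_{\underline n_i}^{-1}) \leq 3\big(\ln\det\widehat\Sigma_{\underline n_{i+1}} - \ln\det\widehat\Sigma_{\underline n_i}\big)$. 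Summing over the blocks makes the right-hand side telescope, leaving $3\big(\ln\det\widehat\Sigma_{N+1} - \ln\det\widehat\Sigma_1\big)$, which establishes the first inequality of the lemma.

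Finally, to certify $D = \widetilde O(d)$, I would bound the log-determinant increment by a trace/AM--GM argument: since $\|\phi(\cdot,\cdot)\|_2\le 1$ we have $\Tr(\widehat\Sigma_{N+1}) \le \lambda d + N$, so $\det\widehat\Sigma_{N+1} \le ((\lambda d + N)/d)^d$ by the arithmetic--geometric mean inequality applied to the eigenvalues, while $\det\widehat\Sigma_1 = \lambda^d$; hence $\ln\det\widehat\Sigma_{N+1} - \ln\det\widehat\Sigma_1 \le d\ln(1 + N/(\lambda d)) = \widetilde O(d)$.

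The main obstacle to flag is precisely the lazy, frozen covariance: a textbook elliptic-potential argument telescopes $\|\phi_n\|_{\widehat\Sigma_n^{-1}}^2$ against the \emph{running} matrix at step $n$, but here the matrix is held fixed across a whole block, so the standard per-step determinant increment is unavailable. The resolution is conceptual rather than computational --- the controlled determinant ratio ($\le 4$) guarantees the frozen inverse is a constant-factor proxy for the running one, and \fullref{lem:TrLogDet} quantifies this loss as the benign factor of $3$. Once that lemma is in hand, the remaining work is routine bookkeeping of the block boundaries (ensuring the telescoping spans $\widehat\Sigma_1$ through $\widehat\Sigma_{N+1}$, so that the final incomplete block terminates at the matrix the algorithm actually maintains) and the standard determinant bound.
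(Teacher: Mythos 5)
Your proof is correct and follows essentially the same route as the paper's: the same partition into blocks delimited by bonus updates, the same rewriting of each block's contribution as $\mathrm{Tr}\big(M_i\,\widehat\Sigma_{\underline n_i}^{-1}\big)$, the same invocation of the Trace-to-LogDeterminant lemma under the determinant-ratio-at-most-$4$ condition (including its application to the final incomplete block ending at $\widehat\Sigma_{N+1}$), and the same telescoping of the log-determinants. Your explicit trace/AM--GM bound certifying $D = \widetilde O(d)$ is a small addition that the paper leaves implicit (it cites the analogous determinant bound from the literature in a neighboring lemma), but it does not change the nature of the argument.
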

\begin{proof}
Let ${\underline n}_{last}$ be the index of the last switch.
Use \fullref{lem:TrLogDet} twice with the following inputs (notice that the determinant ratio condition is satisfied in both cases)
\begin{align}
\Sigma' = \widehat\Sigma_{{\underline n}_{i+1}}, \quad \Sigma = \widehat\Sigma_{{\underline n}_{i}}, \quad M = \sum_{n={\underline n}_i}^{{\underline n}_{i+1}-1} \phi_n \phi_n^\top \\
\Sigma' = \widehat \Sigma_{N+1}, \quad \Sigma = \widehat\Sigma_{{\underline n}_{last}}, \quad M = \sum_{n={\underline n}_{last}}^{N} \phi_n \phi_n^\top
\end{align}
to obtain
\begin{align}
\ln\det(\widehat\Sigma_{{\underline n}_{i+1}}) - \ln\det(\widehat\Sigma_{{\underline n}_{i}}) & \geq   \frac{1}{3} \Tr(\sum_{n={\underline n}_i}^{{\underline n}_{i+1}-1} \phi_n \phi_n^\top \widehat\Sigma_{{\underline n}_i}^{-1}) \\
& = \frac{1}{3} \sum_{n={\underline n}_i}^{{\underline n}_{i+1}-1}\Tr( \phi_n \phi_n^\top \widehat\Sigma_{{\sigma(n)}}^{-1})
\end{align}
and likewise
\begin{align}
\ln\det(\widehat\Sigma_{N+1}) - \ln\det(\widehat\Sigma_{{\underline n}_{last}})  & \geq \frac{1}{3} \sum_{n=n_{last}}^{N} \Tr( \phi_n \phi_n^\top \widehat\Sigma_{\sigma(n)}^{-1})
\end{align}
Summing over the switches, recalling ${\underline n}_1 = 1$ and adding the above display gives (after cancelling the terms in the telescoping sum)
\begin{align}
\ln\det(\widehat\Sigma_{N+1}) -  \ln\det(\widehat\Sigma_{{\underline n}_{1}})
& \geq \frac{1}{3} \sum_{n=1}^{N}\Tr( \phi_n \phi_n^\top \widehat\Sigma_{\sigma(n)}^{-1}).
\end{align}
Finally, consider
\begin{align}
\| \phi_n \|^2_{\Sigma^{-1}_{\sigma(n)}} & = \phi_n^\top\widehat\Sigma^{-1}_{\sigma(n)} \phi_n \\
& =\Tr \(\phi_n^\top\widehat\Sigma^{-1}_{\sigma(n)} \phi_n \) \\
& =\Tr \(\phi_n \phi_n^\top\widehat\Sigma^{-1}_{\sigma(n)}  \).
\end{align}
Combining with the prior display concludes.
\end{proof}

\begin{lemma}[Determinant Ratio]
\label[lemma]{lem:DeterminantRatio}
If $\Sigma^+ = \Sigma + \phi\phi^\top $ and $\Sigma$ is strictly symmetric positive definite then $\det(\Sigma^+) = \det(\Sigma)\( 1 + \| \phi\|^2_{\Sigma^{-1}}\)$.
\end{lemma}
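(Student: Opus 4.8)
The plan is to recognize this as the matrix determinant lemma for a rank-one update and to derive it through a symmetric factorization of $\Sigma$. Since $\Sigma$ is strictly symmetric positive definite, it admits a symmetric positive definite square root $\Sigma^{1/2}$, and I would begin by factoring
\[
\det(\Sigma^+) = \det\!\left(\Sigma^{1/2}\left(I + \Sigma^{-1/2}\phi\phi^\top\Sigma^{-1/2}\right)\Sigma^{1/2}\right) = \det(\Sigma)\,\det\!\left(I + \Sigma^{-1/2}\phi\phi^\top\Sigma^{-1/2}\right),
\]
using multiplicativity of the determinant together with $\det(\Sigma^{1/2})^2 = \det(\Sigma)$.

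Next I would analyze the inner matrix. Setting $v = \Sigma^{-1/2}\phi$, the matrix $\Sigma^{-1/2}\phi\phi^\top\Sigma^{-1/2} = vv^\top$ is symmetric of rank at most one. Its only possibly nonzero eigenvalue is $\|v\|_2^2 = v^\top v = \phi^\top\Sigma^{-1}\phi = \|\phi\|^2_{\Sigma^{-1}}$, with eigenvector $v$, while the remaining $d-1$ eigenvalues vanish. Hence the eigenvalues of $I + vv^\top$ are $1 + \|\phi\|^2_{\Sigma^{-1}}$ together with $d-1$ copies of $1$, so that their product gives $\det(I + vv^\top) = 1 + \|\phi\|^2_{\Sigma^{-1}}$. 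Combining this with the previous display yields $\det(\Sigma^+) = \det(\Sigma)\bigl(1 + \|\phi\|^2_{\Sigma^{-1}}\bigr)$, as claimed.

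An equivalent route that avoids the square root is to factor $\det(\Sigma + \phi\phi^\top) = \det(\Sigma)\det(I + \Sigma^{-1}\phi\phi^\top)$ and then invoke Sylvester's determinant identity $\det(I_d + AB) = \det(I_1 + BA)$ with $A = \Sigma^{-1}\phi$ and $B = \phi^\top$, which immediately gives $\det(I + \Sigma^{-1}\phi\phi^\top) = 1 + \phi^\top\Sigma^{-1}\phi$. I expect no genuine obstacle here, since this is a standard rank-one determinant identity; the only point deserving a line of care is justifying the eigenvalue computation for the rank-one symmetric matrix $vv^\top$ (or, in the alternative route, the correct application of Sylvester's identity with a one-dimensional block), both of which are routine.
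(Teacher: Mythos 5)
Your proof is correct and follows essentially the same route as the paper: both factor $\det(\Sigma+\phi\phi^\top) = \det(\Sigma)\det\bigl(I + \Sigma^{-1/2}\phi\phi^\top\Sigma^{-1/2}\bigr)$ using the symmetric square root of $\Sigma$. The only difference is cosmetic --- the paper finishes by citing the matrix determinant lemma, whereas you evaluate $\det(I+vv^\top)$ directly via the eigenvalues of the rank-one matrix, which is just a self-contained proof of that same lemma.
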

\begin{proof}
The inverse of $\Sigma$ exists because $\Sigma$ is strictly positive definite. We can write
\begin{align}
\det(\Sigma^+) & = \det(\Sigma + \phi\phi^\top) \\
& = \det\Bigg[ \Sigma^{\frac{1}{2}} \(I +  \Sigma^{-\frac{1}{2}} \phi\phi^\top \Sigma^{-\frac{1}{2}} \)\Sigma^{\frac{1}{2}} \Bigg] \\
& = \det(\Sigma)\det \(I +  \Sigma^{-\frac{1}{2}} \phi\phi^\top \Sigma^{-\frac{1}{2}} \).
\end{align}
We use the matrix determinant lemma to continue and write
\begin{align}
& = \det(\Sigma) \( 1 +  \phi^\top \Sigma^{-\frac{1}{2}}\Sigma^{-\frac{1}{2}} \phi  \) \\
& = \det(\Sigma) \( 1 +  \|\phi\|^2_{\Sigma^{-1}} \).
\end{align}
\end{proof}

\begin{lemma}[Number of Switches]
\label[lemma]{lem:Switches}
Using the same notation as \cref{lem:PotentialArgument} we have that the number of times the bonus is updated is $\widetilde O(d)$.
\end{lemma}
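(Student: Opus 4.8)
The plan is to run the standard determinant-doubling (elliptic potential) argument, which here is even simpler than \cref{lem:PotentialArgument} because we only need to count switches, not bound a sum. Let $\underline n_1 = 1 < \underline n_2 < \dots < \underline n_S$ denote the outer iterations at which the bonus is updated, so that $S$ is exactly the quantity to be bounded. First I would observe that, by the update rule in \cref{line:lazyupdate} of \cref{alg:driver}, every switch after the first is triggered precisely because the determinant has at least doubled since the previous switch, i.e. $\det(\widehat\Sigma_{\underline n_{i+1}}) > 2\det(\widehat\Sigma_{\underline n_{i}})$ for each $i \ge 1$. Chaining these inequalities and using the initialization $\widehat\Sigma_1 = \lambda I$ gives
\begin{align}
\det(\widehat\Sigma_{\underline n_S}) > 2^{S-1}\det(\widehat\Sigma_{\underline n_1}) = 2^{S-1}\det(\widehat\Sigma_1) = 2^{S-1}\lambda^d.
\end{align}

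Next I would upper bound the left-hand side by the determinant of the final covariance matrix. Since $\widehat\Sigma_{\underline n_S}$ is obtained from $\widehat\Sigma_{N+1}$ by removing rank-one PSD terms (so $\widehat\Sigma_{\underline n_S} \preceq \widehat\Sigma_{N+1}$), monotonicity of the determinant on positive semidefinite matrices yields $\det(\widehat\Sigma_{\underline n_S}) \le \det(\widehat\Sigma_{N+1})$. Because $\|\phi_n\|_2 \le 1$ for all $n$, every eigenvalue of $\widehat\Sigma_{N+1} = \lambda I + \sum_{n=1}^N \phi_n\phi_n^\top$ is at most $\lambda + N$, so
\begin{align}
\det(\widehat\Sigma_{N+1}) \le (\lambda + N)^d.
\end{align}

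Finally I would combine the two bounds: $2^{S-1}\lambda^d < (\lambda+N)^d$, which after taking logarithms gives $S - 1 < d\log_2(1 + N/\lambda)$, hence $S = O\big(d \log(1 + N/\lambda)\big)$. Since $\lambda \ge 1$ and $N$ is polynomial in the problem parameters (cf. \cref{lem:Complexity}), the $\log N$ factor is absorbed into the $\widetilde O$ notation, and we conclude $S = \widetilde O(d)$, as claimed.

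There is no genuine obstacle in this argument; it is entirely deterministic once the features are drawn. The only points that must be stated carefully are the monotonicity of the determinant under adding PSD matrices and the crude operator-norm bound $\widehat\Sigma_{N+1} \preceq (\lambda + N)I$, both of which follow immediately from $\|\phi_n\|_2 \le 1$ and $\lambda \ge 1$. The essential mechanism is that each switch multiplies the determinant by at least a constant factor, while the normalization caps the total multiplicative growth of the determinant at $(1 + N/\lambda)^d$, so only $\widetilde O(d)$ doublings can ever occur.
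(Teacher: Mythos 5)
Your proof is correct and takes essentially the same route as the paper: each bonus update at least doubles $\det(\widehat\Sigma)$, the initialization gives $\det(\widehat\Sigma^1)=\lambda^d$, and the normalization $\|\phi_n\|_2\le 1$ caps the final determinant, so only $\widetilde O(d)$ doublings can occur. The single cosmetic difference is that the paper upper-bounds the final determinant by $(\lambda_{min}+\tfrac{N+1}{d})^d$ via the trace/AM--GM argument of Lemma 11 in \citep{Abbasi11}, whereas you use the cruder operator-norm bound $(\lambda+N)^d$; this only changes logarithmic factors and is absorbed by the $\widetilde O$ notation.
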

\begin{proof}
Notice that $\det(\widehat\Sigma_1) \geq \det(\widehat\Sigma_0) = \lambda_{min}^d$ (for the definition of $\lambda_{min}$, please see \cref{eqn:lambdamin}) and $\det(\widehat\Sigma_N) \leq (\lambda_{min} + \frac{(N+1)}{d})^d$ (see proof of lemma 11 in \citep{Abbasi11}). Let  ${\underline n}_1,{\underline n}_2,\dots,{\underline n}_{last}$ be the indexes in the sequence $n=1,\dots,N$ where the bonus $b^n$ gets updated. Every time the bonus is updated  we have
\begin{align}
\det(\widehat\Sigma_{\underline n_{i+1}}) \geq 2\det(\widehat\Sigma_{\underline n_{i}}).
\end{align}
Let $S$ denote the number of times the bonus is updated.
By induction,
\begin{align}
\(\lambda_{min} + \frac{(N+1)}{d}\)^d \geq \det(\Sigma_{\underline n_{last}}) \geq 2^S \det(\Sigma_1) \geq 2^S \lambda^d_{min}
\end{align}
It follows that
\begin{align}
S \leq d\ln_2 \(1 + \frac{(N+1)}{d\lambda_{min}}\) = \widetilde O(d)
\end{align}
\end{proof}

\newpage
\section{Inverse Covariance Matrix Estimation}

\begin{lemma}[Concentration of Inverse Covariances]
\label[lemma]{lem:Covariance}
Let $\mu_i$ be the conditional distribution of $\phi$ given the sampled $\phi_1,\dots,\phi_{i-1}$.  Assume $\| \phi \|_2 \leq 1$ for any realization of the vector.
Define $ \Sigma = \frac{1}{n}\sum_{i=1}^n \E_{\phi \sim \mu_i} \phi\phi^\top$. If
\begin{align}
\lambda \geq \lambda_{min} =  \Omega (d\ln(n/\delta'')).
\end{align}
where $\lambda_{min}$ is defined in \cref{eqn:lambdamin} then we have
\begin{align}
\Pro\(\forall n \geq 1,  \quad
\frac{3}{1}\(n\Sigma + \lambda I\)^{-1} \succeq   \( \sum_{i=1}^n \phi_i\phi_i^\top + \lambda I\)^{-1}  \succeq  \frac{3}{5}\(n\Sigma + \lambda I\)^{-1} \) \geq 1-\delta''.
\end{align}
In the same event as above the following event must hold as well
\begin{align}
\forall n \geq 1,  \quad
\frac{1}{3}\(n\Sigma + \lambda I\) \preceq   \( \sum_{i=1}^n \phi_i\phi_i^\top + \lambda I\)  \preceq  \frac{5}{3}\(n\Sigma + \lambda I\).
\end{align}
\end{lemma}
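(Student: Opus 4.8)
The plan is to prove only the second (covariance) display; the first (inverse covariance) display then follows for free, since inversion $A\mapsto A^{-1}$ is operator‑antimonotone on positive definite matrices. Concretely, from $\frac13(n\Sigma+\lambda I)\preceq\sum_{i=1}^n\phi_i\phi_i^\top+\lambda I\preceq\frac53(n\Sigma+\lambda I)$ one obtains $\frac35(n\Sigma+\lambda I)^{-1}\preceq(\sum_{i=1}^n\phi_i\phi_i^\top+\lambda I)^{-1}\preceq 3(n\Sigma+\lambda I)^{-1}$, which is exactly the first display. Writing $C_n\defeq n\Sigma+\lambda I$ and $Z_n\defeq\sum_{i=1}^n\phi_i\phi_i^\top-n\Sigma$, and noting $\sum_{i=1}^n\phi_i\phi_i^\top+\lambda I=C_n+Z_n$, the covariance sandwich is in turn equivalent to the whitened operator bound $\|C_n^{-1/2}Z_n C_n^{-1/2}\|\le\frac23$, i.e.\ to $|u^\top Z_n u|\le\frac23\,u^\top C_n u$ for every $u$. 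So the lemma reduces to a self‑normalized deviation bound on the matrix martingale $Z_n$, uniform over $n$.

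\textbf{A scalar self‑normalized martingale per direction.} Fix a unit vector $u$ and set $Y_i\defeq(u^\top\phi_i)^2-\E[(u^\top\phi_i)^2\mid\phi_{1:i-1}]$, a martingale difference with $u^\top Z_n u=\sum_{i\le n}Y_i$. Because $\|\phi_i\|_2\le1$, the increments obey $|Y_i|\le1$ and the predictable variance obeys $\sum_{i\le n}\E[Y_i^2\mid\phi_{1:i-1}]\le\sum_{i\le n}\E[(u^\top\phi_i)^2\mid\phi_{1:i-1}]=u^\top(n\Sigma)u$. A time‑uniform Freedman/Bernstein inequality then gives, simultaneously for all $n$, $|u^\top Z_n u|\le c_1\sqrt{(u^\top n\Sigma\,u)\,\ell}+c_2\ell$ with $\ell$ a logarithmic factor. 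Dividing by $u^\top C_n u=u^\top(n\Sigma)u+\lambda$ and optimizing over $x\defeq u^\top(n\Sigma)u\ge0$ (the square‑root term peaks near $x=\lambda$, the linear term at $x=0$) bounds the per‑direction relative deviation by $O(\sqrt{\ell/\lambda}+\ell/\lambda)$. This is the crucial self‑normalization: the fluctuation in direction $u$ scales with $u^\top(n\Sigma)u$, not with $\|n\Sigma\|\le n$, so the relative error is controlled by $\lambda$ alone up to logs.

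\textbf{Uniformity over directions and over time.} To upgrade this to all directions at once I would take an $\epsilon$‑net $\mathcal N$ of the unit sphere with $|\mathcal N|\le e^{O(d)}$ (say $\epsilon=1/4$), apply the scalar bound to the whitened quadratic forms $w^\top(C_n^{-1/2}Z_n C_n^{-1/2})w=u^\top Z_n u$ with $u=C_n^{-1/2}w$ for $w\in\mathcal N$ (here the increment and variance proxies are $\le\|u\|^2\le 1/\lambda$ and $\le\tfrac14\lambda^{-1}$ respectively), union‑bound over $\mathcal N$, and pass from the net to the full sphere via $\|C_n^{-1/2}Z_n C_n^{-1/2}\|\le(1-2\epsilon)^{-1}\max_{w\in\mathcal N}|w^\top C_n^{-1/2}Z_n C_n^{-1/2}w|$. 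The union over the net turns $\ell$ into $\ell=O(d+\ln(n/\delta''))$, and time‑uniformity of the scalar inequality (or a union over $n$) supplies the $\ln(n/\delta'')$. Choosing $\lambda\ge\lambda_{min}=\Omega(d\ln(n/\delta''))$ then forces $O(\sqrt{\ell/\lambda}+\ell/\lambda)\le\frac23$, giving $\|C_n^{-1/2}Z_n C_n^{-1/2}\|\le\frac23$ for all $n$ on an event of probability at least $1-\delta''$, which is exactly what the reduction needs.

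\textbf{Main obstacle.} The delicate part is the self‑normalization combined with full uniformity. A crude operator‑norm bound — e.g.\ matrix Freedman with the deterministic variance proxy $\|\sum_i\E[X_i^2\mid\mathcal F_{i-1}]\|\le\|n\Sigma\|\le n$, where $X_i=\phi_i\phi_i^\top-\E[\phi_i\phi_i^\top\mid\mathcal F_{i-1}]$ — only yields $\|Z_n\|\lesssim\sqrt{n\ell}$ and would force $\lambda\gtrsim\sqrt n$, destroying the claimed $\widetilde O(d)$ regularization. Getting the directional (whitened) control to hold simultaneously over all $u$ and all $n$, while tracking constants tightly enough that the relative error lands at $2/3$ (hence the clean $\tfrac13,\tfrac53$ and $\tfrac35,3$ factors), is the real content; an alternative that avoids the net is a peeled, time‑uniform matrix Freedman inequality applied directly in the whitened geometry, at the cost of an intrinsic‑dimension argument.
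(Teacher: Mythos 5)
Your high-level plan coincides with the paper's own proof: reduce everything to the two-sided covariance sandwich (the inverse statement then follows since matrix inversion reverses the Loewner order on positive definite matrices), control each fixed direction by a scalar Freedman/Bernstein bound whose predictable variance is at most $u^\top (n\Sigma) u$, and obtain uniformity over directions and over $n$ by a net plus union bound. Your optimization over $x = u^\top(n\Sigma)u$ is exactly the paper's two-case analysis ($x^\top\Sigma x$ below or above $\lambda/n$), and your remark that a matrix-Freedman bound with variance proxy $n$ would force $\lambda = \Omega(\sqrt{n})$ correctly identifies why self-normalization is essential.

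The genuine gap is in your uniformity step. Writing $C_n = n\Sigma + \lambda I$ and $Z_n = \sum_{i=1}^n \phi_i\phi_i^\top - n\Sigma$, you place the net in the whitened geometry and, for a fixed net point $w$, want to apply the scalar martingale inequality to $u^\top Z_n u$ with $u = C_n^{-1/2}w$. But $\Sigma$ is not a fixed matrix here: $\Sigma = \frac{1}{n}\sum_{i=1}^n \Sigma_i$ with $\Sigma_i = \E_{\phi\sim\mu_i}\phi\phi^\top$ a function of the realized $\phi_1,\dots,\phi_{i-1}$, because the $\mu_i$ are conditional laws induced by data-dependent policies. Hence $C_n$ is $\mathcal F_{n-1}$-measurable and random, so $u = C_n^{-1/2}w$ depends on the whole history; the summands $(u^\top\phi_i)^2 - \E[(u^\top\phi_i)^2 \mid \mathcal F_{i-1}]$ for $i < n$ are then not martingale differences with respect to any filtration in which $u$ is fixed, and Freedman's inequality cannot be invoked for them. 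The same objection defeats your fallback of a matrix Freedman inequality "applied directly in the whitened geometry," since the whitening itself is random. This is not cosmetic: the improved log factor $\ell = O(d + \ln(n/\delta''))$ you claim is precisely what a data-dependent net cannot deliver. The paper avoids the issue by netting the \emph{unwhitened} unit sphere with a deterministic net of resolution $\epsilon' = O(1/n)$, applying the self-normalized scalar bound at each fixed net point (the random quantity $x^\top\Sigma x$ then appears only on the right-hand side of the bound, which is legitimate), and choosing $\epsilon'$ small enough that the additive discretization error is dominated by the regularization floor $\lambda/n$; the resulting net size $(1+O(n))^d$ is exactly what produces the $d\ln(n/\delta'')$ in $\lambda_{min}$. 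If you rework your union bound along these lines, the rest of your argument goes through essentially as in the paper; as written, the key uniformity claim is unjustified.
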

\begin{proof}
Consider any $x$ such that $\| x\|_2 = 1$. Let $\Sigma_i = \E_{\phi \sim \mu_i} \phi\phi^\top$ and $\Sigma \defeq \frac{1}{n}\sum_{i=1}^n  \Sigma_i$. We have
\begin{align}
\E_{\phi \sim \mu_i} x^\top  \phi\phi^\top x = \E_{\phi \sim \mu_i}  \( x^\top \phi \)^2 = x^\top\Sigma_i x.
\end{align}
The random variable $\( x^\top \phi \)^2$, $\phi \sim \mu_i$ is positive with maximum value $\( x^\top \phi \)^2 \leq \|x \|^2_2\|\phi\|^2_2 \leq 1$ and mean $x^\top\Sigma_i x$; therefore the conditional variance is at most $x^\top\Sigma_i x$ as well, as we show below
\begin{align}
\Var_{\phi \sim \mu_i} (\phi^\top x)^2 \leq \E_{\phi \sim \mu_i} (\phi^\top x)^2 = x^\top\Sigma_i x.
\end{align}
Now \fullref{lem:Bernstein} gives with probability at least $1-\delta'$ for some constant $c$
\begin{align}
\Big|\frac{1}{n}\sum_{i=1}^n \Big[ \( x^\top \phi_i \)^2 - x^\top \Sigma_i x \Big] \Big| = \Big|\frac{1}{n}\sum_{i=1}^n \( x^\top \phi_i \)^2 - x^\top \Sigma x \Big| \leq  c\( \sqrt{2\frac{x^\top\Sigma x}{n} \ln(2/\delta')} + \frac{\ln(2/\delta')}{3n} \).
\end{align}

We require
\begin{align}
\label{eqn:General}
c\( \sqrt{2\frac{x^\top\Sigma x}{n} \ln(2/\delta')} + \frac{\ln(2/\delta')}{3n}\) \leq 	\frac{1}{2}\( x^\top\Sigma x + \frac{\lambda}{n} \).
\end{align}

We will show that if
\begin{align}
\lambda \geq \Omega (\ln(1/\delta'))
\end{align}
then \cref{eqn:General} holds for any fixed value of $n$.

\textbf{Case $x^\top\Sigma x \leq \frac{\lambda}{n}$}.
In this case it is sufficient to satisfy for some constants $c',c''$
\begin{align*}
\frac{\ln(2/\delta')}{3n} & \leq c'\( \frac{\lambda}{n} \) \quad  \longleftrightarrow \quad \Omega(\ln(1/\delta'))\leq \lambda  \\
\sqrt{2\frac{\lambda}{n^2} \ln(2/\delta')} & \leq c''\(\frac{\lambda}{n} \)\quad  \longleftrightarrow \quad \Omega(\ln(1/\delta'))\leq \lambda.
\end{align*}
\textbf{Case $x^\top\Sigma x > \frac{\lambda}{n}$}.
In this case to satisfy proceed as in the above display (first equation). For the second equation it is sufficient to satisfy for some constant $c'''$
\begin{align*}
\sqrt{2\frac{x^\top\Sigma x}{n} \ln(2/\delta')} & \leq c'''\( x^\top\Sigma x  \)\quad  \longleftrightarrow \quad \frac{\ln(2/\delta')}{(x^\top\Sigma x)n}  \leq O(1).
\end{align*}
Using the condition $x^\top\Sigma x > \frac{\lambda}{n}$ we can conclude that satisfying $ \Omega( \ln(1/\delta'))\leq \lambda$ suffices.

This ultimately implies that for any fixed $x$ such that $\|x\|_2 = 1$ we have
\begin{align}
\label{eqn:Result}
\Big|x^\top \( \frac{1}{n} \sum_{i=1}^n \phi_i\phi_i^\top - \Sigma\) x \Big| \leq	\frac{1}{2}  \(x^\top\Sigma x + \frac{\lambda}{n} \) = \frac{1}{2} x^\top \(\Sigma + \frac{\lambda}{n}I \) x.
\end{align}
with probability at least $1-\delta'$. Define $ \partial \mathcal B = \{\| x \|=1\}$; using a standard discretization argument, (e.g., lemma 5.2 in \citep{vershynin2010introduction}) we have that
\begin{align}
\forall \epsilon > 0, \;  \exists \mathcal B_{\epsilon} \subseteq \mathcal B \quad \text{such that} \quad \forall x \in \mathcal B, \exists x' \in \mathcal B_\epsilon \subseteq \mathcal B \quad \text{such that} \quad  \| x - x' \|_2 \leq \epsilon
\end{align}
and
\begin{align}
|\mathcal B_{\epsilon'}| \leq \(\frac{3}{\epsilon'}\)^d \defeq \mathcal N.
\end{align}
Therefore, applying the result of \cref{eqn:Result} to any such $x' \in \mathcal B_\epsilon$ gives after a union bound over the $x'$ and the number of samples $n$ that
with probability at least $1-n\mathcal N \delta' \defeq 1 -\delta''$ we have that
\begin{align}
\forall n, \; \forall x'\in \mathcal B_{\epsilon'},  \quad \quad \quad \Big|(x')^\top \(\frac{1}{n}\sum_{i=1}^n \phi_i\phi_i^\top\) x' - (x')^\top\Sigma x' \Big|\leq  \frac{1}{2} (x')^\top \(\Sigma  + \frac{\lambda}{n}I\)x'.
\end{align}
Now for any $x \in \partial \mathcal B$ consider the closest $x'\in \mathcal B_{\epsilon'}$. We have that for an spd matrix $A$ such that $\| A \|_2 \leq 1$
\begin{align}
x^\top A x - (x')^\top A x' & = x^\top A x - (x')^\top A x + (x')^\top A x - (x')^\top A x' \\
& = (x- x')^\top A x + (x')^\top A (x- x') \\
& \leq 2\epsilon' \|A\|_2\max\{\|x\|_2,\|x'\|_2\} \\
& \leq  2\epsilon'.
\end{align}
Apply this to the case $A = \Sigma$ and $A = \frac{1}{n}\sum_{i=1}^n\phi_i\phi_i^\top$ (notice that $\| \Sigma\|_2\leq 1$ and $\|  \frac{1}{n}\sum_{i=1}^n\phi_i\phi_i^\top \|_2\leq 1$ follow from hypothesis)  to obtain
\begin{align}
\Bigg| x^\top \( \frac{1}{n} \sum_{i=1}^n \phi_i\phi_i^\top \) x - (x')^\top \( \frac{1}{n} \sum_{i=1}^n \phi_i\phi_i^\top\) x' \Bigg| \leq 2  \epsilon' \\
\Bigg| x^\top \Sigma x - (x')^\top \Sigma x' \Bigg| \leq 2  \epsilon'.
\end{align}

This implies that, if
\begin{align}
\label{eqn:lambdamin}
\lambda \geq \Omega\( \ln\(\frac{2n\mathcal N}{\delta''}\)\)   \defeq \lambda_{min}
\end{align}
then
\begin{align}
\forall n, \; \forall x\in \mathcal B, \quad \quad \quad \Bigg|x^\top \Bigg[\(\frac{1}{n}\sum_{i=1}^n \phi_i\phi_i^\top + \frac{\lambda}{n}I\) - \(\Sigma + \frac{\lambda}{n}I \) \Bigg] x \Bigg|& \leq  \frac{1}{2} x^\top \(\Sigma + \frac{\lambda}{n}I \)x + 4\epsilon' \\
& \leq \frac{2}{3}x^\top\(\Sigma + \frac{\lambda}{n} I \)x
\end{align}
by setting $\epsilon' = \mathcal O\(\frac{1}{n}\)$ (as we set $\lambda > 1$ and in addition $x \in \partial \mathcal B$).
This implies
\begin{align}
\frac{1}{3}\(\Sigma + \frac{\lambda}{n}I\) \preceq   \frac{1}{n}\sum_{i=1}^n \phi_i\phi_i^\top + \frac{\lambda}{n}& \preceq  \frac{5}{3}\(\Sigma  + \frac{\lambda}{n}I\)
\end{align}
and finally the thesis.
\end{proof}

\newpage
\section{Technical Results}
\begin{lemma}[Policy Form on Known Set]
\label[lemma]{lem:Update}
Fix $n$ (the outer iteration index) and the bonus $b$ in that outer iteration. Let $k$ be an inner episode of the algorithm and let $\underline k$ be the last time data were collected. The policy $\pi_k$ computed by the algorithm in the inner episode $k$ reads in any known state $s \in \K^n$ for some $w_{\underline k},\dots,w_{k-1}$:
\begin{align}
\label{eqn:Update}
\pi_{k}(a\mid s) & = \pi_{\underline k}(a \mid s) \times \frac{e^{c(s,a)}}{\sum_{a'}\pi_{\underline k}( a' \mid s) e^{c(s,a')}}, \quad \quad \text{where} \quad  c(s,a) = \eta\sum_{i = \underline k}^{k-1} \big[b(s,a) + \phi(s,a)^\top  \widehat w_{i}\big].
\end{align}
\end{lemma}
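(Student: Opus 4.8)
The plan is to obtain \eqref{eqn:Update} by unrolling the exponentiated-weights recursion of \cref{alg:solver} (line~\ref{line:update}) between the most recent data-collection iteration $\underline k$ and the current iteration $k$, and then substituting the explicit critic of \cref{alg:ise} (line~\ref{line:critic}). Throughout I fix a known state $s \in \K^n$; this restriction is essential, since on $s \notin \K^n$ the policy is frozen to the uniform distribution over unknown actions (cf.\ \cref{eq:policy definition}) and is never touched by the multiplicative update, so no soft-max identity is claimed there.

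First I would record a single update step with its normalizer written out explicitly: for each $\underline k \le i \le k-1$ and $s \in \K^n$,
\begin{align}
\pi_{i+1}(a \mid s) = \frac{\pi_i(a \mid s)\, e^{\eta \Qhat_i(s,a)}}{z_i(s)}, \qquad z_i(s) \defeq \sum_{a'} \pi_i(a' \mid s)\, e^{\eta \Qhat_i(s,a')}.
\end{align}
This is exactly line~\ref{line:update}, and it applies at every inner step in the range because $\underline k$ is by definition the last iteration before $k$ at which the dataset was refreshed, so the critic $\Qhat_i$ is recomputed and the actor is updated for each such $i$. Telescoping the recursion from $i = \underline k$ up to $i = k-1$ then gives
\begin{align}
\pi_k(a \mid s) = \pi_{\underline k}(a \mid s)\, \frac{\exp\!\big(\eta \sum_{i=\underline k}^{k-1} \Qhat_i(s,a)\big)}{\prod_{i=\underline k}^{k-1} z_i(s)}.
\end{align}

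Next I would argue that the product of normalizers collapses into a single soft-max denominator. Each $z_i(s)$ is independent of $a$, so $\prod_i z_i(s)$ is an $a$-independent, state-dependent constant; since $\pi_k(\cdot \mid s)$ must sum to one over $a$, this constant is forced to equal $\sum_{a'} \pi_{\underline k}(a' \mid s)\, \exp(\eta \sum_i \Qhat_i(s,a'))$, which is precisely the denominator in \eqref{eqn:Update} upon setting $c(s,a) \defeq \eta \sum_{i=\underline k}^{k-1} \Qhat_i(s,a)$. To read off $c$ I would then insert the critic form from line~\ref{line:critic}: on $s \in \K^n$, $\Qhat_i(s,a) = \phi(s,a)^\top \widehat w_i + \frac{1}{2} b^n(s,a)$, whence $c(s,a) = \eta \sum_{i=\underline k}^{k-1}\big[\phi(s,a)^\top \widehat w_i + \frac{1}{2} b^n(s,a)\big]$, matching \eqref{eqn:Update} with the promised weights taken to be the regression outputs $w_i = \widehat w_i$ (the coefficient $\frac{1}{2} b^n = b^n_\phi$ on the known set is the bookkeeping factor inherited from the cautiously-optimistic critic).

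I do not expect a genuine obstacle, as the statement is merely a telescoping of a multiplicative update. The two places warranting care are (i) verifying that the per-step normalizers merge into one soft-max denominator with no residual $a$-dependence, which is settled by their action-independence together with the normalization of $\pi_k(\cdot\mid s)$, and (ii) keeping the claim restricted to $s \in \K^n$, since off the known set the exponentiated-weights rule is not invoked and \eqref{eqn:Update} would fail.
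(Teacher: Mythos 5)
Your proof is correct and follows essentially the same route as the paper's: unroll the exponentiated-weight update from $\underline k$ to $k$, note that the per-step normalizers are action-independent so that normalization collapses them into the single soft-max denominator, and read off $c(s,a)$ from the critic's form on $\K^n$. Your explicit tracking of the $\tfrac{1}{2}b^n$ coefficient (i.e.\ $b^n_\phi$ on the known set) is in fact more faithful to the critic definition than the paper's own proof, which loosely writes $b$ in the exponent; the discrepancy is a harmless constant factor, since the lemma is only invoked to establish the structural form in \cref{eqn:policy-conditions} together with a magnitude bound on $c$, both of which are insensitive to it.
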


\begin{proof}

Assume $k > \underline k$, otherwise the the statement is trivially true.
The update rule reads
\begin{align}
\pi_{k}( \cdot \mid s) & \propto \pi_{k-1}( \cdot \mid s)e^{\eta \Qhat_{k-1}( \cdot \mid s)} \\
& = \pi_{k-1}( \cdot \mid s)e^{\eta \big[ \phi(s, \cdot)^\top \widehat w_{k-1} + b(s, \cdot) \big]}
\end{align}
Using induction gives
\begin{align}
\pi_{k}( \cdot\mid s) & \propto \pi_{\underline k}( \cdot \mid s)  \prod_{I = \underline k}^{k-1} e^{\eta \big[ \phi(s,a)^\top \widehat w_{i} + b(s, \cdot) \big]} \\
& \propto \pi_{\underline k}( \cdot \mid s) \times e^{ c(s,\cdot)}.
\end{align}
Normalization concludes.
\end{proof}

\begin{lemma}[$\Sigma$-norm to Excess Risk] Fix $ \lambda > 0$.
Define
\label[lemma]{lem:Sigma2ERM}
\begin{align}
\L(w) & = \frac{1}{2}\E_{(x,y)} \( x^\top w - y\)^2 \\
w^\star & \in \argmin_{\| w \|_2 \leq W} \L(w).
\end{align}
Then for any scalar $M > 0$
\begin{align}
\| w - w^\star\|^2_{(M\E_{(x,y)} x x^\top +  \lambda I)}\leq  2M\( \L(w) - \L(w^\star)\) +   \lambda \| w - w^\star \|_2^2.
\end{align}
\end{lemma}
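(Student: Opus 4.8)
The plan is to reduce the claimed bound to a standard excess-risk inequality for a quadratic objective. Write $\bar\Sigma \defeq \E_{(x,y)} x x^\top$ and expand the left-hand side norm as
$$\| w - w^\star\|^2_{(M\bar\Sigma + \lambda I)} = M(w-w^\star)^\top\bar\Sigma(w-w^\star) + \lambda\|w-w^\star\|_2^2.$$
Since the additive term $\lambda\|w-w^\star\|_2^2$ appears identically on both sides of the target inequality, the claim is equivalent, after dividing by $M>0$, to the curvature bound
$$(w-w^\star)^\top\bar\Sigma(w-w^\star) \leq 2\big(\L(w) - \L(w^\star)\big).$$

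Next I would exploit that $\L$ is a quadratic with constant Hessian equal to $\bar\Sigma$: a direct computation gives $\nabla\L(w) = \bar\Sigma w - \E_{(x,y)}[x y]$ and $\nabla^2\L \equiv \bar\Sigma$. A second-order Taylor expansion of $\L$ around $w^\star$ is therefore exact and yields
$$\L(w) - \L(w^\star) = \nabla\L(w^\star)^\top(w-w^\star) + \tfrac{1}{2}(w-w^\star)^\top\bar\Sigma(w-w^\star).$$
The heart of the argument is showing that the first-order term is nonnegative. Because $w^\star$ minimizes $\L$ over the convex ball $\{\,w : \|w\|_2 \leq W\,\}$, and $w$ is feasible (in every application $w$ is itself a constrained minimizer, e.g.\ $\widehat w$ in \cref{lem:ISrates}), the whole segment from $w^\star$ to $w$ lies in the ball, so the directional derivative of $\L$ at the optimum along $w-w^\star$ cannot be negative, giving the variational inequality $\nabla\L(w^\star)^\top(w-w^\star) \geq 0$. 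Dropping this nonnegative term from the Taylor identity produces $\L(w)-\L(w^\star) \geq \tfrac12 (w-w^\star)^\top\bar\Sigma(w-w^\star)$, which is exactly the curvature bound above; multiplying through by $M$ and adding back $\lambda\|w-w^\star\|_2^2$ recovers the statement of the lemma.

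The one delicate point, which I would flag as the main obstacle, is the feasibility requirement $\|w\|_2 \leq W$ needed for the variational inequality $\nabla\L(w^\star)^\top(w-w^\star)\geq 0$: without it the first-order term can be strictly negative and the inequality fails (a one-dimensional example with the unconstrained optimum outside the ball already breaks it). I would therefore make explicit in the proof that $w$ is assumed to lie in the feasible ball $\|w\|_2\le W$, consistent with how the lemma is invoked, where the relevant $w$ is always a constrained empirical minimizer. Everything else is a routine expansion of a quadratic and requires no further estimates.
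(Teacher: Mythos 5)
Your proof is correct and takes essentially the same route as the paper's: your exact second-order Taylor expansion of the quadratic loss around $w^\star$ is precisely the identity the paper obtains by algebraically expanding $2\big(\L(w)-\L(w^\star)\big) = \E\(x^\top w - y\)^2 - \E\(x^\top w^\star - y\)^2$, and both arguments then discard the nonnegative first-order term via the variational inequality $\nabla \L(w^\star)^\top (w - w^\star) \geq 0$ at the constrained minimizer before adding back the $\lambda$-regularization terms. The feasibility caveat you flag ($\|w\|_2 \leq W$) is likewise present in the paper, whose proof states the optimality condition ``for any feasible $w$,'' so the lemma is indeed implicitly restricted to $w$ in the ball, exactly as you observe.
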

\begin{proof}
We write $\E$ in place of $\E_{(x,y)}$ for short.
The optimality condition reads (for any feasible $w$)
\begin{align}
\E\( x^\top w^\star - y\)x^\top (w - w^\star) \geq  0.
\end{align}
Therefore
\begin{align}
2\Big[\L(w) - \L(w^\star) \Big] & = \E \( x^\top w - y\)^2 - \E \( x^\top w^\star - y\)^2 \\
& = \E \Big[ \( x^\top w - y\) - \( x^\top w^\star - y\) \Big]\Big[  x^\top w - y +  x^\top w^\star - y \Big] \\
& = \E \Big[ x^\top\( w - w^\star \)  \Big]\Big[  x^\top w - y +  x^\top w^\star - y \Big] \\
& = \E \Big[ x^\top\( w - w^\star \)  \Big]\Big[  x^\top (w - w^\star) +  x^\top w^\star - y +  x^\top w^\star - y \Big] \\
& = \( w - w^\star \)^\top \E \( x x^\top\)\( w - w^\star \)   + 2\E \Big[ x^\top\( w - w^\star \)  \Big]\Big[ x^\top w^\star - y  \Big] \\
& \geq \( w - w^\star \)^\top \E \( x x^\top\)\( w - w^\star \) \\
& = \( w - w^\star \)^\top \Big[ \E  x x^\top + \frac{ \lambda}{M} I \Big]\( w - w^\star \) - \frac{ \lambda}{M} \| w - w^\star \|_2^2.
\end{align}

The inequality follows from the optimality conditions in the prior display.
\end{proof}

\begin{lemma}[Stability of the Loss Minimizer]
\label[lemma]{lem:Stability}
Let
\begin{align}
\L(w) & = \frac{1}{2}\E_{(x,y)} \( x^\top w - y\)^2, \quad
w_\star \in \argmin_{\| w \|_2 \leq W} \L(w) \\
\L'(w) & = \frac{1}{2}\E_{(x,y)} \( x^\top w - y - f(y)\)^2, \quad
w'_\star \in \argmin_{\| w \|_2 \leq W} \L'(w).
\end{align}
If $ \lambda > 0$ and the perturbation $|f(y)| \leq \epsilon_f$ for every $y$ and $\Sigma = \E_{(x,y)} xx^\top$ and $M > 0$ then
\begin{align}
\| w'_\star - w_\star \|^2_{M\Sigma +  \lambda I} &  \leq   2 M\epsilon_f  W + 2 \lambda W^2.
\end{align}
\end{lemma}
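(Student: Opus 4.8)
The plan is to reduce the claim to the single-loss excess-risk inequality of \cref{lem:Sigma2ERM} and then control the excess risk of $w_\star$ under the perturbed loss $\L'$ by leveraging the optimality of $w_\star$ under the unperturbed loss $\L$. First I would invoke \cref{lem:Sigma2ERM} with $\L'$ playing the role of $\L$, its constrained minimizer $w'_\star$ in the role of $w^\star$, and the feasible test point $w = w_\star$ (feasible since $\|w_\star\|_2 \le W$); this gives
\begin{align*}
\|w_\star - w'_\star\|^2_{M\Sigma + \lambda I} \le 2M\bigl(\L'(w_\star) - \L'(w'_\star)\bigr) + \lambda \|w_\star - w'_\star\|_2^2 .
\end{align*}
The final term is dealt with by feasibility alone: the triangle inequality gives $\|w_\star - w'_\star\|_2 \le \|w_\star\|_2 + \|w'_\star\|_2 \le 2W$, so this term is $O(\lambda W^2)$ and contributes the $2\lambda W^2$ piece. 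It then remains to bound the (nonnegative) excess risk $\L'(w_\star) - \L'(w'_\star)$.

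For the excess risk I would expand the perturbed loss around the unperturbed one,
\begin{align*}
\L'(w) = \L(w) - \E\bigl[(x^\top w - y) f(y)\bigr] + \frac{1}{2}\E f(y)^2 ,
\end{align*}
so that the constant $\frac{1}{2}\E f(y)^2$ cancels in the difference and
\begin{align*}
\L'(w_\star) - \L'(w'_\star) = \underbrace{\bigl[\L(w_\star) - \L(w'_\star)\bigr]}_{\le\,0} + \E\bigl[f(y)\,x^\top(w'_\star - w_\star)\bigr] .
\end{align*}
The bracketed term is nonpositive because $w_\star$ minimizes $\L$ over the ball and $w'_\star$ is feasible, so the excess risk is at most the first-order cross term between the perturbation and the prediction gap. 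Using $|f(y)| \le \epsilon_f$, Jensen's inequality, and $\Sigma = \E xx^\top \preceq I$ (which holds in the regime $\|x\|_2 \le 1$ where the lemma is applied), I would bound $\E[f(y)\,x^\top(w'_\star - w_\star)] \le \epsilon_f\,\E|x^\top(w'_\star - w_\star)| \le \epsilon_f\,\|w'_\star - w_\star\|_\Sigma \le \epsilon_f\,\|w'_\star - w_\star\|_2 \le 2\epsilon_f W$. Plugging this back yields the $O(M\epsilon_f W)$ term and recovers the bound $\|w_\star - w'_\star\|^2_{M\Sigma + \lambda I} \le 2M\epsilon_f W + 2\lambda W^2$ up to absolute constants.

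The main obstacle is the excess-risk step, where the effect of the perturbation $f$ must be quantified without incurring an $\ell_\infty$ penalty. The key idea is that expanding $\L'$ around $\L$ causes the ``same-target'' discrepancy $\L(w_\star) - \L(w'_\star)$ to drop out as a nonpositive quantity (by optimality of $w_\star$), leaving only the linear-in-$f$ cross term $\E[f(y)\,x^\top(w'_\star - w_\star)]$, which the uniform bound $|f(y)| \le \epsilon_f$ controls. A slightly sharper alternative, which I would use to pin down the constants, is to add the first-order optimality conditions of $w_\star$ (for $\L$) and of $w'_\star$ (for $\L'$) directly: this yields $\|w'_\star - w_\star\|_\Sigma^2 \le \E[f(y)\,x^\top(w'_\star - w_\star)] \le \epsilon_f\|w'_\star - w_\star\|_\Sigma$, hence $\|w'_\star - w_\star\|_\Sigma \le \epsilon_f$, isolating the perturbation in the $\Sigma$-geometry before combining with the trivial $\ell_2$-diameter bound $\|w_\star - w'_\star\|_2 \le 2W$ for the $\lambda$ term.
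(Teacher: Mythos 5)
Your proposal is correct, and your main route is organized differently from the paper's, while the ``sharper alternative'' you sketch at the end is essentially the paper's own proof. The paper argues directly from the two first-order optimality conditions: it evaluates the condition for $w_\star$ (under $\L$) at $w'_\star$, substitutes $y' = y + f(y)$, uses the condition for $w'_\star$ (under $\L'$) to discard one term, and arrives at $\|w'_\star - w_\star\|_\Sigma^2 \leq \E\big[f(y)\,x^\top(w'_\star - w_\star)\big]$, which it bounds by $2\epsilon_f W$ before adding the $\lambda\|w'_\star - w_\star\|_2^2$ term --- exactly the computation in your last paragraph. Your main route instead reuses \cref{lem:Sigma2ERM} as a black box (legitimately: $\L'$ is a square loss with response $y + f(y)$ and the same design covariance $\Sigma$, and the test point $w_\star$ is feasible) and then controls the excess risk $\L'(w_\star) - \L'(w'_\star)$ by expanding $\L'$ around $\L$, killing the $\L$-difference by optimality of $w_\star$, and bounding the surviving cross term $\E\big[f(y)\,x^\top(w'_\star - w_\star)\big]$ by $2\epsilon_f W$. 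What your route buys is modularity --- the perturbation enters only through a one-line excess-risk bound, and the $\Sigma$-norm conversion is delegated to an existing lemma --- at the price of a factor of $2$ in both terms ($4M\epsilon_f W + 4\lambda W^2$ rather than $2M\epsilon_f W + 2\lambda W^2$). This is immaterial: the paper's own final line is equally loose (its bound $\lambda\|w_\star - w'_\star\|_2^2 \leq 2\lambda W^2$ should read $4\lambda W^2$, since only $\|w_\star - w'_\star\|_2 \leq 2W$ is available), and downstream these constants are absorbed into $\beta = \widetilde O(\cdot)$ in \cref{lem:UnionBound}. Note also that both arguments implicitly need $\|x\|_2 \leq 1$ (to bound $|x^\top(w'_\star - w_\star)|$ or to get $\Sigma \preceq I$), an assumption absent from the lemma statement but satisfied in every application since $x = \phi(s,a)$; you flag this explicitly, which the paper does not.
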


\begin{proof}

Define $y' = y + f(y)$ for short. We write $\E$ instead of $\E_{(x,y)}$ for brevity.
The optimality conditions at the minimizers $w_\star$ and $w_\star'$ for the real losses $ \L(w)$ and $ \L'(w)$ read
\begin{align}
\frac{\partial}{\partial w} \L(w_\star)(w - w_\star) & =\E (x^\top w_\star - y)x^\top (w - w_\star) \geq 0 \\
\frac{\partial}{\partial w} \L'(w'_\star)(w - w_\star') & = \E (x^\top w_\star' - y')x^\top (w - w_\star') \geq 0.
\end{align}
Take the first condition and evaluate it at $w = w_\star'$ to write
\begin{align}
0 & \leq \E (x^\top w_\star - y)x^\top (w_\star' - w_\star)  \\
& = \E \(x^\top w_\star' + x^\top (w_\star - w_\star') - y' + (y' - y) \) x^\top(w_\star' - w_\star) \\
& = \E \(x^\top w_\star' - y' \)x^\top (w_\star' - w_\star) \\
& + \E \( x^\top (w_\star - w_\star') + (y' - y) \) x^\top (w_\star' - w_\star)
\end{align}
The first term in the above rhs must be negative due to the second optimality condition (for $w = w_\star$) in the previous display; therefore, at the very least the second term in the rhs above must be positive
\begin{align}
0 & \leq \E \( x^\top (w_\star - w_\star') + (y' - y) \) x^\top (w_\star' - w_\star) \\
& = \E (w_\star' - w_\star)^\top xx^\top (w_\star - w_\star') + \E (y' - y)  x^\top (w_\star' - w_\star)
\end{align}
and next
\begin{align}
M(w_\star' - w_\star)^\top \E xx^\top (w'_\star - w_\star)  & \leq  M\E (y' - y)  x^\top (w_\star' - w_\star)
\end{align}
and finally
\begin{align}
(w_\star' - w_\star)^\top \(M\E xx^\top  + \lambda I\)(w'_\star - w_\star)  & \leq  M\E (y' - y)  x^\top (w_\star' - w_\star) + \lambda \|w_\star - w_\star' \|_2^2 \\
& \leq  2 M\epsilon_f  W + 2 \lambda W^2.
\end{align}
\end{proof}

\begin{lemma}[Statistical Rates for Linear Regression; Theorem 1 in \citep{mehta2017fast}]
\label[lemma]{lem:StatisticalRates}
With $z = (\phi,y)$ let $l_w(z) \mapsto \frac{1}{2}(\phi^\top w -y)^2$. Assume $Z \sim P$ and $\|\phi\|_2\leq 1,\| w \|_2\leq W,|y|\leq y_{max}$. Let $\widehat w$ be the empirical risk minimizer with $n$ i.i.d. samples from $P$ and let $w^\star = \argmin_{\| w \|_2\leq W}\E_{Z\sim P}l_{w}(Z)$.
With probability at least $1-\delta'$ we have
\begin{align}
\E_{Z\sim P}[l_{\widehat w}(Z) - l_{w^\star}(Z)] \leq \frac{1}{n}\Big[ 32(W+y_{max})^2\times \Big[d \ln (16(W+y_{max})(2W)n) + \ln\frac{1}{\delta'}\Big]+1 \Big]
\end{align}
\end{lemma}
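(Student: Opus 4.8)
The plan is to prove this as a \emph{fast-rate} (order $1/n$) excess-risk bound for squared-loss ERM, exploiting that on the bounded prediction range the squared loss is strongly convex, so that the second moment of the excess loss is controlled by its mean. First I would record the boundedness that drives every constant: since $\|\phi\|_2 \le 1$ and $\|w\|_2 \le W$, the prediction $\phi^\top w$ lies in $[-W,W]$, so the residual $\phi^\top w - y$ is bounded in absolute value by $W + y_{max}$ and the loss $l_w$ by $\frac{1}{2}(W+y_{max})^2$. This is the range that will enter both the covering step and the Bernstein deviation bound.

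The key step is to establish a variance (Bernstein) condition for the excess loss $g_w(z) \defeq l_w(z) - l_{w^\star}(z)$. Writing $g_w(z) = \frac{1}{2}\,\phi^\top(w-w^\star)\,\big(\phi^\top(w+w^\star) - 2y\big)$, the second factor is bounded by $2(W+y_{max})$, so $\E[g_w^2] \le (W+y_{max})^2\,\E[(\phi^\top(w-w^\star))^2]$. On the other hand, expanding the squares gives $\E[g_w] = \tfrac{1}{2}\E[(\phi^\top(w-w^\star))^2] + \E[(\phi^\top w^\star - y)\,\phi^\top(w-w^\star)]$, and the first-order optimality of $w^\star$ over the convex ball $\{\|w\|_2 \le W\}$ forces the cross term to be nonnegative, whence $\E[(\phi^\top(w-w^\star))^2] \le 2\,\E[g_w]$. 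Chaining the two inequalities yields the self-bounding relation $\E[g_w^2] \le 2(W+y_{max})^2\,\E[g_w]$, which is exactly what converts the usual $1/\sqrt{n}$ behaviour into a $1/n$ rate.

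With the variance condition in hand I would run a localized uniform-convergence argument. Cover the $W$-ball by an $\epsilon$-net of cardinality at most $(1 + 2W/\epsilon)^d$; apply Bernstein's inequality to $g_{w'}$ at each net point $w'$, using the range bound $\le \tfrac{1}{2}(W+y_{max})^2$ and the variance bound $\E[g_{w'}^2] \le 2(W+y_{max})^2\E[g_{w'}]$, then union bound over the net. A Lipschitz estimate shows that perturbing $w$ by $\epsilon$ moves $l_w$ by $O(\epsilon(W+y_{max}))$ uniformly, so choosing $\epsilon = \Theta(1/n)$ absorbs the discretization into the additive $+1$ and contributes the factor $n$ inside the logarithm $\ln(16(W+y_{max})(2W)n)$. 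Evaluating at the empirical minimizer $\widehat w$, for which $\tfrac{1}{n}\sum_i g_{\widehat w}(z_i) \le 0$, the Bernstein bound becomes a quadratic inequality in $\E[g_{\widehat w}]$ of the form $\E[g_{\widehat w}] \lesssim \sqrt{(W+y_{max})^2\,\E[g_{\widehat w}]\,L/n} + (W+y_{max})^2 L/n$ with $L = d\ln(\cdots) + \ln\frac{1}{\delta'}$; solving it produces the stated $\frac{1}{n}\big[32(W+y_{max})^2(d\ln(\cdots) + \ln\frac{1}{\delta'}) + 1\big]$.

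The main obstacle is precisely the localization that delivers the fast rate: a naive uniform deviation bound over the full $W$-ball gives only the slow $1/\sqrt{n}$ scaling, and one must feed the variance condition back into Bernstein (equivalently, solve the self-bounding quadratic) to reach $1/n$ with the correct constants. Threading the two-sided Bernstein deviation and the net discretization through to recover the exact numerical constants is the delicate bookkeeping; this is the content packaged by Theorem~1 of \citet{mehta2017fast}, whose exp-concavity analysis (the squared loss is $\Theta(1/(W+y_{max})^2)$-exp-concave on the bounded range, which is a strengthening of the variance condition derived above) I would invoke to obtain these constants cleanly.
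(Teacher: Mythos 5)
Your proposal is mathematically sound, but it takes a much longer road than the paper, whose entire proof is two sentences: the loss is bounded by $L_{max}^2 = (W+y_{max})^2$, squared loss on this bounded range is $1/(4L_{max}^2)$-exp-concave (end of Section 3 of \citet{mehta2017fast}), and Theorem 1 of that paper then yields the bound verbatim --- the constants $32(W+y_{max})^2$, the argument $16(W+y_{max})(2W)n$ of the logarithm, and the trailing $+1$ are all inherited directly from that theorem. You instead re-derive the fast-rate mechanism from first principles: the factorization of the excess loss $g_w$ plus first-order optimality of $w^\star$ over the convex ball gives the Bernstein condition $\E[g_w^2] \le 2(W+y_{max})^2\,\E[g_w]$, and then an $\epsilon$-net, per-point Bernstein, and the self-bounding quadratic deliver the $O(d/n)$ rate. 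Each of these steps is correct (your optimality step $\E[(\phi^\top(w-w^\star))^2]\le 2\,\E[g_w]$ is in fact exactly the computation the paper carries out separately in \cref{lem:Sigma2ERM}, for a different purpose), and this is the classical localization proof for convex-constrained squared-loss ERM. What the citation buys the paper is the exact constants with zero bookkeeping; what your derivation buys is a self-contained explanation of \emph{why} a $1/n$ rate holds rather than $1/\sqrt{n}$. The one point where you hedge --- recovering the literal constants $32$, $16(W+y_{max})(2W)n$, and $+1$ --- is precisely where you fall back on invoking \citet{mehta2017fast} anyway; that is legitimate and makes the proof complete, but it means your argument ultimately coincides with the paper's at the decisive step, with the preceding derivation serving as correct supplementary material rather than a replacement. (One small caveat: your remark that exp-concavity is a ``strengthening'' of the variance condition is right in spirit for bounded losses, but the two are not interchangeable in general, so deferring to Mehta's theorem for the stated constants is indeed necessary, not merely convenient.)
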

\begin{proof}
The maximum value the loss can take is $L^2_{max} = (W + y_{max})^2$. The statement then follows as an application of Theorem 1 in \citep{mehta2017fast} to linear regression, which is $1/(4L_{max}^2)$-exp-concave (end of section $3$ in \citep{mehta2017fast}.
\end{proof}

\begin{lemma}[Discretization of Euclidean Ball]
\label[lemma]{lem:Discretization}
The Euclidean sphere $R\mathcal B = \{x \mid \| x \|_2 \leq R, x \in \R^d \}$ with radius $R$ equipped with the Euclidean metric admits a discretization for every $\epsilon>0$: $\mathcal D_{\epsilon} = \{ y_1,\dots,y_{\mathcal N_\epsilon} \} \subseteq R\mathcal B$ with
\begin{align}
\mathcal N_\epsilon \leq \(1+\frac{2R}{\epsilon}\)^d
\end{align}
such that
\begin{align}
\forall x \in R\mathcal B, \exists y \in \mathcal D_\epsilon \quad \text{such that} \quad  \| x - y \|_2 \leq \epsilon.
\end{align}
\end{lemma}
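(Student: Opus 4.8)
The plan is to take $\mathcal D_\epsilon$ to be a \emph{maximal} $\epsilon$-separated subset of $R\mathcal B$ and then bound its cardinality by a volume-packing argument. First I would construct $\mathcal D_\epsilon = \{y_1,\dots,y_{\mathcal N_\epsilon}\} \subseteq R\mathcal B$ greedily: keep adding points of $R\mathcal B$ subject to the constraint that any two chosen points satisfy $\|y_i - y_j\|_2 > \epsilon$, stopping once no further point can be added. Since $R\mathcal B$ is bounded the process terminates (alternatively, one invokes Zorn's lemma on the $\epsilon$-separated subsets ordered by inclusion), so a maximal such set exists.

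The covering property is immediate from maximality. Suppose some $x \in R\mathcal B$ satisfied $\|x - y_i\|_2 > \epsilon$ for every $i$; then $\mathcal D_\epsilon \cup \{x\}$ would still be $\epsilon$-separated, contradicting the maximality of $\mathcal D_\epsilon$. Hence every $x \in R\mathcal B$ lies within Euclidean distance $\epsilon$ of some $y_i$, which is exactly the net requirement in the statement.

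To bound $\mathcal N_\epsilon$, I would use disjointness of small balls. Because the centers are pairwise more than $\epsilon$ apart, the open balls of radius $\epsilon/2$ centered at the $y_i$ are pairwise disjoint; and since $\|y_i\|_2 \le R$, each such ball is contained in the ball of radius $R + \epsilon/2$ about the origin. Comparing Lebesgue volumes and using that a Euclidean ball of radius $r$ in $\R^d$ has volume proportional to $r^d$, disjointness gives $\mathcal N_\epsilon\,(\epsilon/2)^d \le (R + \epsilon/2)^d$, with the dimension-dependent constant cancelling. Rearranging yields $\mathcal N_\epsilon \le \left(\frac{R + \epsilon/2}{\epsilon/2}\right)^d = \left(1 + \frac{2R}{\epsilon}\right)^d$, as claimed.

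There is no serious obstacle here: this is the textbook covering-number bound for a Euclidean ball. The only points requiring a little care are (i) justifying the existence of a maximal packing via termination of the greedy procedure, and (ii) taking the enclosing radius to be $R + \epsilon/2$ rather than $R$, since the radius-$\epsilon/2$ balls centered near the boundary of $R\mathcal B$ can protrude slightly outside it — getting this right is exactly what produces the additive $1$ in the final bound.
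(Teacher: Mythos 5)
Your proposal is correct and is essentially the same argument the paper relies on: the paper's proof consists of citing Lemma 5.2 of \citep{vershynin2010introduction} and rescaling the unit ball to radius $R$, and the maximal $\epsilon$-separated set plus volume-comparison argument you give (disjoint radius-$\epsilon/2$ balls inside the radius-$(R+\epsilon/2)$ ball, yielding $\mathcal N_\epsilon \leq (1+\tfrac{2R}{\epsilon})^d$) is precisely the standard proof of that cited lemma. The only difference is that you make the argument self-contained instead of deferring to the reference, and your two points of care --- existence of a maximal packing and the enlarged enclosing radius $R+\epsilon/2$ --- are exactly the right details to flag.
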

\begin{proof}
By scaling the unit ball to have radius $R$ and using lemma 5.2 in  \citep{vershynin2010introduction}.
\end{proof}

\begin{lemma}[Bernstein for Martinglaes]
	\label[lemma]{lem:Bernstein}
	Consider the stochastic process $\{ X_n\}$ adapted to the filtration $\{ \mathcal F_n\}$. Assume $\E X_n = 0$ and $c X_n \leq 1$ for every $n$; then for every constant $z \neq 0$ it holds that
	\begin{align}
		\Pro\(\sum_{n=1}^N X_n \leq z\sum_{n=1}^N\E (X^2_n \mid \mathcal F_n) + \frac{1}{z}\ln\frac{1}{\delta}\) \geq 1-\delta.
	\end{align}
	This implies
	\begin{align}
		\Pro\(\sum_{n=1}^N X_n \leq  c \times \sqrt{\sum_{n=1}^N\E (X^2_n \mid \mathcal F_n)\ln\frac{1}{\delta}} + \ln\frac{1}{\delta}\) \geq 1-\delta.
	\end{align}
\end{lemma}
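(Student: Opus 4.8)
The plan is to prove both displays by the standard exponential--supermartingale (Chernoff) method, treating the first inequality as the basic fixed-parameter tail bound and deriving the second by optimizing the free parameter $z$. Throughout I read the filtration convention as the one used in the applications (e.g.\ \cref{lem:SumOfBonuses}): $X_n$ is mean-zero \emph{given} $\mathcal F_n$, so that $\E[X_n \mid \mathcal F_n]=0$ while $\E[X_n^2\mid \mathcal F_n]$ is $\mathcal F_n$-measurable. I would restrict attention to $0<z\le c$, which is exactly the range for which $zX_n\le 1$ holds for every realization: from $cX_n\le 1$ we get $X_n\le 1/c$, hence $zX_n\le z/c\le 1$ when $X_n\ge 0$, while $zX_n<0\le 1$ when $X_n<0$. (The lower tail is not controllable here, since $X_n$ is unbounded below.)

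First I would establish the per-step moment bound. Using $e^{u}\le 1+u+u^2$ for $u\le 1$ with $u=zX_n$ and taking conditional expectations,
\begin{align*}
\E\!\left[e^{zX_n}\mid \mathcal F_n\right]
\le 1+z\,\E[X_n\mid\mathcal F_n]+z^2\,\E[X_n^2\mid\mathcal F_n]
= 1+z^2\,\E[X_n^2\mid\mathcal F_n]
\le \exp\!\big(z^2\,\E[X_n^2\mid\mathcal F_n]\big),
\end{align*}
where I used $\E[X_n\mid\mathcal F_n]=0$ and $1+x\le e^x$. Writing $S_n=\sum_{i\le n}X_i$ and $V_n=\sum_{i\le n}\E[X_i^2\mid\mathcal F_i]$, this shows $M_n\defeq \exp(zS_n-z^2V_n)$ is a supermartingale: since $M_{n-1}$ and $\E[X_n^2\mid\mathcal F_n]$ are $\mathcal F_n$-measurable, $\E[M_n\mid\mathcal F_n]=M_{n-1}\exp(-z^2\E[X_n^2\mid\mathcal F_n])\,\E[e^{zX_n}\mid\mathcal F_n]\le M_{n-1}$, whence $\E M_N\le M_0=1$. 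Markov's inequality then gives $\Pro(M_N\ge 1/\delta)\le \delta$, i.e.\ with probability at least $1-\delta$ one has $zS_N-z^2V_N<\ln(1/\delta)$; dividing by $z>0$ and rearranging yields the first display $\sum_{n}X_n\le z\sum_n\E[X_n^2\mid\mathcal F_n]+\tfrac1z\ln\tfrac1\delta$.

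For the second display I would optimize $z$. The right-hand side $zV_N+\tfrac1z\ln\tfrac1\delta$ is minimized at $z^\star=\sqrt{\ln(1/\delta)/V_N}$, giving value $2\sqrt{V_N\ln(1/\delta)}$; when $z^\star>c$ (that is, $V_N$ tiny) one instead takes $z=c$ and bounds the additive term by $\tfrac1c\ln(1/\delta)$. This recovers the claimed form $c'\big(\sqrt{V_N\ln(1/\delta)}+\ln(1/\delta)\big)$ up to the universal constant $c'$ (denoted $c$ in the statement).

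The one genuine obstacle is that the optimizer $z^\star$ depends on the \emph{random} quantity $V_N=\sum_n\E[X_n^2\mid\mathcal F_n]$, so it cannot be substituted into a bound proved for a fixed deterministic $z$. I would resolve this by the usual peeling device: apply the first display simultaneously, via a union bound, to a geometric grid $z_j=c\,2^{-j}$, $j=0,\dots,J$, with $2^{-J}$ small enough to cover the a priori range of $V_N$ (each application at level $\delta/(J{+}1)$); choosing the grid point nearest $z^\star$ loses only constant factors, absorbed into $c'$ together with an extra $O(\log J)$ term inside $\ln(1/\delta)$. Alternatively---and this suffices in every application in the paper, where $V_N$ admits a deterministic upper bound $\bar V$ (e.g.\ $V_N\le A$ in \cref{lem:SumOfBonuses})---I would simply take the deterministic $z=\min\{c,\sqrt{\ln(1/\delta)/\bar V}\}$, avoiding the union bound entirely.
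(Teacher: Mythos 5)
Your proof of the first display is correct and is, in substance, the same argument the paper invokes by citation: the paper disposes of it via Theorem 1 of \citet{beygelzimer2011contextual}, whose proof is exactly your exponential-supermartingale plus Markov computation, so writing it out buys self-containment rather than a new idea (and you correctly tighten the statement's ``for every constant $z \neq 0$'' to the range $0 < z \le c$, outside of which the inequality $e^u \le 1+u+u^2$ is unavailable and the claim can even fail, e.g.\ for negative $z$). Where you genuinely diverge from the paper is the second display. The paper's entire argument there is ``optimize the bound as a function of $z$ depending upon which term in the rhs is larger,'' which silently substitutes the random quantity $z^\star = \sqrt{\ln(1/\delta)/V_N}$ into a bound established only for fixed $z$ --- precisely the obstacle you flag. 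This is not cosmetic: with $V_N$ random, the clean inequality $\sum_n X_n \le c\bigl(\sqrt{V_N\ln(1/\delta)} + \ln(1/\delta)\bigr)$ is false in general. Run a $\pm 1$ random walk and stop it the first time $t$ that $S_t$ exceeds $c\sqrt{t\ln(1/\delta)}+c\ln(1/\delta)$; then $V_N$ equals the (random) stopping time, and by the law of the iterated logarithm the crossing occurs within $N$ steps with probability tending to $1$ as $N\to\infty$, not $\delta$. Your peeling over a geometric grid of $z$'s is the standard and correct repair; it proves the display with $\ln(1/\delta)$ degraded to roughly $\ln\bigl((\log N)/\delta\bigr)$, which is all the paper ever needs, since \cref{lem:SumOfBonuses} and everything downstream only claim $\widetilde O(\cdot)$ bounds. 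One small correction to your fallback option: in \cref{lem:SumOfBonuses} the variance proxy $A$ is itself random (it depends on the realized policies and covariance matrices), so it is not a legitimate deterministic $\bar V$; the deterministic bound available there is $V_N \le N$ (each summand is at most $1$), and taking $z = \min\{c, \sqrt{\ln(1/\delta)/N}\}$ with that bound also suffices to recover the $\widetilde O(\sqrt{\beta N d})$ conclusion. In short, your route is correct, is more careful than the paper's own two-line proof, and in fact patches a real defect in it.
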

\begin{proof}
	The first inequality follows from Theorem 1 in \citep{beygelzimer2011contextual}; the second follows from optimizing the bound as a function of $z$ depending upon which term in the rhs is larger.
\end{proof}

\end{document}